\documentclass[sigconf,nonacm]{acmart}

\usepackage{graphicx}
\usepackage{amsmath}

\usepackage{amssymb}
\usepackage{booktabs}
\usepackage{bm}
\usepackage{float}
\usepackage{svg}
\usepackage{makecell}
\usepackage[ruled,vlined]{algorithm2e}
\usepackage{mathtools}
\usepackage{wrapfig}
\usepackage{stfloats}
\usepackage{dsfont}
\usepackage{subcaption}
\usepackage{caption}%
\usepackage{nicefrac}
\usepackage{mathrsfs}
\usepackage{xcolor}
\usepackage{enumitem}
\usepackage{enumitem}
\usepackage{bbm}
\setlist[itemize]{label=\scalebox{0.8}{$\bullet$}}
\usepackage{colortbl}
\usepackage{multirow}
\usepackage{marvosym}
\usepackage[export]{adjustbox}
\usepackage{pifont}
\usepackage{latexsym}
\usepackage[T1]{fontenc}
\usepackage[utf8]{inputenc}
\usepackage{microtype}
\usepackage{graphicx}
\usepackage{amsbsy}
\theoremstyle{plain}

\newtheorem{proposition}{Proposition}

\usepackage[most]{tcolorbox}

\newtcolorbox{remarkbox}{
  colback=gray!8,
  colframe=gray!45,
  boxrule=0.4pt,
  arc=2mm,
  left=6pt,
  right=6pt,
  top=5pt,
  bottom=5pt
}

\renewcommand\footnotetextcopyrightpermission[1]{}

\begin{document}

\title{Revisiting Over-refusal in Safety-aligned Large Language Models: A Representation and Optimization Perspective}

\author{Chao Chen}
\email{hfut.chenchao@gmail.com}
\affiliation{%
  \institution{Hefei University of Technology}
  \city{Hefei}
  \country{China}
}

\author{Pengyang Shao}
\email{shaopymark@gmail.com}
\affiliation{%
  \institution{National University of Singapore}
  \city{Singapore}
  \country{Singapore}}

\author{Wei Qin}
\email{qinwei.hfut@gmail.com}
\affiliation{%
  \institution{Hefei University of Technology}
  \city{Hefei}
  \country{China}
}

\author{Shijie Hao}
\email{hfut.hsj@gmail.com}
\affiliation{%
  \institution{Hefei University of Technology}
  \city{Hefei}
  \country{China}
}

\author{Fei Shen}
\email{shenfei29@nus.edu.sg}
\affiliation{%
  \institution{National University of Singapore}
  \city{Singapore}
  \country{Singapore}}

\author{Richang Hong}
\email{hongrc.hfut@gmail.com}
\affiliation{%
  \institution{Hefei University of Technology}
  \city{Hefei}
  \country{China}
}

\renewcommand{\shortauthors}{C. Chen et al.}

\begin{abstract}
Over-refusal is a critical side effect of safety alignment in large language models (LLMs), causing harmless but superficially risky queries to be mistakenly rejected. Despite the effectiveness of existing mitigation methods, why safety-aligned LLMs may still reject harmless queries under correct safety supervision remains insufficiently understood. To address this gap, we analyze over-refusal through the lens of representation and optimization. We make two observations: \textbf{1)} the representations of harmful and over-refusal samples are visually distinguishable at certain intermediate layers but increasingly overlap in deeper layers; and \textbf{2)} harmful-sample gradients exhibit high intra-class similarity and concentrate in a low-rank subspace. Moreover, they are more strongly aligned with refusal-target gradients than with correct helpful-target gradients on over-refusal prompts. This imbalanced gradient coupling can bias over-refusal samples toward refusal, while the reduced apparent representational separation in deeper layers may make it more difficult for the model to maintain a clear boundary between harmful and over-refusal samples.
To this end, we propose GOAL
(\textbf{G}radient \textbf{O}rthogonalization with
contr\textbf{A}stive \textbf{L}earning), a safety-alignment framework for mitigating over-refusal. Specifically, GOAL consists of two components: \textbf{1)} to prevent imbalanced gradient coupling from biasing
the optimization of over-refusal samples toward refusal, we orthogonalize over-refusal gradients against the low-rank refusal subspace extracted from harmful gradients; and \textbf{2)} to explicitly
enhance the separation between harmful and over-refusal representations, we introduce a contrastive learning objective. Extensive experiments across multiple models and benchmarks demonstrate that GOAL effectively mitigates over-refusal while preserving safety performance and general capabilities. For reproducibility, we have released the code for GOAL\footnote{\url{https://anonymous.4open.science/r/GOAL-C5B1}}.

\end{abstract}   

\keywords{Large Language Models, Safety Alignment, Over-refusal}

\maketitle

\section{Introduction}
\label{Introduction}
As Large Language Models (LLMs) are increasingly deployed in open-ended, user-facing applications, LLM safety has attracted growing attention~\cite{brown2020language,achiam2023gpt,bai2022training,zhai2026maximizing,shi2026precise,shao2026baldro}. 
As a critical side effect of LLM safety, over-refusal, which refers to the mistaken rejection of harmless but superficially risky queries, has been widely studied~\cite{dabas2025just,lu2026discern}. 
To clarify over-refusal, we define three types of samples~\cite{dabas2025just,lu2026discern,cao2025scans}: harmful samples are genuinely unsafe queries that must be refused; benign samples are clearly harmless queries; and over-refusal samples are harmless but likely to be mistakenly rejected.

Existing efforts to mitigate over-refusal can be divided into two lines~\cite{dabas2025just,lu2026discern,cao2025scans,wang2025surgical,chenprosafeprune}. 
One line aims to reshape the model's refusal boundary during optimization~\cite{dabas2025just,lu2026discern}. For example, {DCR} introduces a contrastive-refinement stage before safety alignment, encouraging a sharper decision boundary between harmful samples and over-refusal samples~\cite{lu2026discern}. 
The other line of work mitigates over-refusal through targeted interventions~\cite{cao2025scans,wang2025surgical}. For example, {SCANS} identifies safety-critical layers and steers their activations with refusal-related directions, suppressing over-refusal while preserving refusals on harmful inputs~\cite{cao2025scans}. 
Despite the effectiveness of existing mitigation methods, why safety-aligned LLMs may still reject harmless queries under correct safety supervision remains insufficiently understood.

To address this gap, we analyze over-refusal through the
lens of representation and optimization. We make two core observations. \textbf{1)} Through layer-wise representation visualization, we observe that harmful and over-refusal samples are distinguishable at certain intermediate layers but show increasing overlap as layer depth increases. For example, as shown in Figure~\ref{fig:repr-vis-pca}, at layer~13, these two types of samples are visually distinguishable in the representation space but increasingly overlap in deeper layers, suggesting a reduction in their apparent representational separation. Such reduced separation may make it more difficult for the model to maintain a clear boundary between harmful and over-refusal samples. \textbf{2)} Through sample-level gradient statistics, as shown in Figure~\ref{fig:gradient-analysis}, we observe that harmful samples exhibit high intra-class gradient similarity during safety alignment, suggesting that their gradients may accumulate coherently and form a low-rank refusal subspace with highly concentrated gradient energy. In contrast, benign and over-refusal samples exhibit low intra-class similarity and more dispersed gradient energy. Moreover, benign sample gradients show low similarity to harmful sample gradients, whereas over-refusal gradients exhibit substantially higher similarity, potentially making the optimization of over-refusal samples more susceptible to the accumulated harmful gradient. Specifically, as shown in Figure~\ref{fig:gradient-analysis}(b), harmful sample gradients are more similar to over-refusal-ref gradients than to over-refusal gradients, where over-refusal-ref denotes the gradients induced by pairing over-refusal prompts with refusal targets. This observation suggests that the accumulated harmful gradient is more strongly coupled with the refusal-target gradient than with the correct helpful-target gradient. Such imbalanced gradient coupling may increase the likelihood that the model selects a refusal response during safety alignment. Together, these observations suggest that imbalanced gradient coupling can bias over-refusal samples toward refusal, while reduced apparent representational separation may make it more difficult for the model to maintain a clear boundary between harmful and over-refusal samples.

\begin{figure*}[t]
  \centering
  \includegraphics[width=0.8\linewidth]{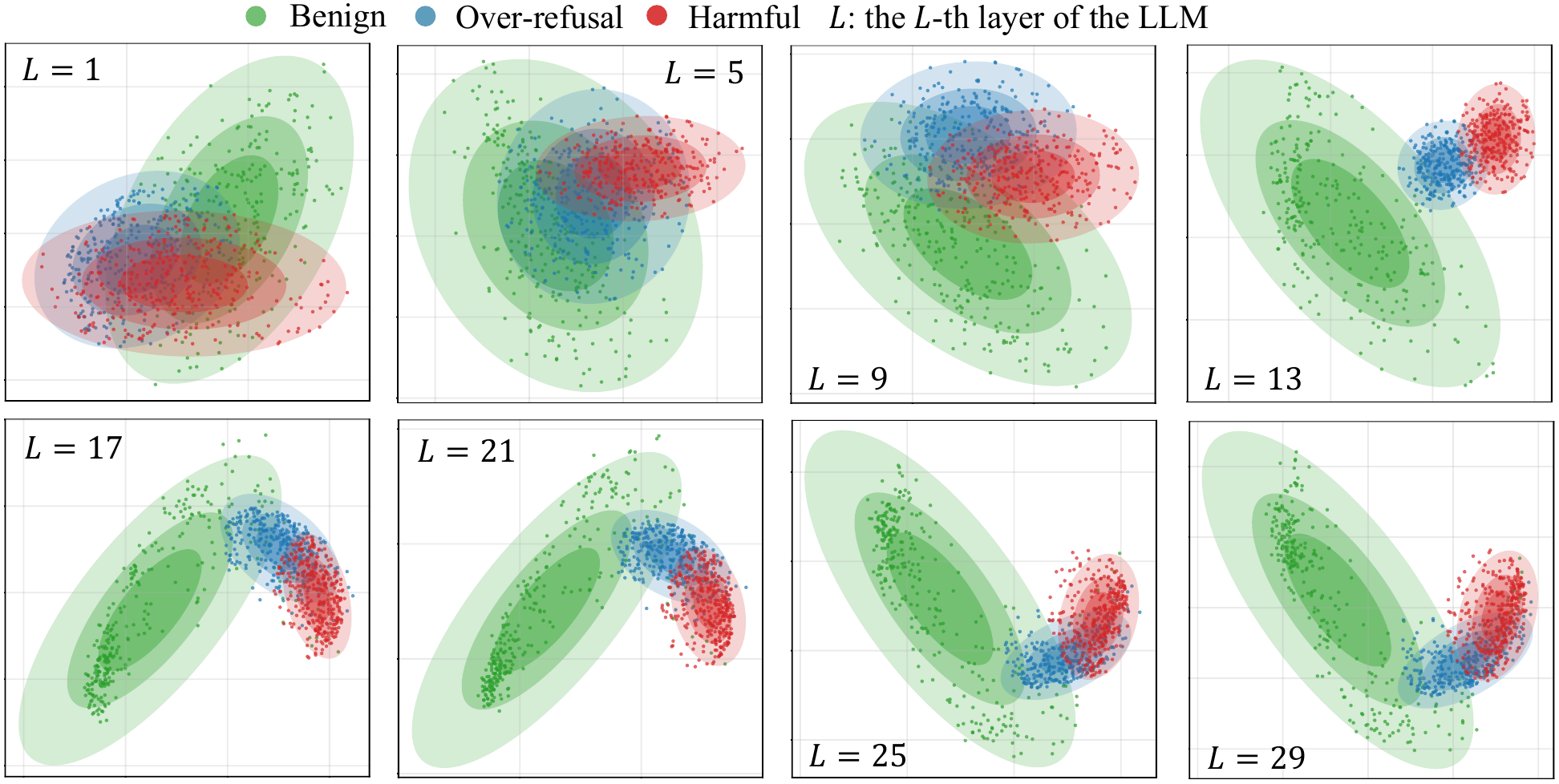}
  \caption{Layer-wise PCA visualization of last-token representations for benign, harmful, and over-refusal samples in Llama-2-7B-Chat. Notably, harmful and over-refusal samples are visually distinguishable
at layer~13 but show increasing overlap in deeper layers. Due to space limitations, representation visualization for an additional model is provided in Appendix~\ref{sec:appendix-repr-vis}.}
  \label{fig:repr-vis-pca}
\end{figure*}

\begin{figure}[t]
    \centering
    \includegraphics[width=0.95\linewidth]{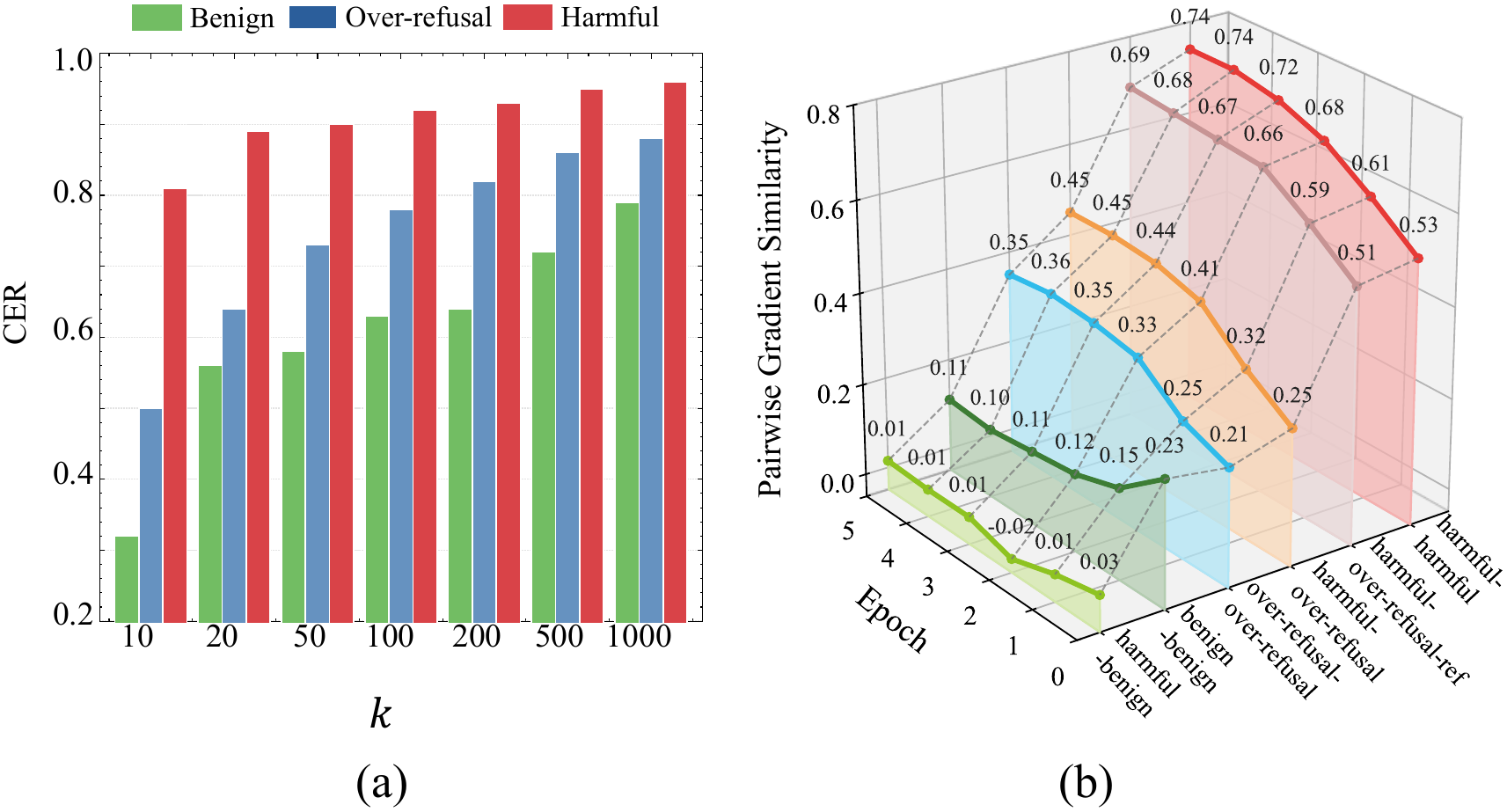}
    \caption{
      Gradient analyses on Llama-2-7B-Chat.
      (a) Cumulative Explained Ratio (CER) of the top-$k$ singular values for gradient subspaces of different sample types.
     (b) Pairwise gradient similarity across different sample types during safety alignment.
    }
    \label{fig:gradient-analysis}
  \end{figure}

To this end, we propose GOAL (\textbf{G}radient \textbf{O}rthogonalization with contr\textbf{A}stive \textbf{L}earning), a safety-alignment framework that utilizes gradient orthogonalization and contrastive learning to mitigate over-refusal. Specifically, GOAL fine-tunes the model on a safety-alignment dataset that includes harmful, benign, and over-refusal samples, while incorporating two complementary components during training. One component aims to mitigate imbalanced gradient coupling that biases the optimization of over-refusal samples toward refusal. Specifically, we orthogonalize over-refusal gradients against a low-rank refusal subspace extracted from harmful gradients. To obtain this subspace, we apply Singular Value Decomposition (SVD) to the per-sample gradients of harmful samples. The other component introduces a contrastive learning objective to
explicitly enhance the separation between harmful and over-refusal representations.
To balance computational efficiency and over-refusal mitigation, both components are applied by fine-tuning only the selected safety-relevant layers. 
Extensive experimental results on multiple benchmarks demonstrate the effectiveness of GOAL in mitigating over-refusal while maintaining safety performance and general capabilities. Our main contributions are summarized as follows:
\begin{itemize}[leftmargin=*]
      \item We investigate why safety-aligned LLMs still reject harmless over-refusal samples despite correct supervision, identifying increasing visual overlap between harmful and over-refusal representations in deeper layers and a low-rank harmful gradient subspace that aligns more strongly with refusal-target than helpful-target gradients on over-refusal prompts.
      \item We propose {GOAL}, a {G}radient {O}rthogonalization with contr{A}stive {L}earning framework for mitigating over-refusal. GOAL first extracts a low-rank gradient-based refusal subspace from harmful samples via SVD and orthogonalizes over-refusal gradients against this subspace. It further introduces a contrastive learning objective to
enhance the separation between harmful and over-refusal representations.
      \item Extensive experiments show that GOAL mitigates over-refusal while maintaining safety performance and general capabilities, achieving relative gains of up to $14.03\%$ in over-refusal mitigation and $1.20\%$ in safety performance over the second-best methods.
\end{itemize}

\section{Related Works}

\subsection{Over-refusal in Safety-aligned LLMs}
As LLMs are increasingly deployed in open-ended and user-facing applications, safety alignment has received widespread attention from the research community~\cite{zheng2024prompt,shi2026precise}. However, while effective in reducing harmful outputs, safety alignment often induces over-refusal, where aligned models reject user requests that appear potentially unsafe but are benign in nature~\citep{rottger2024xstest,cui2024or,huang2024flames}. To better characterize this failure mode, recent studies have introduced dedicated benchmarks that systematically evaluate false refusals across model families and prompt types~\citep{rottger2024xstest,shi2024navigating,wang2024fake,cui2024or}.

\begin{figure*}[htbp]
    \centering
    \includegraphics[width=0.8\textwidth]{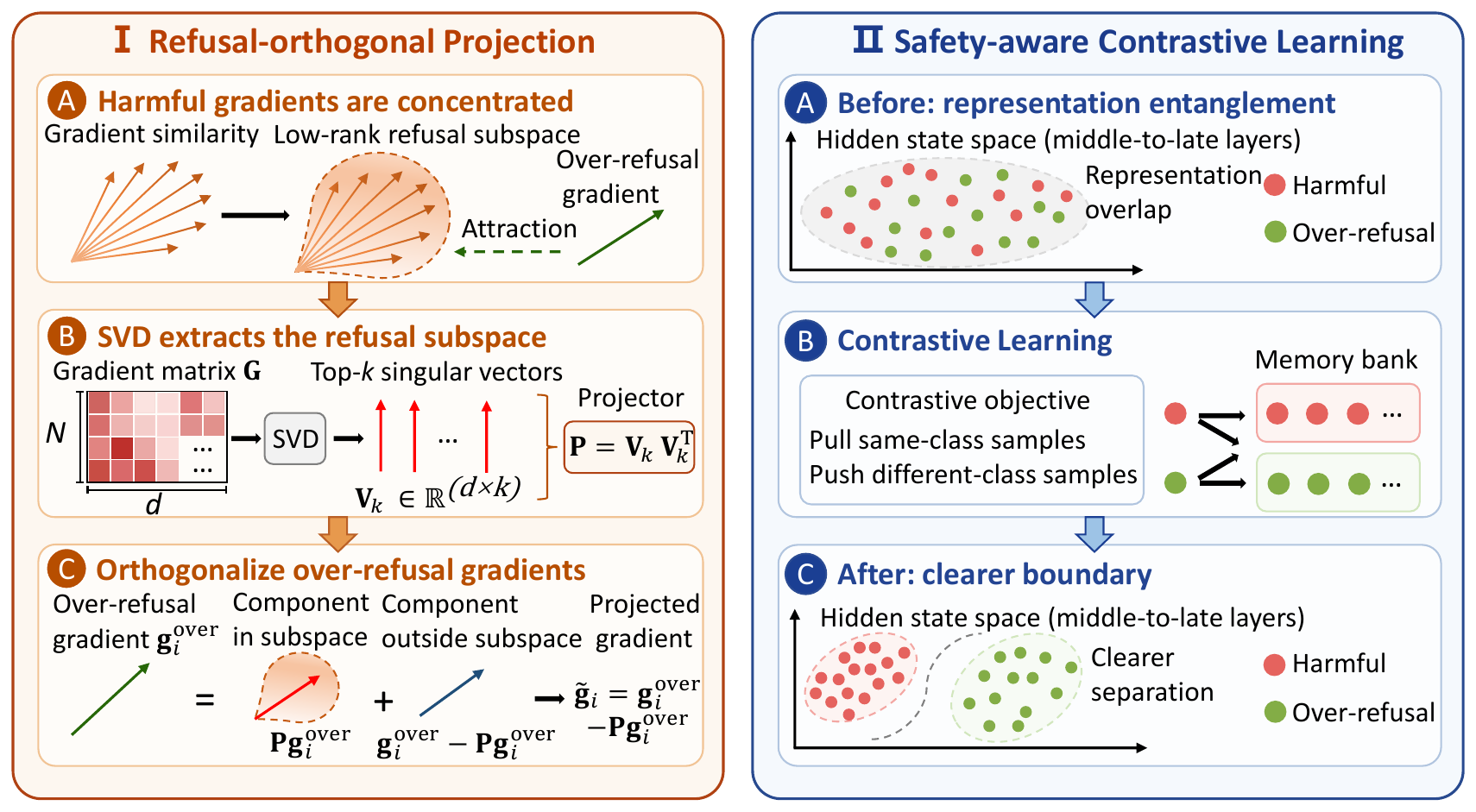}
    \caption{Overview of the proposed GOAL methodology.}
    \label{fig:framework}
\end{figure*}

Existing efforts to mitigate over-refusal generally fall into two lines~\citep{dabas2025just,lu2026discern,chenprosafeprune}. The first line aims to reshape the model's refusal boundary during optimization~\cite{dabas2025just,lu2026discern}. For example, DCR~\citep{lu2026discern} introduces a contrastive refinement stage before safety alignment to encourage a sharper decision boundary. The second line focuses on targeted interventions~\cite{cao2025scans,wang2025surgical}. For example, SCANS~\citep{cao2025scans} steers activations in safety-critical layers using refusal steering vectors. Despite their effectiveness, existing methods provide limited understanding of why safety-aligned LLMs trained with correct safety labels may still reject harmless queries.

\subsection{Decision Boundary Shifts in Optimization}
Decision boundary shift under imbalanced data is an important problem in model optimization~\citep{hasegawa2024exploring,son2025difficulty,yang2026decision}. Related studies mainly examine such decision boundary shifts from three perspectives. The first investigates margin-based optimization, where explicit margin objectives or the implicit bias of gradient-based training can enlarge inter-class separation and favor larger-margin solutions~\citep{schroff2015facenet,sun2020circle,rho2023understanding}. The second focuses on contrastive and metric learning, which reorganize the representation space by pulling positive samples together and pushing negative samples apart, thereby altering the relative geometry of neighboring categories~\citep{khosla2020supervised,chen2020simple,rho2023understanding}. The third analyzes biased or imbalanced training signals, particularly in long-tailed or imbalanced settings, where differences in class frequency, classifier weights, instance difficulty, or the strength of optimization signals can systematically shift the learned boundary toward particular classes or regions of the representation space~\citep{hasegawa2024exploring,son2025difficulty,yang2026decision}. However, the decision boundary perspective has rarely been used to analyze over-refusal, even though over-refusal naturally matches a boundary shift pattern where harmless but safety-sensitive samples are moved toward the refusal side.

\section{Methodology}
In this section, we propose GOAL (\textbf{G}radient \textbf{O}rthogonalization with contr\textbf{A}stive \textbf{L}earning), as illustrated in Figure~\ref{fig:framework}. Specifically, GOAL consists of two complementary components: refusal-orthogonal projection in Section~\ref{orthogonal} and safety-aware contrastive learning in Section~\ref{Contrastive}. 
Finally, Section~\ref{sec:analysis} discusses GOAL.

\subsection{Refusal-orthogonal Projection}
\label{orthogonal}

To prevent imbalanced gradient coupling from attracting over-refusal samples toward the refusal side, we propose refusal-orthogonal projection, which orthogonalizes over-refusal gradients against the refusal subspace. Specifically, we first identify a low-rank refusal subspace from harmful sample gradients. We then explain why over-refusal samples are attracted to this subspace through imbalanced gradient coupling. Based on this analysis, we perform refusal-orthogonal projection to remove the refusal-subspace components from over-refusal gradients, thereby decoupling their optimization from harmful gradients.

\subsubsection{Identifying the Low-Rank Refusal Subspace}

Let $\theta$ denote the trainable parameters, and let
$\mathcal{D}_{\mathrm{harmful}}=\{(x_i,y_i)\}_{i=1}^{N}$ denote a set of
harmful samples used for refusal subspace extraction, where $x_i$ and $y_i$ denote the prompt and the target response, respectively. 
A typical supervised fine-tuning (SFT) loss for safety alignment can be formulated as:
\begin{small}
\begin{equation}
\mathcal{L}_{\mathrm{SFT}}(\theta)
=
-\frac{1}{N}
\sum_{i=1}^{N}
\log p_{\theta}(y_i \mid x_i).
\end{equation}
\end{small}

However, the SFT loss itself cannot prevent  concentrated harmful gradients from biasing the safety-alignment boundary toward refusal. To identify this gradient-based refusal subspace, we compute the per-sample SFT gradient with respect to the model parameters $\theta$. Specifically, given the per-sample SFT loss $\mathcal{L}_{\mathrm{SFT},i}^{\mathrm{harmful}}(\theta, x_i, y_i) = -\log p_\theta(y_i \mid x_i)$, the corresponding gradient is defined as:
\begin{equation}
\mathbf{g}^{\mathrm{harmful}}_i = \nabla_{\theta} \mathcal{L}_{\mathrm{SFT}, i}^{\mathrm{harmful}}(\theta, x_i, y_i) \in \mathbb{R}^{d},
\end{equation}
where $d$ is the gradient dimension. We then stack the gradients into a matrix
\begin{equation}
\mathbf{G}
=
[\mathbf{g}^{\mathrm{harmful}}_1,\dots,\mathbf{g}^{\mathrm{harmful}}_N]^{\top}
\in \mathbb{R}^{N\times d}.
\end{equation}

To extract the dominant refusal-gradient directions, we apply Singular
Value Decomposition (SVD) to $\mathbf{G}$:
$
\mathbf{G}
=
\mathbf{U}\boldsymbol{\Sigma}\mathbf{V}^{\top},
$
where $\boldsymbol{\Sigma}=\mathrm{diag}(\sigma_1,\ldots,\sigma_r)$ contains
the singular values in descending order, $r=\mathrm{rank}(\mathbf{G})$ is the
number of non-zero singular values, and $\mathbf{U}\in \mathbb{R}^{N\times r}$ and $\mathbf{V} \in \mathbb{R}^{d\times r}$ denote
the left and right singular vectors, respectively. The columns of
$\mathbf{V}$ form an orthonormal basis for the row space of $\mathbf{G}$.

From the singular-value spectrum in $\boldsymbol{\Sigma}$, we observe that
the refusal-gradient energy is concentrated in the top few directions,
indicating a clear low-rank structure. To quantify this concentration, we compute the Cumulative
Explained Ratio (CER) of the top-$k$ singular values:
\begin{small}
\begin{equation}
\mathrm{CER}(k)
=
\frac{\sum_{j=1}^{k}\sigma_j^2}
{\sum_{j=1}^{r}\sigma_j^2}.
\end{equation}
\end{small}
We choose $k$ such that the top-$k$ directions capture the dominant refusal-gradient
energy, and retain the corresponding top-$k$ right singular vectors:
$
\mathbf{V}_k
=
\mathbf{V}[:, :k]
\in \mathbb{R}^{d\times k}.
$
We use $\mathbf{V}_k$ as the orthonormal basis of the gradient-based refusal
subspace, and define the corresponding
orthogonal projection matrix as
$
\mathbf{P}
=
\mathbf{V}_k\mathbf{V}_k^{\top}.
$
This subspace captures the dominant refusal directions while keeping the
subsequent projection efficient.

\subsubsection{Why Over-refusal Samples Are Attracted to the Refusal Subspace}

As shown in Figure~\ref{fig:gradient-analysis}, harmful sample
gradients exhibit high intra-class similarity, suggesting that they can accumulate coherently along a few dominant directions and form a low-rank refusal subspace. For an over-refusal sample, the accumulated harmful gradient is more strongly coupled with the gradient induced by the refusal-target than with that induced by the correct helpful- target. Such imbalanced gradient coupling can increase the relative preference for the refusal response during safety alignment, potentially biasing the over-refusal sample toward the refusal side. We formalize this effect as follows.


\begin{proposition}[Refusal subspace attraction]
\label{thm:refusal_subspace_attraction}
Let $\bar{\mathbf{g}}_t$ denote the aggregated harmful sample gradient
at step $t$. For the $i$-th over-refusal sample, let
$\mathcal{L}^{\mathrm{over}}_{\mathrm{SFT}, i}$ and $\mathcal{L}'^{\mathrm{over}}_{\mathrm{SFT}, i}$
denote its losses defined with respect to the correct helpful response
$y_i$ and a refusal response $y_i'$,
respectively, and let
$
\mathbf{g}^{\mathrm{over}}_{i,t}
=
\nabla_{\theta}
\mathcal{L}^{\mathrm{over}}_{\mathrm{SFT}, i}
(\theta_t,x_i,y_i)
$
and
$
\mathbf{g}'^{\mathrm{over}}_{i,t}
=
\nabla_{\theta}
\mathcal{L}'^{\mathrm{over}}_{\mathrm{SFT}, i}
(\theta_t,x_i,$
$y_i')
$
be their gradients at step $t$. Define the refusal preference margin as
\begin{equation}
m_i(\theta)
=
\log p_{\theta}
(y_i'\mid x_i)
-
\log p_{\theta}
(y_i\mid x_i).
\end{equation}
Define
$
A_{i,t}
=
\left\langle
\mathbf{P}
\left(
\mathbf{g}'^{\mathrm{over}}_{i,t}
-
\mathbf{g}^{\mathrm{over}}_{i,t}
\right),
\mathbf{P}\bar{\mathbf{g}}_t
\right\rangle
$
as the refusal subspace contribution to the change in the refusal
preference margin, and
$
R_{i,t}
=
\left\langle
(\mathbf{I}-\mathbf{P})
\left(
\mathbf{g}'^{\mathrm{over}}_{i,t}
-
\mathbf{g}^{\mathrm{over}}_{i,t}
\right),
(\mathbf{I}-\mathbf{P})\bar{\mathbf{g}}_t
\right\rangle
$
as the contribution of the residual components outside the refusal
subspace. If
$
A_{i,t}>|R_{i,t}|,
$
then, for a sufficiently small learning rate $\eta$, the
harmful gradient update increases the refusal preference margin:
{\small
\begin{equation}
\begin{aligned}
&m_i(\theta_t-\eta\bar{\mathbf{g}}_t)
-
m_i(\theta_t)
 =
\eta
\left(
A_{i,t}+R_{i,t}
\right)
+
O(\eta^2)
>0.
\end{aligned}
\end{equation}
}
\end{proposition}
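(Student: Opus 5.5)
The plan is a first-order Taylor expansion of $m_i$ along the harmful update direction, followed by an orthogonal decomposition of the resulting inner product via $\mathbf{P}$. I assume $\log p_\theta(y_i\mid x_i)$ and $\log p_\theta(y_i'\mid x_i)$ are twice continuously differentiable in a neighborhood of $\theta_t$. This holds for standard transformer parameterizations with smooth activations; otherwise a locally Lipschitz gradient suffices.

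First, compute the gradient of the margin. Since $\mathcal{L}'^{\mathrm{over}}_{\mathrm{SFT},i}=-\log p_\theta(y_i'\mid x_i)$ and $\mathcal{L}^{\mathrm{over}}_{\mathrm{SFT},i}=-\log p_\theta(y_i\mid x_i)$, we have $\nabla_\theta m_i(\theta_t) = -\mathbf{g}'^{\mathrm{over}}_{i,t}+\mathbf{g}^{\mathrm{over}}_{i,t}$. Taylor's theorem then gives
\begin{equation*}
m_i(\theta_t-\eta\bar{\mathbf{g}}_t)-m_i(\theta_t) = -\eta\left\langle \nabla_\theta m_i(\theta_t),\bar{\mathbf{g}}_t\right\rangle + O(\eta^2) = \eta\left\langle \mathbf{g}'^{\mathrm{over}}_{i,t}-\mathbf{g}^{\mathrm{over}}_{i,t},\bar{\mathbf{g}}_t\right\rangle + O(\eta^2).
\end{equation*}
The $O(\eta^2)$ term is bounded by $\tfrac{1}{2}\eta^2 L\|\bar{\mathbf{g}}_t\|^2$, where $L$ bounds the Hessian norm of $m_i$ on the segment.

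Second, split the inner product. Because $\mathbf{V}_k$ has orthonormal columns, $\mathbf{P}=\mathbf{V}_k\mathbf{V}_k^\top$ is symmetric and idempotent, and so is $\mathbf{I}-\mathbf{P}$. Moreover $\mathbf{P}(\mathbf{I}-\mathbf{P})=0$. Writing $\mathbf{d}=\mathbf{g}'^{\mathrm{over}}_{i,t}-\mathbf{g}^{\mathrm{over}}_{i,t}$ and inserting $\mathbf{I}=\mathbf{P}+(\mathbf{I}-\mathbf{P})$ on both sides, the cross terms vanish:
\begin{equation*}
\langle \mathbf{d},\bar{\mathbf{g}}_t\rangle = \langle \mathbf{P}\mathbf{d},\mathbf{P}\bar{\mathbf{g}}_t\rangle + \langle (\mathbf{I}-\mathbf{P})\mathbf{d},(\mathbf{I}-\mathbf{P})\bar{\mathbf{g}}_t\rangle = A_{i,t}+R_{i,t}.
\end{equation*}
This yields the stated identity.

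Third, establish strict positivity. The hypothesis $A_{i,t}>|R_{i,t}|$ gives $c:=A_{i,t}+R_{i,t}\ge A_{i,t}-|R_{i,t}|>0$. Hence $\eta c - \tfrac12 L\eta^2\|\bar{\mathbf{g}}_t\|^2>0$ whenever $0<\eta<2c/(L\|\bar{\mathbf{g}}_t\|^2)$. If $\bar{\mathbf{g}}_t=0$, then $A_{i,t}=0$, which contradicts the hypothesis, so this case cannot occur.

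The only delicate step is making ``sufficiently small'' rigorous, that is, controlling the remainder uniformly. I would handle it with the mean-value form of Taylor's theorem on the compact segment $[\theta_t-\eta_0\bar{\mathbf{g}}_t,\theta_t]$, which gives an explicit threshold on $\eta$. The rest is linear algebra.
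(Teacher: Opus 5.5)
Your proof is correct and follows the paper's argument step for step. You compute $\nabla_\theta m_i(\theta_t)=\mathbf{g}^{\mathrm{over}}_{i,t}-\mathbf{g}'^{\mathrm{over}}_{i,t}$, Taylor-expand along $-\eta\bar{\mathbf{g}}_t$, split the inner product using the symmetric idempotent $\mathbf{P}$, and conclude from $A_{i,t}+R_{i,t}\ge A_{i,t}-|R_{i,t}|>0$; your explicit remainder bound, the threshold $\eta<2c/(L\|\bar{\mathbf{g}}_t\|^2)$, and the remark that the hypothesis forces $\bar{\mathbf{g}}_t\neq 0$ just make the paper's ``sufficiently small $\eta$'' quantitative.
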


\begin{remarkbox}
\noindent\textbf{Remark.}
Proposition~\ref{thm:refusal_subspace_attraction} formalizes the
refusal subspace attraction effect in terms of the refusal preference
margin. Specifically, $A_{i,t}>0$ means that, inside the refusal
subspace, the aggregated harmful gradient is more strongly coupled with
the refusal-target gradient than with the correct helpful-target
gradient. Since $\mathbf{P}$ is extracted from the low-rank subspace
of harmful sample gradients, the aggregated harmful gradient
$\bar{\mathbf{g}}_t$ is mainly contained in the refusal subspace and
thus has only a small component outside it. Consequently, the residual
term $R_{i,t}$ is relatively small, making
$A_{i,t}>|R_{i,t}|$ likely to hold. The harmful gradient update
therefore increases the refusal preference margin of the over-refusal
sample, progressively pulling it toward refusal behavior. The proof is
provided in Appendix~\ref{sec:appendix_refusal_attraction}.
\end{remarkbox}

\subsubsection{Refusal-orthogonal Projection for Decoupled Optimization} 
Based on the above analysis, there are two natural alternative strategies for
mitigating the refusal-subspace attraction effect. One strategy is to downweight harmful gradients, thereby reducing their contribution to the refusal-preference margin of over-refusal samples. However, harmful gradients provide necessary safety supervision, and directly weakening them may compromise the model's ability to refuse genuinely harmful requests (Appendix~\ref{app:harmful_downweighting}).
Another strategy is to explicitly reduce the coupling between harmful gradients and the counterfactual refusal-target gradients of over-refusal samples. However, during safety alignment, over-refusal samples are paired with their correct helpful responses, and such counterfactual refusal-target gradients are not part of the standard training objective. We therefore adopt a more practical alternative: while fully preserving harmful gradients, we orthogonalize only the available helpful-target gradients of over-refusal samples against the refusal subspace, thereby preventing over-refusal optimization from evolving toward dominant harmful-refusal directions and being attracted to the refusal side. Moreover, our projected update yields a smaller first-order change in the refusal-preference margin than its unprojected counterpart (Appendix~\ref{app:margin_optimizer}).

We optimize GOAL on
$\mathcal{D}_{\mathrm{train}}
=\mathcal{D}_{\mathrm{harmful}}\cup
\mathcal{D}_{\mathrm{over}}\cup
\mathcal{D}_{\mathrm{benign}}$.
Each training example is represented as a pair $(x_i,y_i)$, where $x_i$ is the prompt and $y_i$ is the target response.
At each step, we compute per-sample SFT gradients over a sampled minibatch $\mathcal{B}=\{(x_i,y_i)\}_{i=1}^{B}$:
\begin{equation}
\mathbf{g}_{i}^c = \nabla_{\theta} \mathcal{L}_{\mathrm{SFT}, i}^c(\theta, x_i, y_i), \quad c \in \{\text{harmful}, \text{over-refusal}, \text{benign}\}.
\end{equation}
For over-refusal samples, we orthogonalize their gradients against the
refusal subspace, while leaving harmful and benign gradients unchanged:
\begin{equation}
\tilde{\mathbf{g}}_i =
\begin{cases}
\mathbf{g}_{i}^c - \mathbf{P}\mathbf{g}_{i}^c,
& c=\text{over-refusal},\\[3pt]
\mathbf{g}_{i}^c,
& c \in \{\text{harmful}, \text{benign}\}.
\end{cases}
\label{eq:one_sided_projection}
\end{equation}
The projected supervised gradients are then aggregated as
\begin{equation}
\mathbf{g}_{\mathrm{SFT}}^{\mathrm{total}}
=
\frac{1}{B}\sum_{i=1}^{B}\tilde{\mathbf{g}}_i.
\end{equation}

We formalize that the decoupled gradient can still effectively reduce the over-refusal loss as follows.
\begin{proposition}[Decoupling by refusal-orthogonal projection]
\label{thm:orthogonal_decoupling}
For the $i$-th
over-refusal sample, let $\mathcal{L}^{\mathrm{over}}_{\mathrm{SFT},i}$ denote
its training loss, and let
$
\mathbf{g}^{\mathrm{over}}_{i,t}
=
\nabla_{\theta}\mathcal{L}^{\mathrm{over}}_{\mathrm{SFT},i}(\theta_t, x_i, y_i)
$
be its gradient at step $t$. Let
$
\tilde{\mathbf{g}}^{\mathrm{over}}_{i,t}
=
(\mathbf{I}-\mathbf{P})\mathbf{g}^{\mathrm{over}}_{i,t}
$
be the refusal-orthogonal gradient used by GOAL.
Then the projected over-refusal gradient satisfies
$
\left\langle
\tilde{\mathbf{g}}^{\mathrm{over}}_{i,t},
\mathbf{P}\bar{\mathbf{g}}_t
\right\rangle
=0,
$
and its remaining overlap with the harmful gradient is given by
$
\left\langle
\tilde{\mathbf{g}}^{\mathrm{over}}_{i,t},
\bar{\mathbf{g}}_t
\right\rangle
=
\left\langle
(\mathbf{I}-\mathbf{P})\mathbf{g}^{\mathrm{over}}_{i,t},
(\mathbf{I}-\mathbf{P})\bar{\mathbf{g}}_t
\right\rangle.
$
Moreover, the projected
gradient remains a first-order descent direction for the over-refusal
training loss:
\begin{small}
\begin{equation}
\mathcal{L}^{\mathrm{over}}_{\mathrm{SFT}, i}
(\theta_t-\eta\tilde{\mathbf{g}}^{\mathrm{over}}_{i,t}, x_i,y_i)
-
\mathcal{L}^{\mathrm{over}}_{\mathrm{SFT}, i}(\theta_t, x_i,y_i)
=
-\eta
\left\|
(\mathbf{I}-\mathbf{P})
\mathbf{g}^{\mathrm{over}}_{i,t}
\right\|_2^2
+
O(\eta^2).
\end{equation}
\end{small}
Therefore, if
$
\left\|
(\mathbf{I}-\mathbf{P})
\mathbf{g}^{\mathrm{over}}_{i,t}
\right\|_2>0,
$
then the projected update decreases the over-refusal loss to first order,
while eliminating the refusal subspace component from the
over-refusal update.
\end{proposition}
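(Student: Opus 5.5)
The plan uses only the fact that $\mathbf{P}=\mathbf{V}_k\mathbf{V}_k^{\top}$ is an orthogonal projector, together with a first-order Taylor expansion. Since $\mathbf{V}_k$ has orthonormal columns, $\mathbf{V}_k^{\top}\mathbf{V}_k=\mathbf{I}_k$. Hence $\mathbf{P}^2=\mathbf{P}$ and $\mathbf{P}^{\top}=\mathbf{P}$, and likewise $\mathbf{I}-\mathbf{P}$ is symmetric and idempotent with $\mathbf{P}(\mathbf{I}-\mathbf{P})=\mathbf{0}$. This is the only structural fact needed for the two inner-product identities.

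\textbf{First identity.} I would compute
\[
\langle \tilde{\mathbf{g}}^{\mathrm{over}}_{i,t},\mathbf{P}\bar{\mathbf{g}}_t\rangle=(\mathbf{g}^{\mathrm{over}}_{i,t})^{\top}(\mathbf{I}-\mathbf{P})^{\top}\mathbf{P}\,\bar{\mathbf{g}}_t=(\mathbf{g}^{\mathrm{over}}_{i,t})^{\top}(\mathbf{P}-\mathbf{P}^2)\bar{\mathbf{g}}_t=0 .
\]

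\textbf{Second identity.} I would split $\bar{\mathbf{g}}_t=\mathbf{P}\bar{\mathbf{g}}_t+(\mathbf{I}-\mathbf{P})\bar{\mathbf{g}}_t$. The first piece contributes zero by the identity just proved. The remaining term is $\langle(\mathbf{I}-\mathbf{P})\mathbf{g}^{\mathrm{over}}_{i,t},(\mathbf{I}-\mathbf{P})\bar{\mathbf{g}}_t\rangle$, which is already in the stated form; equivalently one can write it as $\langle \mathbf{g},(\mathbf{I}-\mathbf{P})^{\top}(\mathbf{I}-\mathbf{P})\bar{\mathbf{g}}\rangle$ via idempotence.

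\textbf{Descent statement.} I would assume that $\mathcal{L}^{\mathrm{over}}_{\mathrm{SFT},i}(\cdot,x_i,y_i)$ is twice continuously differentiable near $\theta_t$. This is the standing smoothness assumption implicit in the $O(\eta^2)$ notation, and it also underlies Proposition~\ref{thm:refusal_subspace_attraction}. Taylor's theorem along the direction $-\tilde{\mathbf{g}}^{\mathrm{over}}_{i,t}$ then gives
\[
\mathcal{L}(\theta_t-\eta\tilde{\mathbf{g}})-\mathcal{L}(\theta_t)=-\eta\langle\mathbf{g}^{\mathrm{over}}_{i,t},(\mathbf{I}-\mathbf{P})\mathbf{g}^{\mathrm{over}}_{i,t}\rangle+O(\eta^2).
\]
The remainder is bounded by $\tfrac{\eta^2}{2}\sup\|\nabla^2\mathcal{L}\|\,\|\tilde{\mathbf{g}}\|^2$ over a neighborhood. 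Using symmetry and idempotence of $\mathbf{I}-\mathbf{P}$, the inner product equals $\|(\mathbf{I}-\mathbf{P})\mathbf{g}^{\mathrm{over}}_{i,t}\|_2^2$. If this norm is positive, the linear term is strictly negative. Choosing $\eta$ small enough that the quadratic remainder is dominated, for example $\eta<\|(\mathbf{I}-\mathbf{P})\mathbf{g}\|^2/(C\|\tilde{\mathbf{g}}\|^2)$ with $C$ the local Hessian bound, then yields a strict decrease. The claim that the update eliminates the refusal-subspace component is simply $\mathbf{P}\tilde{\mathbf{g}}=\mathbf{0}$.

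\textbf{Main obstacle.} None of these steps is hard. The only delicate point is stating the regularity assumption and the meaning of ``sufficiently small $\eta$'' precisely, so that the $O(\eta^2)$ term is uniformly controlled. I would handle this with a local Hessian bound on a ball around $\theta_t$, as described above.
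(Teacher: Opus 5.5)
Your proposal is correct and follows the paper's proof essentially step for step: the identity $(\mathbf{I}-\mathbf{P})^{\top}\mathbf{P}=\mathbf{P}-\mathbf{P}^2=\mathbf{0}$ gives the orthogonality claim, the split $\bar{\mathbf{g}}_t=\mathbf{P}\bar{\mathbf{g}}_t+(\mathbf{I}-\mathbf{P})\bar{\mathbf{g}}_t$ gives the residual-overlap formula, and a first-order Taylor expansion combined with symmetry and idempotence of $\mathbf{I}-\mathbf{P}$ gives the descent claim. Your explicit $C^2$ assumption and local Hessian bound make precise the ``sufficiently small $\eta$'' step, which the paper leaves implicit.
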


\begin{remarkbox}
\noindent\textbf{Remark.}
Proposition~\ref{thm:orthogonal_decoupling} explains why the proposed
refusal-orthogonal projection is reasonable. As shown in Figure~\ref{fig:gradient-analysis}, the over-refusal gradient is not entirely contained in the refusal subspace, implying that $
\left\|
(\mathbf{I}-\mathbf{P})
\mathbf{g}^{\mathrm{over}}_{i,t}
\right\|_2>0
$.
Therefore, the remaining non-refusal component provides a first-order
descent signal for the over-refusal training loss. Thus, GOAL decouples over-refusal optimization from the dominant refusal subspace without discarding useful task-learning signals. The proof is provided in
Appendix~\ref{sec:appendix_refusal_orthogonal}.
\end{remarkbox}


\subsection{Safety-aware Contrastive Learning}
\label{Contrastive}
Refusal-orthogonal projection decouples the optimization of over-refusal samples from the refusal subspace, preventing their updates from following dominant harmful-refusal directions. However, controlling the update direction alone does not explicitly specify how harmful and over-refusal samples should be organized in the hidden-state space. To enhance the separability of harmful and over-refusal representations in middle-to-late layers, we introduce a safety-aware contrastive objective.

Specifically, we separate the hidden representations of harmful and over-refusal samples with a Circle-style margin contrastive loss~\citep{sun2020circle}. Since the final hidden state directly governs token generation, we impose this contrastive constraint at the final layer, thereby obviating the need to enforce contrastive targets across all middle-to-late layers. For each harmful or over-refusal sample in the minibatch $\mathcal{B}$, let $\mathbf{h}_i$ denote its last-token hidden state at the final transformer layer. To construct stable and balanced contrastive pairs, we maintain a class-balanced representation memory bank $\mathcal{M}$. For a given sample, its positive set $\mathcal{P}_i$ and negative set $\mathcal{N}_i$ are constructed from $\mathcal{M}$ using representations with the same or different labels, respectively. The similarity between $\mathbf{h}_i$ and a memory-bank representation $\mathbf{z}_j$ is measured by
$
s_{ij}=\cos(\mathbf{h}_i,\mathbf{z}_j).
$
To make the contrastive objective focus more on less-optimized similarity pairs, we introduce adaptive pair weights following Circle Loss~\citep{sun2020circle}. Given a relaxation margin $m$ and a scale factor $\gamma$, the corresponding optimization targets and weighting offsets are configured as
$
\Delta_p=1-m,
$
$
\Delta_n=m,
$
$
O_p=1+m,
$
$
O_n=-m.
$
Accordingly, the adaptive weights for positive and negative pairs are defined as
$
\alpha_p(i,j)=[O_p-s_{ij}]_+,
$
$
\alpha_n(i,j)=[s_{ij}-O_n]_+,
$
where $[\cdot]_+=\max(\cdot,0)$. Finally, the contrastive loss over all $N_{\mathrm{cont}}$ harmful and over-refusal samples in $\mathcal{B}$ is formulated as
\begin{small}
\begin{equation}
\begin{aligned}
\mathcal{L}_{\mathrm{cont}} =
\frac{1}{N_{\mathrm{cont}}}
\sum_{i=1}^{N_{\mathrm{cont}}}
\log \Bigg[ 1
&+
\bigg(\sum_{j\in\mathcal{N}_i}
\exp\!\left(\gamma \alpha_n(i,j)(s_{ij}-\Delta_n)\right)\bigg) \\
&\quad \cdot
\bigg(\sum_{j\in\mathcal{P}_i}
\exp\!\left(-\gamma \alpha_p(i,j)(s_{ij}-\Delta_p)\right)\bigg)
\Bigg].
\label{eq:circle_cont}
\end{aligned}
\end{equation}
\end{small}

Finally, GOAL combines the projected supervised gradient with a safety-aware contrastive learning gradient to form the overall update direction:
\begin{equation}
\mathbf{g}_{\text{GOAL}} = \mathbf{g}_{\text{SFT}}^{\mathrm{total}} + \lambda\, \nabla_{\theta} \mathcal{L}_{\text{cont}},
\label{eq:goal_gradient}
\end{equation}
where $\lambda$ is a balancing hyperparameter.

\subsection{Model Discussion}
\label{sec:analysis}
We further discuss {GOAL}. Specifically, we first explain why gradient orthogonalization is necessary beyond contrastive learning, then discuss which layers should be fine-tuned to mitigate over-refusal while improving computational efficiency, and finally analyze the time complexity of {GOAL}.

\subsubsection{Why contrastive learning alone is insufficient}
Contrastive learning encourages separation between harmful and
over-refusal samples in the hidden-state space, but it does not directly control the SFT gradients that drive parameter updates. Since refusal responses are often highly repetitive, harmful samples can still induce concentrated harmful gradients and bias optimization toward refusal. Therefore, using contrastive learning alone would need to counteract this gradient-level bias only indirectly, which may require a large loss weight
and risk harming utility or safety. In contrast, refusal-orthogonal projection directly removes refusal subspace components from over-refusal updates, while contrastive learning further improves representation-level separation. Thus, the two components address complementary failure modes: optimization-level coupling and reduced representational separation.

\begin{table*}[t]
    \centering
    \caption{Performance comparison of {GOAL} and baselines across over-refusal, safety, and general-capability benchmarks. Benchmark abbreviations: XST (XSTest), ORH (OR-Bench-Hard), PHT (PHTest), OKT (OKTest), CoCo (CoCoNot), MMLU, OBQA (OpenBookQA), PIQA, and ARC-C (ARC-Challenge). The best results within each model block are shown in \textbf{bold},
    and the second-best results are \underline{underlined}.}
    \label{tab:main-results}
    \resizebox{0.77\linewidth}{!}{%
    \begin{tabular}{l ccccc cc cccc}
    \toprule[1.5pt]
    \multirow{2}{*}{\textbf{Method}}
      & \multicolumn{5}{c}{\textbf{Over-refusal C.R. $\uparrow$}}
      & \multicolumn{2}{c}{\textbf{Safety S.S. $\uparrow$}}
      & \multicolumn{4}{c}{\textbf{General Capability Acc $\uparrow$}} \\
    \cmidrule(lr){2-6} \cmidrule(lr){7-8} \cmidrule(lr){9-12}
      & \textbf{XST} & \textbf{ORH} & \textbf{PHT} & \textbf{OKT} & \textbf{CoCo}
      & \textbf{JBB} & \textbf{STL}
      & \textbf{MMLU} & \textbf{OBQA} & \textbf{PIQA} & \textbf{ARC-C} \\
    \midrule

    \multicolumn{12}{c}{\textbf{Gemma-7B-it}} \\
    \midrule
    \textbf{Base}
    &0.6977 &0.6225 &0.8596 &0.6400 &0.7652
    &0.90 &0.9748
    &\textbf{0.5063} &\textbf{0.368} &0.7731 &\textbf{0.4778} \\

    \textbf{SFT}
    &\underline{0.8089} &0.5808 &0.8528 &0.7467 &0.8100
    &\underline{0.96} &\underline{0.9767}
    &0.5024 &0.366 &0.7693 &0.4650 \\

    \textbf{SCANS}
    &0.5600 &0.6998 &0.8197 &0.5633 &0.8391
    &0.89 &0.9612
    &0.5056 &0.364 &0.7737 &\underline{0.4761} \\

    \textbf{Surgical}
    &0.7511 &0.6745 &0.8899 &0.7800 &0.7795
    &0.87 &0.9399
    &0.5044 &0.362 &0.7644 &0.4710 \\

    \textbf{DCR}
    &0.7200 &0.6296 &0.8699 &\underline{0.8033} &\underline{0.8496}
    &0.89 &0.9535
    &0.4997 &{0.348} &\underline{0.7748} &0.4582 \\

    \textbf{ACTOR}
    &0.7867 &\textbf{0.7066} &\underline{0.8996} &0.7800 &0.8206
    &0.88 &0.9593
    &0.5003 &0.358 &0.7704 &0.4718 \\

    \rowcolor{gray!10}\textbf{GOAL}
    &\textbf{0.9200} &\underline{0.7006} &\textbf{0.9094} &\textbf{0.8733} &\textbf{0.8813}
    &\textbf{0.97} &\textbf{0.9884}
    &\underline{0.5059} &\textbf{0.368} &\textbf{0.7753} &0.4727 \\

    \midrule
    \multicolumn{12}{c}{\textbf{Llama-2-7B-Chat}} \\
    \midrule

    \textbf{Base}
    &0.8000 &0.2648 &0.5502 &0.7600 &0.7203
    &\textbf{0.96} &\textbf{0.9981}
    &0.4681 &\underline{0.320} &0.7628 &\textbf{0.4206} \\

    \textbf{SFT}
    &0.8978 &0.6604 &0.9098 &0.9233 &\underline{0.8813}
    &\textbf{0.96} &0.9787
    &\textbf{0.4698} &0.318 &\underline{0.7633} &0.4019 \\

    \textbf{SCANS}
    &\underline{0.9467} &0.4050 &0.9001 &\textbf{0.9767} &0.8790
    &0.94 &0.9632
    &0.4679 &\textbf{0.326} &\textbf{0.7639} &\underline{0.4138} \\

    \textbf{Surgical}
    &0.8889 &0.6793 &\underline{0.9201} &0.9033 &0.8100
    &\underline{0.95} &0.9767
    &0.4651 &0.318 &0.7595 &0.4113 \\

    \textbf{DCR}
    &0.9067 &0.7297 &0.8996 &0.9200 &0.8602
    &0.94 &0.9651
    &0.4413 &0.302 &0.7236 &0.4027 \\

    \textbf{ACTOR}
    &0.9333 &\underline{0.7494} &\textbf{0.9498} &0.9333 &0.8522
    &\underline{0.95} &0.9767
    &{0.4626} &0.316 &0.7524 &0.4078 \\

    \rowcolor{gray!10}\textbf{GOAL}
    &\textbf{0.9511} &\textbf{0.7699} &\textbf{0.9498} &\underline{0.9700} &\textbf{0.9024}
    &\textbf{0.96} &\underline{0.9864}
    &\underline{0.4682} &0.318 &0.7557 &\underline{0.4138} \\

    \midrule
    \multicolumn{12}{c}{\textbf{Llama-2-13B-Chat}} \\
    \midrule

    \textbf{Base}
    &0.7511 &0.2002 &0.6501 &0.7200 &0.6807
    &0.96 &\textbf{0.9981}
    &\textbf{0.5366} &0.354 &\underline{0.7791} &0.4633 \\

    \textbf{SFT}
    &0.8978 &0.5902 &0.9201 &\underline{0.9400} &\underline{0.8496}
    &\underline{0.97} &\textbf{0.9981}
    &0.5359 &\underline{0.376} &\underline{0.7791} &\underline{0.4693} \\

    \textbf{SCANS}
    &\underline{0.9022} &0.3499 &0.8504 &0.9367 &\underline{0.8496}
    &\textbf{0.98} &\textbf{0.9981}
    &0.5351 &0.356 &\textbf{0.7835} &\textbf{0.4718} \\

    \textbf{Surgical}
    &0.8178 &0.6004 &0.8801 &0.8733 &0.7784
    &0.95 &0.9942
    &0.5348 &0.348 &0.7753 &0.4608 \\

    \textbf{DCR}
    &0.8444 &0.6194 &0.8796 &0.9067 &0.8206
    &0.94 &0.9787
    &0.5144 &0.308 &0.7378 &0.4266 \\

    \textbf{ACTOR}
    &0.8667 &\underline{0.6399} &\underline{0.9503} &0.9033 &0.8179
    &0.95 &\underline{0.9961}
    &0.5316 &0.352 &0.7622 &0.4522 \\

    \rowcolor{gray!10}\textbf{GOAL}
    &\textbf{0.9289} &\textbf{0.7297} &\textbf{0.9649} &\textbf{0.9700} &\textbf{0.8892}
    &\underline{0.97} &\textbf{0.9981}
    &\underline{0.5361} &\textbf{0.382} &0.7758 &0.4659 \\

    \bottomrule[1.5pt]
    \end{tabular}}
\end{table*}

\subsubsection{Choice of Safety-relevant Layers}
Our objective is to localize the layers responsible for constructing the model's general safety boundary, as over-refusal can arise when safety-relevant layers misclassify harmless prompts as harmful. Inspired by prior findings~\cite{cao2025scans,dabas2025just,lu2026discern}, we identify safety-relevant layers using a per-layer separability probe based on hidden-state cosine similarity. Specifically, at each Transformer layer, we compute the within-class similarities of benign--benign and harmful--harmful prompt pairs, as well as the between-class similarity of benign--harmful prompt pairs.
We then define a separation gap to quantify the margin between
within-class and between-class similarities. This gap widens rapidly across layers because the between-class similarity of
benign--harmful pairs drops at a much faster rate than the
within-class similarities. We select the contiguous layer block where this gap grows most rapidly as the safety-relevant layers.
Notably, the layer where harmful and over-refusal representations are separable in our earlier analysis (Section~\ref{Introduction}) lies near the earlier portion of the identified safety-relevant layers. This pattern suggests that the model initially retains information distinguishing over-refusal samples from harmful ones, but becomes increasingly conservative as safety-relevant representations propagate through deeper layers. Therefore, to improve computational efficiency while mitigating
over-refusal, we restrict LoRA fine-tuning to the entire safety-relevant layer block, rather than fine-tuning only the later layers, where our visualization indicates increased overlap between harmful and over-refusal representations. This encourages a coherent safety boundary throughout the layer-wise safety decision process. The refusal-orthogonal projection is applied to the gradients of the trainable parameters in these layers. Meanwhile, the contrastive objective is computed using the final-layer hidden representations, with its gradients backpropagated solely to the trainable LoRA parameters in the selected safety-relevant layers. We defer the full details to Appendix~\ref{safety-related}.

\subsubsection{Time Complexity Analysis}
Finally, we analyze the computational cost of {GOAL}. Let \(d\) denote the number of trainable parameters in the selected layers, \(N\) the number of harmful samples used for subspace extraction, \(k\) the number of retained refusal-basis directions, and \(d_h\) the hidden-state dimension. {GOAL} first performs \(N\) backward passes to collect the harmful gradient matrix \(\mathbf{G}\in\mathbb{R}^{N\times d}\), and computes the top-\(k\) singular vectors with a truncated SVD, which costs
\(O(Ndk)\). During training, the low-rank projection costs \(O(B_okd)\) for \(B_o\) over-refusal samples in a minibatch, while the contrastive memory-bank comparison costs \(O(B_c|\mathcal{M}|d_h)\) for \(B_c\) contrastive queries. Since \(k\) and \(|\mathcal{M}|\) are small, the extra overhead is modest in practice.

\section{Experiments}
\label{exp}
In this section, we try to answer the following research questions: 
\begin{itemize}[leftmargin=*]
\item \textbf{RQ1:}
How does {GOAL} perform compared with existing baselines across over-refusal, safety, and general-capability benchmarks (Section~\ref{rq1})?
\item \textbf{RQ2:}
How does each component of {GOAL} contribute to the final performance (Section~\ref{rq2})?
\item \textbf{RQ3:}
Do the selected safety-relevant layers provide an effective target for over-refusal mitigation (Section~\ref{rq3})?
\item \textbf{RQ4:} How robust is the extracted low-rank refusal subspace across different extraction settings (Section~\ref{rq4})? 
\item \textbf{RQ5:} How sensitive is {GOAL} to key hyperparameters, including the projection rank $k$ and contrastive loss weight $\lambda$ (Section~\ref{rq5})?
\end{itemize}

\subsection{Experimental Settings}

\noindent\textbf{Training Data.}
Our 500-sample training set comprises 200 harmful queries (AdvBench~\citep{zou2023universal}, MaliciousInstruct~\citep{huang2024catastrophic}, TDC2023~\citep{mazeika2024harmbench,mazeika2023trojan}), 200 benign queries (Alpaca~\citep{taori2023stanford}), and 100 over-refusal queries (OR-Bench-Hard~\citep{cui2024or}, XSTest~\citep{rottger2024xstest}, PHTest~\citep{an2024automatic}). Reference responses are generated via GPT-4o~\citep{hurst2024gpt}. 

\noindent\textbf{Evaluation and Metrics.}
We evaluate over-refusal via Compliance Rate (\textbf{C.R.}) on the held-out test portions of the aforementioned over-refusal datasets, alongside two OOD benchmarks: OKTest~\citep{shi2024navigating} and CoCoNot~\citep{brahman2024art}. Safety preservation is measured via Safety Score (\textbf{S.S.}) on JailbreakBench (\textbf{JBB})~\citep{chao2024jailbreakbench} and \textbf{STL}, our shorthand for a collection of four safety benchmarks
from \citet{bianchi2024safety}. General capability is assessed via Accuracy (\textbf{Acc}) on standard multiple-choice benchmarks (MMLU~\citep{hendrycks2020measuring}, ARC-C~\citep{clark2018think}, OpenBookQA~\citep{mihaylov2018can}, PIQA~\citep{bisk2020piqa}). GPT-5.1 is employed as the judge for evaluating C.R. and S.S. Detailed dataset descriptions, baseline information, metric definitions, training hyperparameters, and implementation details are deferred to Appendix~\ref{app:hyperparams}.

\noindent\textbf{Models and Baselines.}
We evaluate {GOAL} on Gemma-7B-it, Llama-2-7B-Chat, and Llama-2-13B-Chat~\citep{touvron2023llama,team2024gemma} using greedy decoding. We compare against standard SFT
  and representative over-refusal mitigation methods, including
  training-based methods (ACTOR~\citep{dabas2025just},
  DCR~\citep{lu2026discern}) and
  inference-time or representation-level interventions
  (SCANS~\citep{cao2025scans}, Surgical~\citep{wang2025surgical}).
  For a fair comparison, we retrain ACTOR, DCR, and SFT using the same
  over-refusal data construction strategy as {GOAL}.

\noindent\textbf{Implementation Details.}
We fine-tune all models with LoRA on the selected safety-relevant layers only, using rank $r=16$, scaling factor $\alpha=32$, and dropout rate $0.05$. The selected layer ranges are layers 12--22 for Llama-2-7B-Chat, layers 13--26 for Llama-2-13B-Chat, and layers 15--23 for Gemma-7B-it. All models are trained for 5 epochs with an effective batch size of 4. We set the refusal-subspace rank to $k=10$, apply the contrastive loss at the last layer, and use a contrastive weight of $\lambda=0.1$. The Circle-style contrastive loss uses margin $m=0.25$ and scale $\gamma=80$, with a class-balanced FIFO memory bank of size 128. More training and implementation details are provided in Appendix~\ref{app:hyperparams}.

\begin{table*}[t]
    \centering
    \caption{Ablation of {GOAL}'s components on over-refusal, safety, and general-capability benchmarks across three base models.}
    \label{tab:ablation}
    \resizebox{0.77\linewidth}{!}{
    \small                      
    \begin{tabular}{l ccccc cc cccc}
    \toprule[1.5pt]
    \multirow{2}{*}{\textbf{Method}}
      & \multicolumn{5}{c}{\textbf{Over-refusal C.R. $\uparrow$}}
      & \multicolumn{2}{c}{\textbf{Safety S.S. $\uparrow$}}
      & \multicolumn{4}{c}{\textbf{General Capability Acc $\uparrow$}} \\
    \cmidrule(lr){2-6} \cmidrule(lr){7-8} \cmidrule(lr){9-12}
      & \textbf{XST} & \textbf{ORH} & \textbf{PHT} & \textbf{OKT} & \textbf{CoCo}
      & \textbf{JBB} & \textbf{STL}
      & \textbf{MMLU} & \textbf{OBQA} & \textbf{PIQA} & \textbf{ARC-C} \\
    \midrule

    \multicolumn{12}{c}{\textbf{Gemma-7B-it}} \\
    \midrule

    \textbf{Ortho only}
    &\underline{0.8978} &\underline{0.6596} &0.8699 &\underline{0.8533} &\underline{0.8602}
    & \underline{0.96} & \underline{0.9845}
    & \textbf{0.5092} & \textbf{0.3720} & 0.7666 & \underline{0.4710} \\

    \textbf{Contrast only}
    &0.8489 &0.6399 &\underline{0.8801} &0.8200 &0.8391
    & 0.95 & 0.9787
    & 0.4979 & 0.3520 & \underline{0.7742} & 0.4599 \\

    \rowcolor{gray!10} \textbf{GOAL}
    &\textbf{0.9200} &\textbf{0.7006} &\textbf{0.9094} &\textbf{0.8733} &\textbf{0.8813}
    & \textbf{0.97} & \textbf{0.9884}
    & \underline{0.5059} & \underline{0.3680} & \textbf{0.7753} & \textbf{0.4727} \\

    \midrule
    \multicolumn{12}{c}{\textbf{Llama-2-7B-Chat}} \\
    \midrule

    \textbf{Ortho only}
    &\underline{0.9200} &\underline{0.7195} &\underline{0.9001} &\underline{0.9400} &\underline{0.8892}
    & \underline{0.95} & 0.9806
    & \textbf{0.4717} & \underline{0.3160} & 0.7492 & 0.4024 \\

    \textbf{Contrast only}
    &0.9067 &0.6895 &0.8996 &0.9367 &0.8839
    & \textbf{0.96} & \underline{0.9845}
    & \underline{0.4705} & 0.3040 & \textbf{0.7612} & \textbf{0.4147} \\

    \rowcolor{gray!10} \textbf{GOAL}
    &\textbf{0.9511} &\textbf{0.7699} &\textbf{0.9498} &\textbf{0.9700} &\textbf{0.9024}
    & \textbf{0.96} & \textbf{0.9864}
    & {0.4682} & \textbf{0.3180} & \underline{0.7557} & \underline{0.4138} \\

    \midrule
    \multicolumn{12}{c}{\textbf{Llama-2-13B-Chat}} \\
    \midrule

    \textbf{Ortho only}
    &\underline{0.9111} &\underline{0.6998} &\underline{0.9298} &\underline{0.9400} &\underline{0.8602}
    & \textbf{0.97} & 0.9884
    & \textbf{0.5364} & \textbf{0.3860} & \textbf{0.7775} & 0.4514 \\

    \textbf{Contrast only}
    &0.8978 &0.6596 &0.9201 &0.9200 &0.8575
    & \underline{0.96} & \underline{0.9903}
    & 0.5351 & 0.3660 & \underline{0.7764} & \underline{0.4565} \\

    \rowcolor{gray!10} \textbf{GOAL}
    &\textbf{0.9289} &\textbf{0.7297} &\textbf{0.9649} &\textbf{0.9700} &\textbf{0.8892}
    & \textbf{0.97} & \textbf{0.9981}
    & \underline{0.5361} & \underline{0.3820} & {0.7758} & \textbf{0.4659} \\

    \bottomrule[1.5pt]
    \end{tabular}}
\end{table*}

\subsection{Overall Performance (RQ1)}
\label{rq1}
Table~\ref{tab:main-results} reports the performance of GOAL and the baselines
in terms of over-refusal mitigation, safety preservation, and general capability
retention across three backbone models. Overall, GOAL achieves leading
over-refusal performance while maintaining strong safety and preserving general
capabilities. 
We analyze these three aspects below.

\noindent\textbf{Effective mitigation of over-refusal.}
{GOAL} consistently attains the highest or second-highest Compliance Rate (C.R.) across all over-refusal benchmarks. Specifically, it achieves the best or tied-best performance on 13 out of 15 over-refusal settings and ranks second on the remaining two. The gains are especially pronounced on challenging OOD benchmarks such as OKTest and CoCoNot. For example, on Llama-2-13B-Chat, {GOAL} improves OKTest from 0.9400 to 0.9700 and CoCoNot from 0.8496 to 0.8892 compared with the strongest non-{GOAL} results. These results suggest that {GOAL} generalizes its mitigation behavior beyond in-domain over-refusal prompts, rather than merely fitting to a specific benchmark distribution.

\noindent\textbf{Robust safety preservation.}
A common failure mode in mitigating over-refusal is the inadvertent degradation of model safety. In contrast, {GOAL} maintains strong safety performance on both JBB and STL. On Gemma-7B-it, {GOAL} achieves the best safety scores on both JBB and STL, improving JBB from 0.90 to 0.97 and STL from 0.9748 to 0.9884 compared with the base model. On Llama-2-13B-Chat, {GOAL} matches the best STL score of 0.9981 while also achieving a competitive JBB score of 0.97. These results show that the improved compliance on over-refusal queries does not come at the cost of substantially weakened safety behavior against genuinely harmful instructions.

\noindent\textbf{Maintaining general capabilities.}
Across all three backbones and four general-capability benchmarks,
GOAL largely preserves the performance of the corresponding base
models. The largest observed decrease is only $0.0071$ in absolute
accuracy. These results indicate that GOAL mitigates over-refusal
without noticeably compromising general capabilities.

Further statistical significance analysis and human evaluation are
provided in Appendices~\ref{app:statistical_significance}
and~\ref{app:human_evaluation}, respectively.
\subsection{Ablation Study (RQ2)}
\label{rq2}
\noindent\textbf{Ablation setup.}
To isolate the contribution of each component, we compare three variants: (i) Ortho only, retaining only the gradient orthogonalization against the refusal subspace; (ii) Contrast only, retaining only the contrastive learning; and (iii) {GOAL}, the full model. All variants are trained on the three base models with identical data and hyperparameters.

\noindent\textbf{Component contributions.}
As demonstrated in Table~\ref{tab:ablation}, the full {GOAL} framework consistently achieves the highest C.R. on every over-refusal benchmark across all three model blocks. Compared with either orthogonal projection alone or contrastive learning alone, their combination yields more robust mitigation of over-refusal, with clear improvements on XSTest, OR-Bench-Hard, PHTest, OKTest, and CoCoNot. Meanwhile, {GOAL} also maintains competitive safety scores and general-capability performance. These results indicate that the two mechanisms are complementary.

\begin{table*}[t]
    \centering
    \caption{Ablation of target layer selection on Llama-2-7B-Chat (32 transformer layers).}
    \label{tab:layer-ablation}
    \resizebox{0.85\linewidth}{!}{
        \small                                     
        \begin{tabular}{l ccccc cc cccc}
        \toprule[1.5pt]
        \multirow{2}{*}{\textbf{Configuration}}
          & \multicolumn{5}{c}{\textbf{Over-refusal C.R. $\uparrow$}}
          & \multicolumn{2}{c}{\textbf{Safety S.S. $\uparrow$}}
          & \multicolumn{4}{c}{\textbf{General Capability Acc $\uparrow$}} \\
        \cmidrule(lr){2-6} \cmidrule(lr){7-8} \cmidrule(lr){9-12}
          & \textbf{XST} & \textbf{ORH} & \textbf{PHT} & \textbf{OKT} & \textbf{CoCo}
          & \textbf{JBB} & \textbf{STL}
          & \textbf{MMLU} & \textbf{OBQA} & \textbf{PIQA} & \textbf{ARC-C} \\
        \midrule

        \textbf{Early (layers 1--11)}   
        & 0.8267  & \underline{0.6194}  & 0.9196  & 0.9000  & 0.7995   
        & \underline{0.96} & \underline{0.9923}
        & 0.4657 & \underline{0.3120} & 0.7519 & 0.4053 \\

        \textbf{Late  (layers 23--32)}  
        & 0.8089  & 0.5595  & 0.8996  & 0.8133  & 0.7493   
        & \textbf{0.97} & 0.9826
        & \underline{0.4679} & 0.3060 & \textbf{0.7622} & \underline{0.4121} \\

        \textbf{Full  (layers 1--32)}   
        & \textbf{0.9556}  & \textbf{0.7699}  & \textbf{0.9552}  & \textbf{0.9733}  & \underline{0.8892}   
        & \textbf{0.97} & \textbf{0.9942}
        & {0.4675} & \textbf{0.3180} & \underline{0.7595} & 0.4104 \\

        \rowcolor{gray!10} \textbf{GOAL (layers 12--22)} 
        & \underline{0.9511}  & \textbf{0.7699}  & \underline{0.9498}  & \underline{0.9700}  & \textbf{0.9024}   
        & \underline{0.96} & 0.9864
        & \textbf{0.4682} & \textbf{0.3180} & {0.7557} & \textbf{0.4138} \\

        \bottomrule[1.5pt]
        \end{tabular}%
    }
\end{table*}

\begin{figure}[t]
      \centering
      \includegraphics[width=1\linewidth]{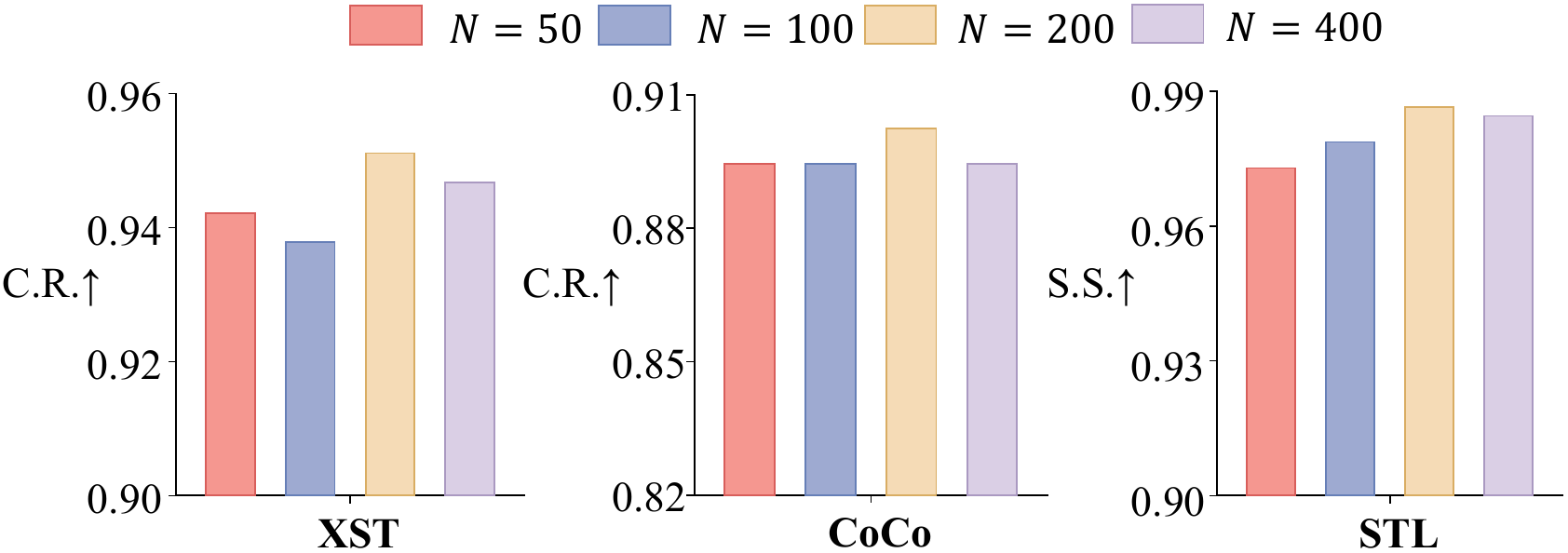}
      \caption{Robustness to harmful sample size \(N\). Impact of the number of harmful
prompts \(N\) used for subspace extraction on model performance.}
      \label{fig:subspace-robustness}
  \end{figure}
\subsection{Effectiveness of Safety-relevant Layers (RQ3)}
\label{rq3}
To evaluate the effectiveness of our choice of safety-relevant layers, we compare our probe-selected middle-layer block against early-, late-, and full-layer variants on Llama-2-7B-Chat in Table~\ref{tab:layer-ablation}. The middle-layer configuration consistently outperforms both the early and late variants across all over-refusal benchmarks. Furthermore, it achieves competitive performance against the full-layer configuration while updating only roughly one-third of the parameters. These results suggest that the selected middle-layer block serves as a parameter-efficient target for GOAL. Further analysis of the effectiveness of safety-relevant layers is
provided in Appendix~\ref{app:target_layer_control}.

\subsection{Robustness of the Refusal Subspace (RQ4)}
\label{rq4}
We evaluate the sensitivity of the extracted refusal subspace to the number of harmful prompts ($N$) used for its construction on Llama-2-7B-Chat. As shown in Figure~\ref{fig:subspace-robustness}, the C.R. and S.S. metrics remain stable across varied sample sizes. This demonstrates that the extracted subspace captures an intrinsic, stable refusal structure rather than sample-specific artifacts. More comprehensive experimental results and the robustness evaluation of the refusal subspace across different harmful datasets are provided in Appendix~\ref{app:detailed_analysis}.

\subsection{Hyperparameter Analysis (RQ5)}
\label{rq5}
We analyze the sensitivity of {GOAL} to its two primary hyperparameters: 
the projection rank $k$ and the contrastive loss weight $\lambda$. 
Figure~\ref{fig:hparam-grid} presents heatmaps of OKTest C.R. and STL S.S. 
over a two-dimensional hyperparameter sweep on Llama-2-7B-Chat. 
The results show that {GOAL} remains stable across a broad range of 
hyperparameter settings. In particular, strong performance is consistently 
observed around the default setting ($k{=}10$, $\lambda{=}0.1$), while nearby settings achieve comparable results on both C.R. and S.S. metrics. 
This stability reduces the need for intensive per-model tuning and supports 
the practical applicability of {GOAL}. More comprehensive experimental results are provided in Appendix~\ref{app:detailed_analysis}.

  \begin{figure}[t]
      \centering
      \includegraphics[width=1\linewidth]{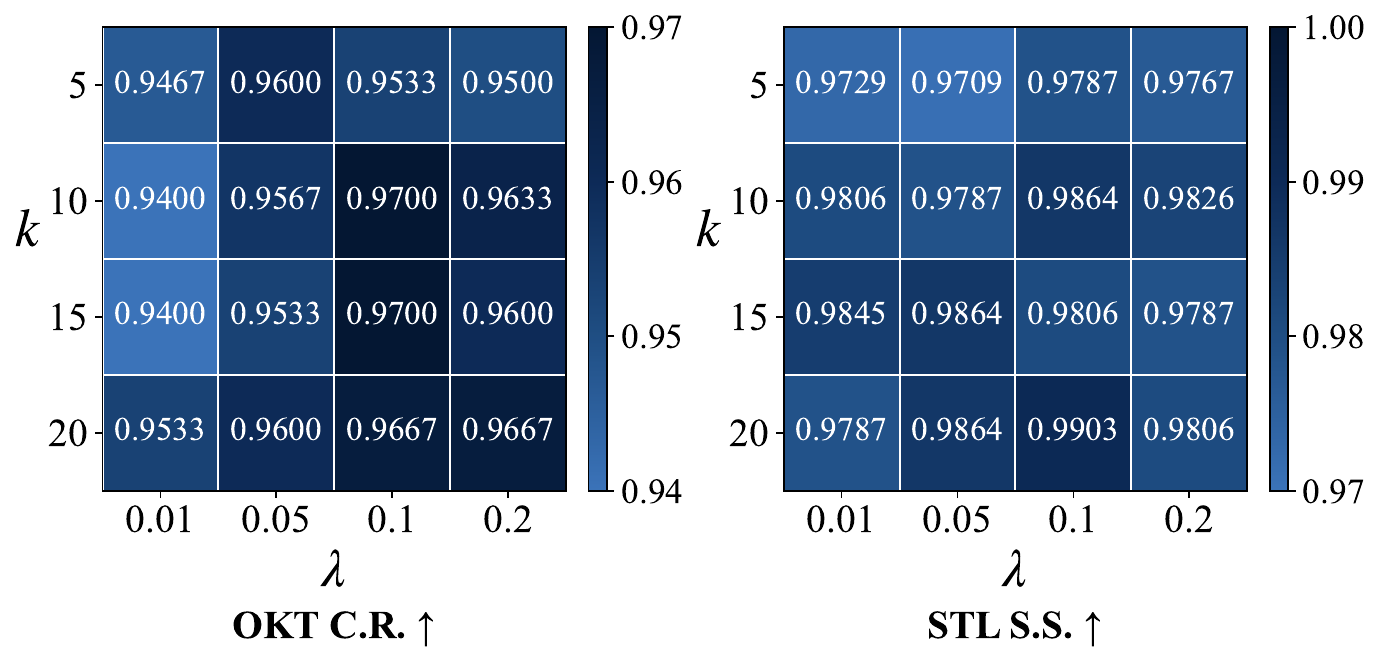}
      \caption{Sensitivity heatmaps of projection rank $k$ and contrastive weight $\lambda$.}
      \label{fig:hparam-grid}
  \end{figure}

\section{Conclusion}
In this work, we analyze over-refusal from both representational and optimization perspectives. Our layer-wise visualization shows that harmful and over-refusal representations are separable at certain intermediate layers but become increasingly entangled in deeper layers. We further find that harmful sample gradients exhibit high intra-class similarity and concentrate in a low-rank subspace, while exhibiting stronger alignment with refusal-target gradients than with helpful-target gradients on over-refusal prompts. Based on these observations, we propose GOAL, which combines refusal-orthogonal projection to decouple over-refusal optimization from harmful gradients with safety-aware contrastive learning to enhance the separation between harmful and over-refusal representations. Extensive experiments across multiple models and benchmarks demonstrate that GOAL substantially mitigates over-refusal while preserving model safety and general capabilities.

\bibliographystyle{ACM-Reference-Format}
\balance
\bibliography{references}

\appendix

\section*{Limitations and Ethical Considerations}
\label{sec:limitations_ethics}
\noindent\textbf{Limitations.}
Our experiments focus on open-weight instruction-tuned models with
7--13 billion parameters and predominantly English benchmarks.
Although the Qwen3-8B results support generalization to a recent
model family, further evaluation is needed on larger, proprietary,
and multilingual models. Moreover, our first-order analysis provides
a sufficient characterization rather than establishing
harmful-gradient coupling as the sole cause of over-refusal.
Finally, automatic evaluation partly relies on an LLM judge, and
strong benchmark performance does not guarantee safety in all
deployment settings.

\noindent\textbf{Human-evaluation ethics.}
The human evaluation was conducted by three master's students, each
of whom independently assessed the model responses. The annotators
were informed of the study procedure and the possibility of
encountering superficially sensitive language, provided informed
consent, and could discontinue participation without penalty. They
were compensated at a rate of CNY 100 per hour. No personally
identifying information was collected, and only aggregate results
are reported.

\noindent\textbf{Generative-AI assistance.}
OpenAI Codex was used to assist with language editing, manuscript
organization, and LaTeX preparation. The authors made all research
decisions, conducted the experiments, and verified the mathematical
derivations, numerical results, and citations.

\section{Proof of Refusal-subspace Attraction}
\label{sec:appendix_refusal_attraction}

We prove Proposition~\ref{thm:refusal_subspace_attraction}.
For the $i$-th over-refusal prompt, recall that
$\mathcal{L}^{\mathrm{over}}_{\mathrm{SFT},i}$ and
$\mathcal{L}'^{\mathrm{over}}_{\mathrm{SFT},i}$ denote the losses
defined with respect to the correct helpful response $y_i$ and a refusal
response $y_i'$, respectively. Their gradients at step
$t$ are
\begin{equation}
\mathbf{g}^{\mathrm{over}}_{i,t}
=
\nabla_{\theta}
\mathcal{L}^{\mathrm{over}}_{\mathrm{SFT},i}
(\theta_t,x_i,y_i)
\end{equation}
and
\begin{equation}
\mathbf{g}'^{\mathrm{over}}_{i,t}
=
\nabla_{\theta}
\mathcal{L}'^{\mathrm{over}}_{\mathrm{SFT},i}
(\theta_t,x_i,y_i').
\end{equation}

Since the SFT losses are negative log-likelihood losses, the refusal
preference margin can be written as
\begin{equation}
\begin{aligned}
m_i(\theta)
&=
\log p_{\theta}(y_i'\mid x_i)
-
\log p_{\theta}(y_i\mid x_i)
\\
&=
\mathcal{L}^{\mathrm{over}}_{\mathrm{SFT},i}
(\theta,x_i,y_i)
-
\mathcal{L}'^{\mathrm{over}}_{\mathrm{SFT},i}
(\theta,x_i,y_i').
\end{aligned}
\end{equation}
Therefore, its gradient at $\theta_t$ is
\begin{equation}
\nabla_{\theta}m_i(\theta_t)
=
\mathbf{g}^{\mathrm{over}}_{i,t}
-
\mathbf{g}'^{\mathrm{over}}_{i,t}.
\end{equation}

Consider the parameter update induced by the aggregated harmful-sample
gradient:
\begin{equation}
\theta_{t+1}
=
\theta_t-\eta\bar{\mathbf{g}}_t.
\end{equation}
By a first-order Taylor expansion around $\theta_t$, we have
\begin{equation}
\begin{aligned}
m_i(\theta_t-\eta\bar{\mathbf{g}}_t)
-
m_i(\theta_t)
&=
-\eta
\left\langle
\nabla_{\theta}m_i(\theta_t),
\bar{\mathbf{g}}_t
\right\rangle
+
O(\eta^2)
\\
&=
\eta
\left\langle
\mathbf{g}'^{\mathrm{over}}_{i,t}
-
\mathbf{g}^{\mathrm{over}}_{i,t},
\bar{\mathbf{g}}_t
\right\rangle
+
O(\eta^2).
\end{aligned}
\end{equation}

Since $\mathbf{P}$ is an orthogonal projection matrix, we have
$\mathbf{P}^{\top}=\mathbf{P}$,
$\mathbf{P}^2=\mathbf{P}$, and
$\mathbf{P}(\mathbf{I}-\mathbf{P})=\mathbf{0}$. Hence,
\begin{equation}
\begin{aligned}
&
\left\langle
\mathbf{g}'^{\mathrm{over}}_{i,t}
-
\mathbf{g}^{\mathrm{over}}_{i,t},
\bar{\mathbf{g}}_t
\right\rangle
\\
&=
\left\langle
\mathbf{P}
\left(
\mathbf{g}'^{\mathrm{over}}_{i,t}
-
\mathbf{g}^{\mathrm{over}}_{i,t}
\right),
\mathbf{P}\bar{\mathbf{g}}_t
\right\rangle
\\
&\quad+
\left\langle
(\mathbf{I}-\mathbf{P})
\left(
\mathbf{g}'^{\mathrm{over}}_{i,t}
-
\mathbf{g}^{\mathrm{over}}_{i,t}
\right),
(\mathbf{I}-\mathbf{P})\bar{\mathbf{g}}_t
\right\rangle
\\
&=
A_{i,t}+R_{i,t}.
\end{aligned}
\end{equation}
Substituting this decomposition into the Taylor expansion gives
\begin{equation}
m_i(\theta_t-\eta\bar{\mathbf{g}}_t)
-
m_i(\theta_t)
=
\eta
\left(
A_{i,t}+R_{i,t}
\right)
+
O(\eta^2).
\end{equation}

If
$
A_{i,t}>|R_{i,t}|,
$
we have
\begin{equation}
A_{i,t}+R_{i,t}
\geq
A_{i,t}-|R_{i,t}|
>
0.
\end{equation}
Therefore, for a sufficiently small learning rate $\eta$, the positive
first-order term dominates the second-order remainder, yielding
\begin{equation}
m_i(\theta_t-\eta\bar{\mathbf{g}}_t)
-
m_i(\theta_t)
>
0.
\end{equation}
Thus, the harmful gradient update increases the refusal preference
margin of the over-refusal prompt, pulling it toward the refusal side.
This completes the proof.

\section{Proof of Decoupling by Refusal-orthogonal Projection}
\label{sec:appendix_refusal_orthogonal}

We prove Proposition~\ref{thm:orthogonal_decoupling}.
For the $i$-th
over-refusal sample, let $\mathcal{L}^{\mathrm{over}}_{\mathrm{SFT},i}$ denote
its training loss, and let
\begin{equation}
\mathbf{g}^{\mathrm{over}}_{i,t}
=
\nabla_{\theta}\mathcal{L}^{\mathrm{over}}_{\mathrm{SFT},i}(\theta_t, x_i, y_i)
\end{equation}
be its gradient at step $t$. The refusal-orthogonal gradient used by GOAL is defined as
\begin{equation}
\tilde{\mathbf{g}}^{\mathrm{over}}_{i,t}
=
(\mathbf{I}-\mathbf{P})\mathbf{g}^{\mathrm{over}}_{i,t}.
\end{equation}

We first show that the projected gradient has zero overlap with the
refusal-subspace component of the aggregated harmful gradient. By the
definition of $\tilde{\mathbf{g}}^{\mathrm{over}}_{i,t}$, we have
\begin{equation}
\begin{aligned}
\left\langle
\tilde{\mathbf{g}}^{\mathrm{over}}_{i,t},
\mathbf{P}\bar{\mathbf{g}}_t
\right\rangle
&=
\left\langle
(\mathbf{I}-\mathbf{P})\mathbf{g}^{\mathrm{over}}_{i,t},
\mathbf{P}\bar{\mathbf{g}}_t
\right\rangle
\\
&=
\left\langle
\mathbf{g}^{\mathrm{over}}_{i,t},
(\mathbf{I}-\mathbf{P})^{\top}
\mathbf{P}\bar{\mathbf{g}}_t
\right\rangle.
\end{aligned}
\end{equation}

Since $\mathbf{P}$ is an orthogonal projector,
$(\mathbf{I}-\mathbf{P})^{\top}=\mathbf{I}-\mathbf{P}$. Therefore,
\begin{equation}
(\mathbf{I}-\mathbf{P})^{\top}\mathbf{P}
=
(\mathbf{I}-\mathbf{P})\mathbf{P}
=
\mathbf{P}-\mathbf{P}^{2}
=
\mathbf{0}.
\end{equation}
Thus,
$
\left\langle
\tilde{\mathbf{g}}^{\mathrm{over}}_{i,t},
\mathbf{P}\bar{\mathbf{g}}_t
\right\rangle
=
0.
$

Next, we characterize the remaining overlap between the projected
over-refusal gradient and the aggregated harmful gradient. Decomposing
the aggregated harmful gradient gives
$
\bar{\mathbf{g}}_t
=
\mathbf{P}\bar{\mathbf{g}}_t
+
(\mathbf{I}-\mathbf{P})\bar{\mathbf{g}}_t.
$
Therefore,
\begin{equation}
\begin{aligned}
\left\langle
\tilde{\mathbf{g}}^{\mathrm{over}}_{i,t},
\bar{\mathbf{g}}_t
\right\rangle
&=
\left\langle
(\mathbf{I}-\mathbf{P})\mathbf{g}^{\mathrm{over}}_{i,t},
\mathbf{P}\bar{\mathbf{g}}_t
+
(\mathbf{I}-\mathbf{P})\bar{\mathbf{g}}_t
\right\rangle
\\
&=
\left\langle
(\mathbf{I}-\mathbf{P})\mathbf{g}^{\mathrm{over}}_{i,t},
\mathbf{P}\bar{\mathbf{g}}_t
\right\rangle
\\
&\quad+
\left\langle
(\mathbf{I}-\mathbf{P})\mathbf{g}^{\mathrm{over}}_{i,t},
(\mathbf{I}-\mathbf{P})\bar{\mathbf{g}}_t
\right\rangle.
\end{aligned}
\end{equation}
The first term is zero by the orthogonality shown above. Hence,
\begin{equation}
\left\langle
\tilde{\mathbf{g}}^{\mathrm{over}}_{i,t},
\bar{\mathbf{g}}_t
\right\rangle
=
\left\langle
(\mathbf{I}-\mathbf{P})\mathbf{g}^{\mathrm{over}}_{i,t},
(\mathbf{I}-\mathbf{P})\bar{\mathbf{g}}_t
\right\rangle.
\end{equation}
Thus, the projection eliminates the overlap with the harmful-gradient
component inside the refusal subspace, leaving only the interaction
between their components outside the refusal subspace.

It remains to show that the projected gradient is still a first-order
descent direction for the over-refusal training loss. By a first-order
Taylor expansion around $\theta_t$, we have
\begin{equation}
\begin{aligned}
&\mathcal{L}^{\mathrm{over}}_{\mathrm{SFT},i}
(\theta_t-\eta\tilde{\mathbf{g}}^{\mathrm{over}}_{i,t},
x_i,y_i)
-
\mathcal{L}^{\mathrm{over}}_{\mathrm{SFT},i}
(\theta_t,x_i,y_i)
\\
&\quad=
-\eta
\left\langle
\nabla_{\theta}
\mathcal{L}^{\mathrm{over}}_{\mathrm{SFT},i}
(\theta_t,x_i,y_i),
\tilde{\mathbf{g}}^{\mathrm{over}}_{i,t}
\right\rangle
+
O(\eta^2)
\\
&\quad=
-\eta
\left\langle
\mathbf{g}^{\mathrm{over}}_{i,t},
(\mathbf{I}-\mathbf{P})
\mathbf{g}^{\mathrm{over}}_{i,t}
\right\rangle
+
O(\eta^2).
\end{aligned}
\end{equation}

Since $\mathbf{I}-\mathbf{P}$ is also an orthogonal projector, it
satisfies
\begin{equation}
(\mathbf{I}-\mathbf{P})^{\top}
=
\mathbf{I}-\mathbf{P},
\qquad
(\mathbf{I}-\mathbf{P})^{2}
=
\mathbf{I}-\mathbf{P}.
\end{equation}
Therefore,
\begin{equation}
\begin{aligned}
\left\langle
\mathbf{g}^{\mathrm{over}}_{i,t},
(\mathbf{I}-\mathbf{P})
\mathbf{g}^{\mathrm{over}}_{i,t}
\right\rangle
&=
\left\langle
(\mathbf{I}-\mathbf{P})\mathbf{g}^{\mathrm{over}}_{i,t},
(\mathbf{I}-\mathbf{P})\mathbf{g}^{\mathrm{over}}_{i,t}
\right\rangle
\\
&=
\left\|
(\mathbf{I}-\mathbf{P})
\mathbf{g}^{\mathrm{over}}_{i,t}
\right\|_2^2.
\end{aligned}
\end{equation}

Substituting this identity into the Taylor expansion gives
\begin{small}
\begin{equation}
\begin{aligned}
&\mathcal{L}^{\mathrm{over}}_{\mathrm{SFT},i}
(\theta_t-\eta\tilde{\mathbf{g}}^{\mathrm{over}}_{i,t},
x_i,y_i)
-
\mathcal{L}^{\mathrm{over}}_{\mathrm{SFT},i}
(\theta_t,x_i,y_i)
\\
&\qquad=
-\eta
\left\|
(\mathbf{I}-\mathbf{P})
\mathbf{g}^{\mathrm{over}}_{i,t}
\right\|_2^2
+
O(\eta^2).
\end{aligned}
\end{equation}
\end{small}

If
$
\left\|
(\mathbf{I}-\mathbf{P})
\mathbf{g}^{\mathrm{over}}_{i,t}
\right\|_2
>
0,
$
then the first-order term is strictly negative. For a sufficiently small
learning rate $\eta$, the negative first-order term dominates the
higher-order remainder, and thus
\begin{equation}
\mathcal{L}^{\mathrm{over}}_{\mathrm{SFT},i}
(\theta_t-\eta\tilde{\mathbf{g}}^{\mathrm{over}}_{i,t},
x_i,y_i)
-
\mathcal{L}^{\mathrm{over}}_{\mathrm{SFT},i}
(\theta_t,x_i,y_i)
<
0.
\end{equation}
Therefore, the projected update eliminates the harmful-gradient
component inside the refusal subspace while remaining a first-order descent direction for the over-refusal training loss, completing the proof.

\begin{figure*}[t]
    \centering
    \begin{subfigure}{0.32\textwidth}
        \centering
        \includegraphics[width=\textwidth]{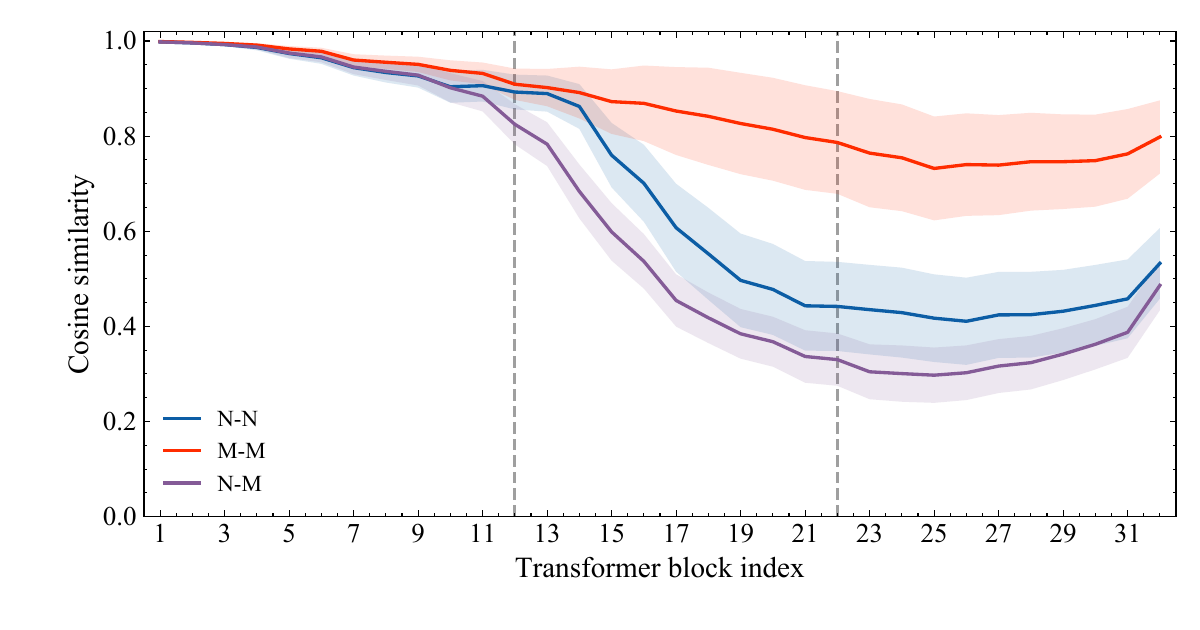}
        \caption{Llama-2-7B-Chat}
        \label{fig:sim_llama7b}
    \end{subfigure}
    \hfill
    \begin{subfigure}{0.32\textwidth}
        \centering
        \includegraphics[width=\textwidth]{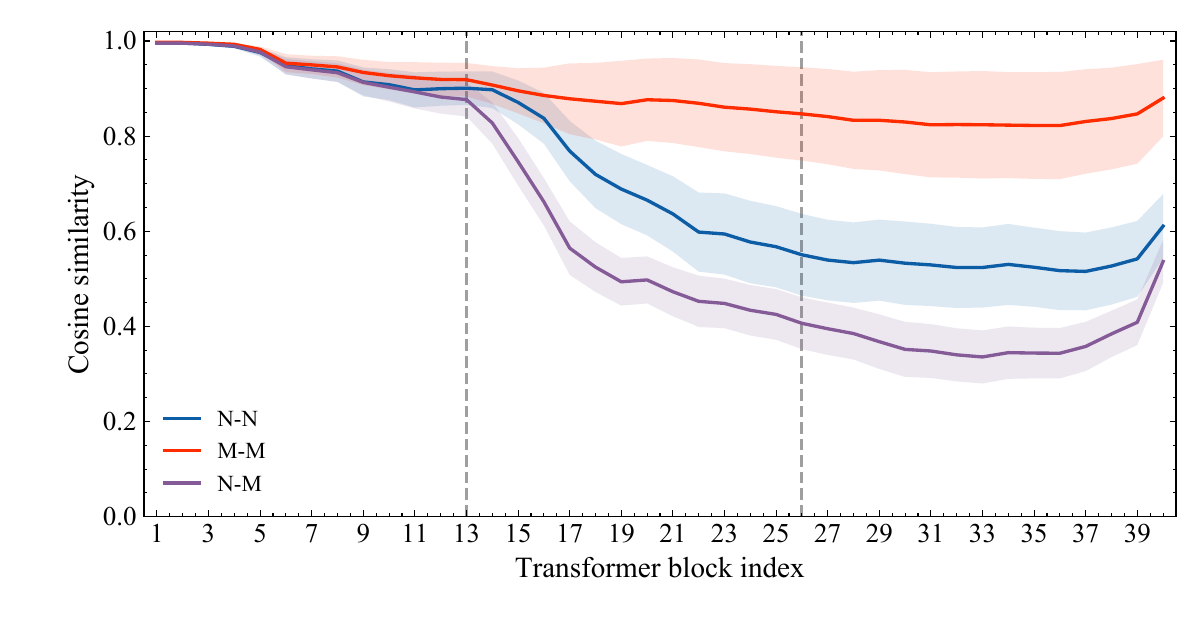}
        \caption{Llama-2-13B-Chat}
        \label{fig:sim_llama13b}
    \end{subfigure}
    \hfill
    \begin{subfigure}{0.32\textwidth}
        \centering
        \includegraphics[width=\textwidth]{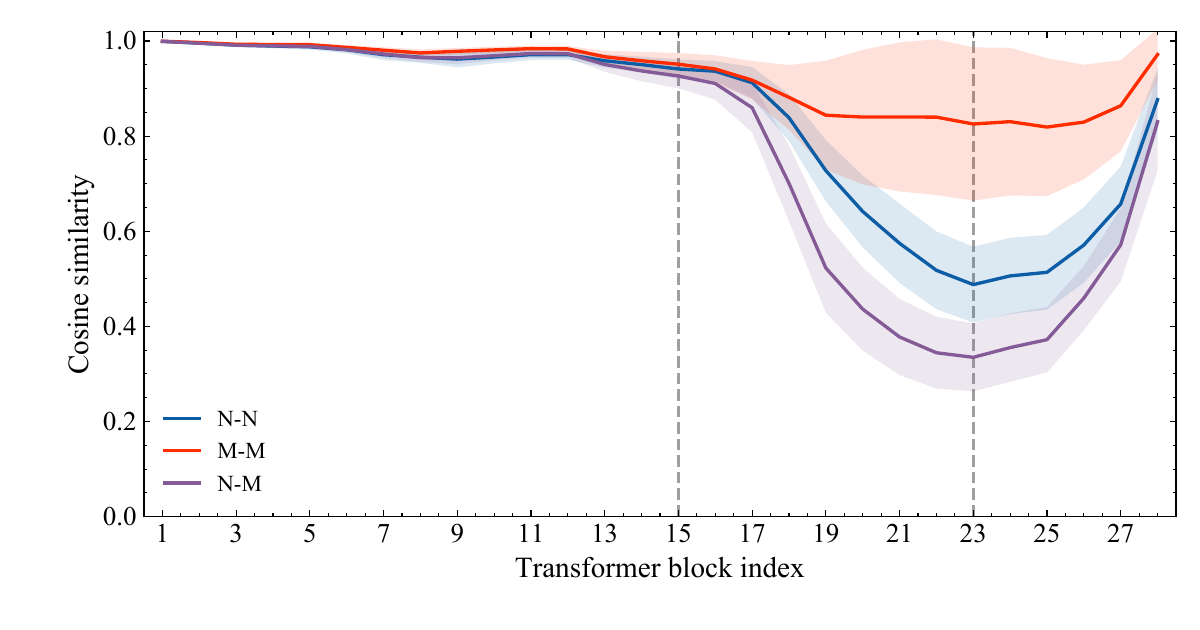}
        \caption{Gemma-7B-it}
        \label{fig:sim_gemma}
    \end{subfigure}
    
    \caption{Per-layer cosine similarity analysis across Llama-2-7B-Chat, Llama-2-13B-Chat, and Gemma-7B-it to identify safety-relevant layers for orthogonal gradient projection. The $\rho^{\text{NN}}$, $\rho^{\text{MM}}$, and $\rho^{\text{NM}}$ curves show the similarity trends for each model, where solid lines represent the mean, shaded bands indicate the $\pm 1$ standard deviation, and gray dashed lines mark the selected safety band $\mathcal{L}^*$.}
    \label{fig:cosine_sim_combined}
\end{figure*}

\section{Choice of Safety-relevant Layers}
\label{safety-related}
Our objective is to localize the layers responsible for constructing the model's general safety boundary, for which benign and harmful samples provide two clean reference classes. We deliberately avoid using over-refusal samples for layer selection because these samples are harmless but are already misjudged by the model. Inspired by the success of \cite{cao2025scans,dabas2025just,lu2026discern}, we identify the safety-relevant layers and apply LoRA fine-tuning to them, which allows us to better balance safety and general performance.

\noindent\textbf{Per-layer separability probe.}
Let $\mathbf{h}_x^{(l)} $ denote the last-token hidden state of prompt $x$ at the output of transformer block $l$, obtained from a single forward pass on the base model. Given a benign prompt pool $\mathcal{P}_N$ and a harmful prompt pool $\mathcal{P}_M$, we compute three per-layer cosine-similarity curves, each averaged over $r$ random prompt pairs:
\begin{equation}
    \begin{aligned}
      \rho^{\text{NN}}(l) &= \mathbb{E}_{x,x'\sim \mathcal{P}_N}\big[\cos(\mathbf{h}_x^{(l)},\mathbf{h}_{x'}^{(l)})\big], \\
      \rho^{\text{MM}}(l) &= \mathbb{E}_{x,x'\sim \mathcal{P}_M}\big[\cos(\mathbf{h}_x^{(l)},\mathbf{h}_{x'}^{(l)})\big], \\
      \rho^{\text{NM}}(l) &= \mathbb{E}_{x\sim \mathcal{P}_N,\,x'\sim \mathcal{P}_M}\big[\cos(\mathbf{h}_x^{(l)},\mathbf{h}_{x'}^{(l)})\big],
    \end{aligned}
\end{equation}
where $\cos(\cdot,\cdot)$ is cosine similarity, and $x$ and $x'$ denote two prompts independently sampled from the corresponding prompt pools; $\rho^{\text{NN}}$ and $\rho^{\text{MM}}$ measure within-class cohesion (benign--benign and harmful--harmful), and $\rho^{\text{NM}}$ measures between-class similarity. As visualized in Figure~\ref{fig:cosine_sim_combined}, these curves provide a clear layer-by-layer trajectory of the representational separation between benign and harmful inputs.

\noindent\textbf{Identifying safety-relevant layers.}
Let $L$ be the number of transformer blocks, indexed by $l\in\{0,\ldots,L-1\}$. For each layer, we define the separation gap as $g(l)=(\rho^{\mathrm{NN}}(l)+\rho^{\mathrm{MM}}(l))/2 -\rho^{\mathrm{NM}}(l)$, where a larger value indicates stronger within-group similarity relative to cross-group similarity. Since we aim to identify where the benign--harmful separation emerges rather than where it is already maximal, we use the growth score $\Delta g(l)=g(l)-g(l-1)$. We set the window width $w$ to approximately one-third of the total layers and select the contiguous block with the largest average growth:
\begin{equation}
\begin{gathered}
l_0^* = \arg\max_{l_0\in\{0,\ldots,L-w\}} \frac{1}{w-1} \sum_{l=l_0+1}^{l_0+w-1} \Delta g(l), \\
\mathcal{L}^* = \{l_0^*,\ldots,l_0^*+w-1\}.
\end{gathered}
\end{equation}
In all subsequent training, LoRA updates and gradient orthogonalization
are restricted to the layers in $\mathcal{L}^*$ (indicated by the gray dashed lines in Figure~\ref{fig:cosine_sim_combined}), while the contrastive
objective is computed using the final-layer hidden representations.

\section{Detailed Experimental Settings}
\label{app:hyperparams}

\noindent\textbf{Models.} We evaluate the generalization and robustness of our method on three representative instruction-tuned LLMs: Gemma-7B-it, Llama-2-7B-Chat, and Llama-2-13B-Chat. Greedy decoding is used for all text generation.

\noindent\textbf{Training Datasets.}
To train the models, we construct a dataset
$\mathcal{D}_{\text{train}} = \mathcal{D}_{\text{harmful}} \cup \mathcal{D}_{\text{benign}} \cup \mathcal{D}_{\text{over}}$,
comprising harmful, benign, and over-refusal samples.
$\mathcal{D}_{\text{harmful}}$ consists of 200 harmful instructions
drawn from AdvBench~\citep{zou2023universal},
MaliciousInstruct~\citep{huang2024catastrophic},
and TDC\-2023 \cite{mazeika2024harmbench, mazeika2023trojan}. $\mathcal{D}_{\text{benign}}$ consists of 200 instructions drawn from the
Stanford Alpaca dataset~\citep{taori2023stanford},
covering a broad range of everyday user intents. $\mathcal{D}_{\text{over}}$ is compiled by drawing 50 items from
OR-Bench-Hard~\citep{cui2024or} and 25 items from each of
XSTest~\citep{rottger2024xstest} and PHTest~\citep{an2024automatic},
yielding 100 prompts whose surface form appears unsafe but whose
underlying intent is benign.
Reference answers for harmful prompts are aligned refusals,
while those for benign and over-refusal prompts are helpful,
compliant completions; all reference answers are generated using
GPT-4o~\citep{hurst2024gpt}.
  
\noindent\textbf{Baseline Methods.} We compare our proposed {GOAL} with the following baseline methods: 
\textbf{SFT} fine-tunes the base model on the aforementioned $\mathcal{D}_{\text{train}}$ without any additional regularization, serving as the most direct vanilla baseline. 
\textbf{ACTOR}~\citep{dabas2025just} fine-tunes a single layer to suppress activation components that trigger refusal. 
\textbf{DCR}~\citep{lu2026discern} introduces a preceding contrastive-refinement stage so the model can better distinguish truly toxic from superficially toxic prompts. 
\textbf{SCANS}~\citep{cao2025scans} steers activations in safety-critical layers via refusal steering vectors. 
\textbf{Surgical}~\citep{wang2025surgical} ablates a single false-refusal direction in representation space to recover helpful responses on benign samples. 

\noindent\textbf{Evaluation Datasets.}
We evaluate over-refusal using the held-out versions of
XSTest~\citep{rottger2024xstest} (225 samples),
OR-Bench-Hard~\citep{cui2024or} (1{,}269 samples),
and PHTest~\citep{an2024automatic} (2{,}052 samples),
with OKTest~\citep{shi2024navigating} (300 samples) and
CoCoNot~\citep{brahman2024art} (379 samples) used as
out-of-distribution (OOD) benchmarks.
To ensure safety preservation, we further evaluate on
JailbreakBench ({JBB};~\cite{chao2024jailbreakbench}; 100 samples),
targeting adversarial and jailbreak robustness, together with four harmfulness benchmarks from \cite{bianchi2024safety},
collectively denoted as STL in this paper: I-MaliciousInstructions
(98 samples), I-CoNa (178 samples), I-Controversial (40 samples),
and HarmfulQ~\citep{shaikh2023second} (200 samples)---covering hateful
speech, controversial topics, malicious instructions, and common
jailbreak prompts (516 samples in total).
We assess general model capability using the Language Model
Evaluation Harness on multiple-choice benchmarks, including
MMLU~\citep{hendrycks2020measuring},
ARC-Challenge~\citep{clark2018think},
OpenBookQA~\citep{mihaylov2018can},
and PIQA~\citep{bisk2020piqa}.

\noindent\textbf{Data for Representation and Gradient Analyses.}
We construct balanced diagnostic sets containing 300 and 1,200
samples per category for the representation and gradient analyses,
respectively (900 and 3,600 samples in total). The benign samples are
drawn from TruthfulQA~\citep{lin2022truthfulqa} and
UltraChat~\citep{ding2023enhancing}; the over-refusal samples are drawn
from PHTest~\citep{an2024automatic} and
OR-Bench-Hard~\citep{cui2024or}; and the harmful samples are drawn from
AdvBench~\citep{zou2023universal}, Hex-Phi~\citep{qi2024fine}, and
HH-RLHF~\citep{bai2022training}. These diagnostic sets are used only
for the analyses in Figures~\ref{fig:repr-vis-pca}
and~\ref{fig:gradient-analysis} and are disjoint from the training and
evaluation sets.

\noindent\textbf{Metrics.}
Over-refusal is measured using the Compliance Rate (\textbf{C.R.}),
defined as the ratio of compliant responses to the total number of
generated responses. Similarly, the model's safety is evaluated with
the Safety Score (\textbf{S.S.}), calculated as the ratio of safe
responses to the total number of responses. Response evaluation for
both C.R. and S.S. is conducted using GPT-5.1.
For general capability, we report accuracy (\textbf{Acc}) on all four
multiple-choice benchmarks: for MMLU, this is the overall average
accuracy across its 57 subjects; for ARC-Challenge,
OpenBookQA, and PIQA, this is the task-level accuracy.

\noindent\textbf{Training Hyperparameters and GOAL Implementation.}
To ensure parameter efficiency, all models are fine-tuned using Low-Rank Adaptation (LoRA) with a rank of $16$, scaling factor $\alpha=32$, and a dropout rate of $0.05$. Specifically, we apply LoRA modules exclusively to a selected subset of layers tailored to each model: layers 12--22 for Llama-2-7B-Chat, layers 13--26 for Llama-2-13B-Chat, and layers 15--23 for Gemma-7B-it. During training, we use a micro-batch size of 1 and 4 gradient accumulation steps, resulting in an effective batch size of 4. All models are trained for 5 epochs using bfloat16 precision with a fixed random seed of 42. We apply a warmup ratio of $0.03$, gradient clipping with a maximum norm of $1.0$, and no weight decay. For the learning rates, we use $1.75 \times 10^{-4}$ for both Llama-2-7B-Chat and Llama-2-13B-Chat, while a lower learning rate of $2.5 \times 10^{-5}$ is adopted for Gemma-7B-it.

For the hyperparameters specific to the GOAL framework, we extract the gradient-based refusal subspace using a projection rank of $k=10$. The contrastive representation loss is applied at the last layer of each model. The overall loss objective sets the contrastive loss weight to $\lambda=0.1$. For the circle loss formulation, we set the margin to $0.25$, the scaling factor to $\gamma=80$, and maintain a representation bank $\mathcal{M}$ of size 128 to facilitate positive and negative pair comparisons. Specifically, rather than using random sampling, we employ an all-to-all deterministic comparison. To strictly control class imbalance, the bank is explicitly partitioned into two independent, class-specific FIFO queues, each with a capacity of 64 (one for harmful representations and one for over-refusal representations). New representations are routed to their respective queues, overwriting the oldest entries. For each query $x$, we use all currently available representations in the same-class queue, up to 64, as the
positive set $\mathcal{P}_x$, and all currently available representations in the opposing-class queue, up to 64, as the negative set $\mathcal{N}_x$. The contrastive loss is then computed directly over this full similarity matrix.

\noindent\textbf{Implementation Details of Baselines.}
For the \textbf{SFT} baseline, we fine-tune the models on $\mathcal{D}_{\text{train}}$ utilizing exactly the same LoRA configurations, target layer subsets, and training hyperparameters as our GOAL framework. This ensures that any performance improvements are strictly attributable to our proposed mechanisms rather than hyperparameter tuning. For \textbf{ACTOR}, we utilize the same training dataset ($\mathcal{D}_{\text{train}}$) as GOAL, while keeping its specific optimization hyperparameters and layer configurations identical to those in the original paper~\citep{dabas2025just}. For \textbf{DCR}, which originally leverages 250 samples from XSTest for its contrastive-refinement stage, we maintain an equivalent data scale by sampling 250 instances across OR-Bench-Hard~\citep{cui2024or}, XSTest~\citep{rottger2024xstest}, and PHTest~\citep{an2024automatic} in accordance with the exact proportional ratio (2:1:1) used for $\mathcal{D}_{\text{over}}$, keeping all other training settings unchanged~\citep{lu2026discern}. For \textbf{SCANS}~\citep{cao2025scans}, the method selects layers based on safety relevance; following the original paper, layers 10--20 are selected for steering in the 7B models, while layers 16--26 are used for the 13B models. Furthermore, we strictly follow the original paper regarding the dataset and hyperparameter designs. Finally, for \textbf{Surgical}~\citep{wang2025surgical}, we directly apply it using the default configurations and hyperparameters recommended by its original authors.

\noindent\textbf{Evaluation Prompts for GPT-5.1 Judge.}
To automatically evaluate the Safety Score (\textbf{S.S.}) and Compliance Rate (\textbf{C.R.}), we employ GPT-5.1 as an impartial judge. We follow the evaluation protocols and prompt templates proposed by~\cite{dabas2025just}. For full transparency and reproducibility, the exact prompt templates used in our evaluation pipeline are illustrated below. Specifically, Figure~\ref{fig:safety_judge} details the template utilized for assessing model safety against harmful instructions, whereas Figure~\ref{fig:over_refusal_judge} provides the template for evaluating over-refusal behavior on benign and over-refusal instructions.
\begin{figure*}[t]
    \centering
    \includegraphics[width=0.8\textwidth]{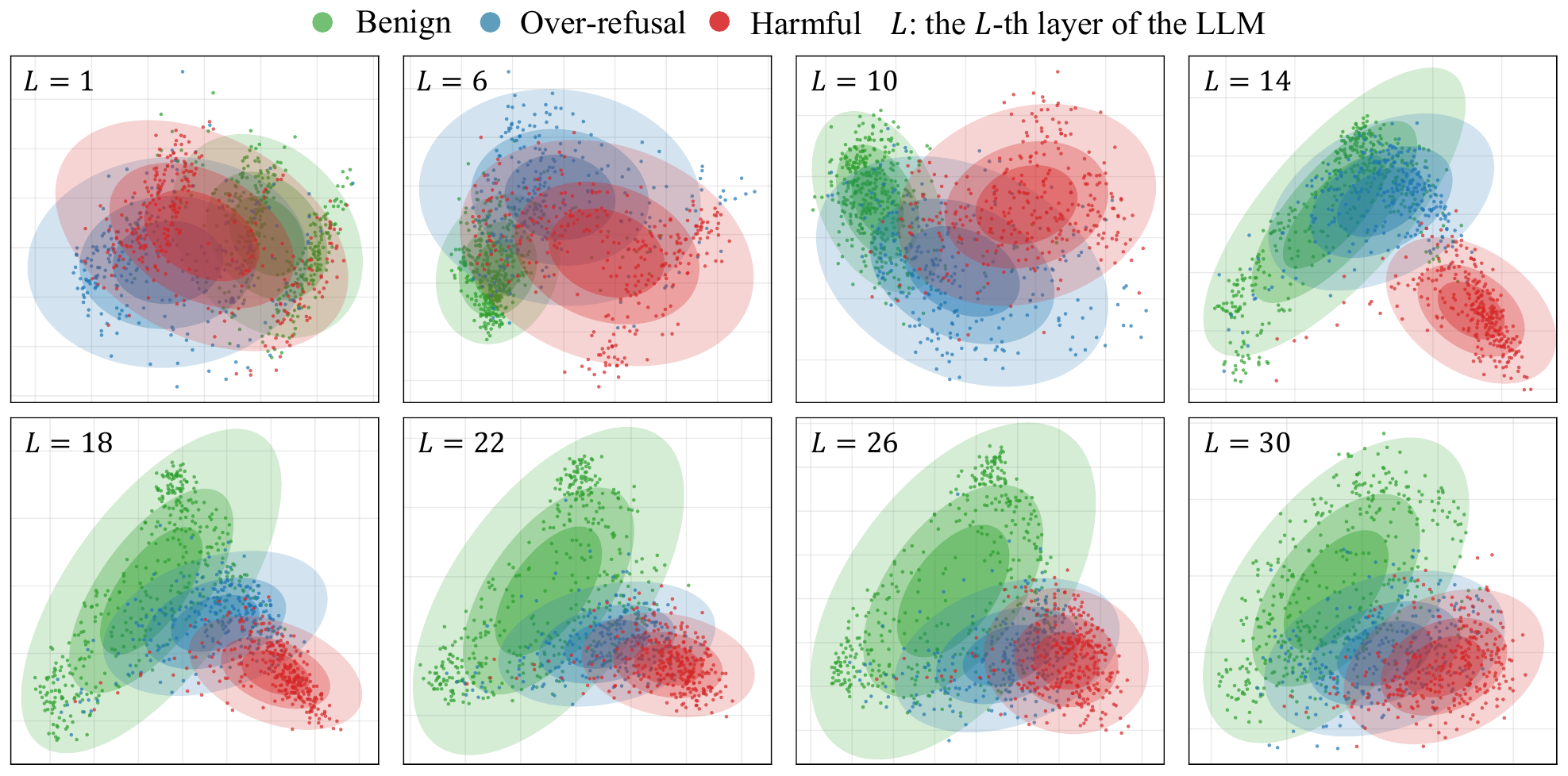}
    \caption{PCA-based layer-wise visualization of the last-token hidden representations of benign, harmful, and over-refusal samples on Llama-2-13B-Chat. At layer~14, harmful and over-refusal samples are separable in the representation space; however, the two types of samples show increasing visual overlap in
deeper layers.}
    \label{fig:repr-vis-pca-13b}
\end{figure*}

\begin{table*}[t]
\centering
\caption{Robustness of refusal subspaces constructed from different harmful prompt sources. We compare subspaces built using the default harmful instruction set, Hex-Phi, and HH-RLHF.}
\label{tab:robustness_source}
\resizebox{0.6\linewidth}{!}{%
\small
\begin{tabular}{l ccccc c cc}
\toprule[1.5pt]
\multirow{2}{*}{\textbf{Method}} & \multicolumn{5}{c}{\textbf{Compliance Rate (C.R.) $\uparrow$}} & & \multicolumn{2}{c}{\textbf{Safety Score (S.S.) $\uparrow$}} \\
\cmidrule(lr){2-6} \cmidrule(lr){8-9}
& \textbf{XST} & \textbf{ORH} & \textbf{PHT} & \textbf{OKT} & \textbf{CoCo} & & \textbf{JBB} & \textbf{STL} \\
\midrule
\textbf{GOAL (Hex-Phi)}   & \underline{0.9467} & \underline{0.7604} & \textbf{0.9522} & \underline{0.9667} & 0.8865 & & \textbf{0.96} & 0.9845 \\
\textbf{GOAL (HH-RLHF)}   & \textbf{0.9511} & 0.7526 & 0.9352 & \textbf{0.9700} & \underline{0.8945} & & \textbf{0.96} & \textbf{0.9884} \\
\rowcolor{gray!10} \textbf{GOAL (Ours)}      & \textbf{0.9511} & \textbf{0.7699} & \underline{0.9498} & \textbf{0.9700} & \textbf{0.9024} & & \textbf{0.96} & \underline{0.9864} \\
\bottomrule[1.5pt]
\end{tabular}
}
\end{table*}

\section{Additional Layer-wise Representation Visualization}
\label{sec:appendix-repr-vis}

To further examine whether the layer-wise representation pattern observed in the main text generalizes across models, we provide additional visualization results on Llama-2-13B-Chat. As shown in Figure~\ref{fig:repr-vis-pca-13b}, harmful and over-refusal samples are separable at layer~14. However, as the layer depth increases, the two types of samples show
increasing visual overlap.

\section{Detailed Results for Robustness and Hyperparameter Analysis}
\label{app:detailed_analysis}

In this section, we provide the comprehensive quantitative results corresponding to the robustness and hyperparameter analyses discussed in the main text. 

\noindent\textbf{Robustness of the Gradient-based Refusal Subspace.}
We evaluate the sensitivity of the extracted subspace to the number of
harmful prompts ($N$) used for its construction on Llama-2-7B-Chat.
We vary $N$ by sampling additional harmful prompts solely for
constructing the harmful-gradient matrix and extracting the refusal
subspace. The original training set $\mathcal{D}_{\mathrm{train}}$ is
kept fixed across all settings, and the additional prompts are not used
for model optimization. As shown in Figure~\ref{robustness_N}, GOAL
remains highly stable across different values of $N$, suggesting that
the extracted subspace is not sensitive to the prompt selection or
sample size used for its estimation.

To further examine whether the extracted subspace depends on a particular harmful prompt source, we construct refusal subspaces using different harmful datasets. Specifically, alongside our default $\mathcal{D}_{\text{harmful}}$, which consists of 200 harmful instructions drawn from AdvBench \citep{zou2023universal}, MaliciousInstruct~\citep{huang2024catastrophic},
and TDC2023~\citep{mazeika2024harmbench, mazeika2023trojan}, we also independently construct refusal subspaces using the Hex-Phi~\citep{qi2024fine} benchmark and the HH-RLHF~\citep{bai2022training} dataset. As shown in Table~\ref{tab:robustness_source}, the performance of GOAL remains consistent across different datasets used for subspace construction. This indicates that the extracted refusal subspace effectively reflects a robust optimization structure shared across varied harmful prompt distributions, rather than being biased toward the properties of a specific dataset.

\begin{figure*}[t]
      \centering
      \includegraphics[width=0.8\textwidth]{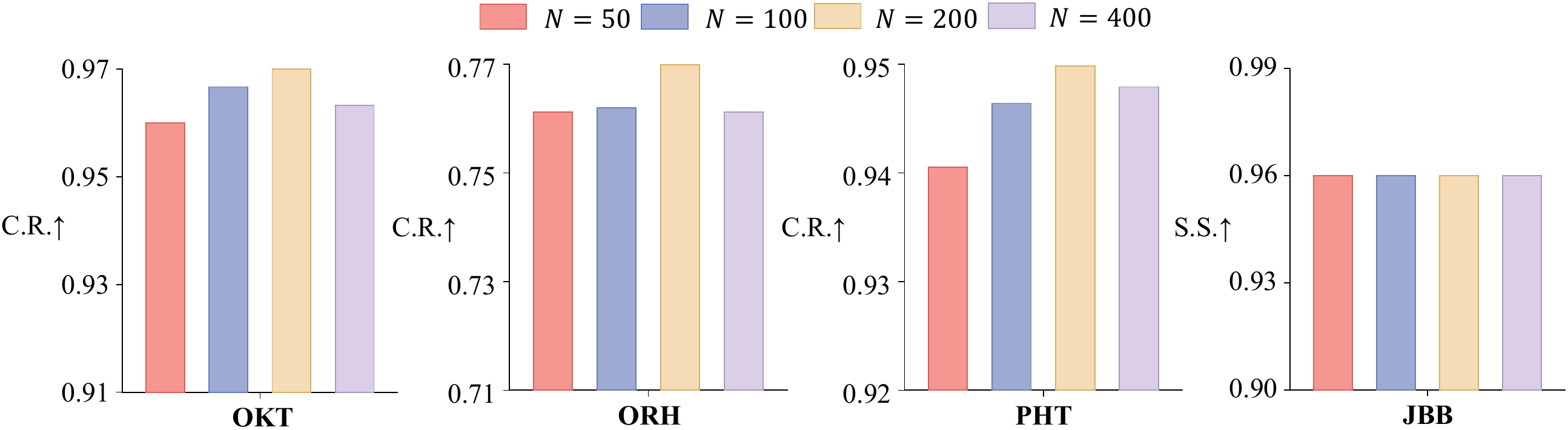}
      \caption{Robustness to harmful sample size \(N\). Impact of the number of harmful
prompts \(N\) used for subspace extraction on model performance.}
      \label{robustness_N}
\end{figure*}

\noindent\textbf{Hyperparameter analysis.}
We analyze the sensitivity of {GOAL} to its two main hyperparameters: the projection rank $k$ and the contrastive loss weight $\lambda$. Figure~\ref{hyper_grid} presents the detailed results of various evaluation metrics across a two-dimensional grid search of $k$ and $\lambda$ on Llama-2-7B-Chat.

\begin{figure*}[t]
      \centering
      \includegraphics[width=1\textwidth]{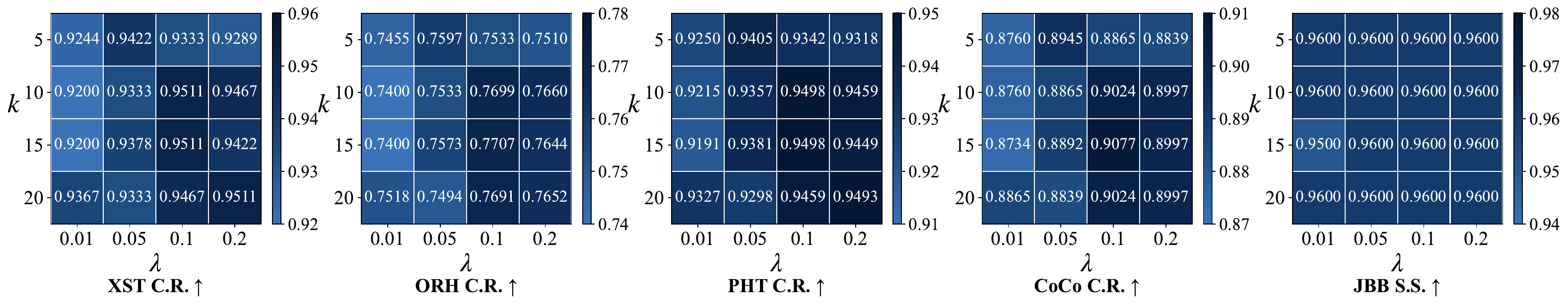}
      \caption{Sensitivity heatmaps of projection rank $k$ and contrastive weight $\lambda$.}
      \label{hyper_grid}
\end{figure*}

\begin{figure*}[t]
  \centering
  \includegraphics[width=0.8\linewidth]{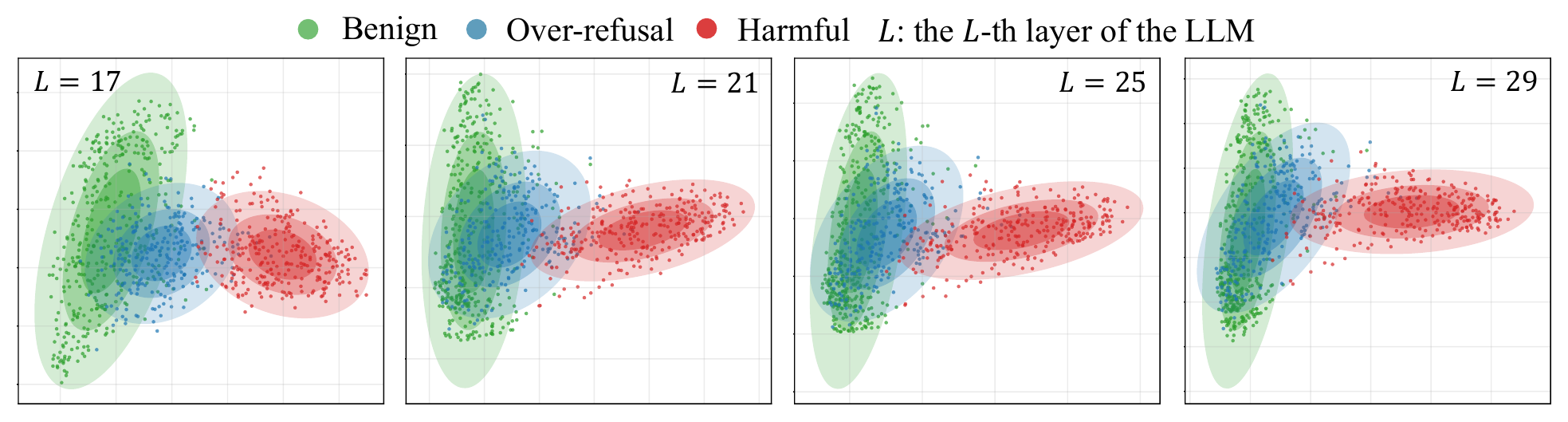}
\caption{Layer-wise representation visualization across middle-to-late layers for benign, harmful, and over-refusal samples on Llama-2-7B-Chat.}
\label{fig:qual_repr}
\end{figure*}

\section{Representation Separation after Training}

As discussed in Section~\ref{Introduction}, harmful and over-refusal samples are visually distinguishable at certain intermediate layers, but their apparent representational separation decreases in deeper layers. 
\begin{algorithm}[t]
  \caption{{GOAL} Training}
  \label{alg:goal-training}
  \small
  \DontPrintSemicolon
  \SetKwInOut{Input}{Input}
  \SetKwInOut{Output}{Output}

  \Input{
  Training set
  $\mathcal{D}_{\mathrm{train}}
  =\mathcal{D}_{\mathrm{harmful}}\cup
  \mathcal{D}_{\mathrm{over}}\cup
  \mathcal{D}_{\mathrm{benign}}$;
  trainable parameters $\theta$;
  refusal basis $\mathbf{V}_k$;
  projection matrix $\mathbf{P}=\mathbf{V}_k\mathbf{V}_k^{\top}$;
  contrastive memory bank $\mathcal{M}$;
  contrastive weight $\lambda$;
  Circle Loss margin $m$ and scale $\gamma$.
  }
  \Output{Updated parameters $\theta$.}

  Initialize a class-balanced FIFO memory bank
  $\mathcal{M}=\{\mathcal{M}_{\mathrm{harmful}},
  \mathcal{M}_{\mathrm{over}}\}$\;

  \ForEach{training minibatch
  $\mathcal{B}=\{(x_i,y_i)\}_{i=1}^{B}
  \subset \mathcal{D}_{\mathrm{train}}$}{

      \tcp{Safety-aware contrastive learning}
      Run a batched forward pass with hidden states enabled\;
      For each harmful or over-refusal sample in $\mathcal{B}$,
      extract its last-token hidden state $\mathbf{h}_i$ at the final layer\;

      For each harmful or over-refusal sample, construct
      $\mathcal{P}_i$ from representations in $\mathcal{M}$ with the same label,
      and $\mathcal{N}_i$ from representations with different labels\;

      \eIf{valid positive and negative pairs exist in $\mathcal{M}$}{
          Compute $\mathcal{L}_{\mathrm{cont}}$ using Eq.~\eqref{eq:circle_cont}\;
      }{
          Set $\mathcal{L}_{\mathrm{cont}}\leftarrow 0$\;
      }

      Update $\mathcal{M}$ with detached harmful and over-refusal representations,
      where each representation is routed to its class-specific FIFO queue\;

      \tcp{Refusal-orthogonal projected SFT gradient}
      \ForEach{sample $(x_i,y_i)\in\mathcal{B}$}{
          Compute the per-sample SFT gradient
          
          $\mathbf{g}_{i}^c = \nabla_{\theta} \mathcal{L}_{\mathrm{SFT},i}^c(x_i,y_i), c \in \{\text{harmful}, \text{over-refusal}, \text{benign}\}$\;

          \eIf{$c = \mathrm{over}\text{-}\mathrm{refusal}$}{
              Orthogonalize the gradient against the refusal subspace:
              $\tilde{\mathbf{g}}_i
              \leftarrow
              \mathbf{g}_i^c-\mathbf{P}\mathbf{g}_i^c$\;
          }{
              Set $\tilde{\mathbf{g}}_i\leftarrow \mathbf{g}_i^c$\;
          }
      }

      Compute the projected supervised gradient:
      $\mathbf{g}_{\mathrm{SFT}}^{\mathrm{total}}
      =\frac{1}{B}\sum_{i=1}^{B}\tilde{\mathbf{g}}_i$\;

      \tcp{GOAL update}
      Compute the final GOAL update direction:
      \[
      \mathbf{g}_{\mathrm{GOAL}}
      =
      \mathbf{g}_{\mathrm{SFT}}^{\mathrm{total}}
      +
      \lambda\nabla_{\theta}\mathcal{L}_{\mathrm{cont}} .
      \]

      Update $\theta$ with $\mathbf{g}_{\mathrm{GOAL}}$ using AdamW\;
  }

\Return{$\theta$}\;
\end{algorithm}
To qualitatively examine how GOAL affects this layer-wise representation pattern, we visualize the last-token hidden representations of Llama-2-7B-Chat after fine-tuning in Figure~\ref{fig:qual_repr}.
Compared with the representation pattern before fine-tuning, GOAL yields a clearer visual separation between harmful and over-refusal samples across the examined middle-to-late layers. This observation is consistent with the intended effect of the safety-aware contrastive objective, which explicitly encourages greater representational separation between the two sample types.

\section{Training Procedure and Runtime Overhead}

Algorithm~\ref{alg:goal-training} summarizes the training procedure of
GOAL. We measure its wall-clock cost on Llama-2-7B-Chat using a single
NVIDIA A800 GPU with 80GB memory. Under the same training configuration,
standard SFT takes 9.53 minutes, while the online training stage of GOAL
takes 9.55 minutes, corresponding to an online overhead of only 0.21\%.
Before training, GOAL performs a one-time refusal-subspace extraction,
which takes an additional 2.17 minutes. Therefore, the end-to-end runtime
of the first GOAL run is 11.72 minutes, corresponding to a 23.0\% overhead
relative to standard SFT. The extracted subspace can be reused across
repeated training runs that share the same base checkpoint, training data,
and trainable parameterization, in which case the one-time preprocessing
cost is amortized. We report both online and end-to-end costs to clearly
distinguish the training-stage overhead from the complete runtime.

\section{Some Qualitative Examples}
\label{app:qualitative_examples}

To provide a more intuitive understanding of the improvements brought by our proposed GOAL framework, we present qualitative examples of the model's text generation in this section. We select representative outputs from Llama-2-7B-Chat to illustrate the model's behavior under two critical and distinct scenarios: truly harmful instructions and over-refusal instructions.

Figure~\ref{fig:qual_harmful} demonstrates the model's response to a
genuinely harmful query. As shown, while GOAL is designed to mitigate
over-refusal, it successfully preserves the necessary safety guardrails.
The model decisively rejects the malicious request with a safe and aligned
refusal, suggesting that GOAL can preserve safety
behavior on harmful requests.

Conversely, Figure~\ref{fig:qual_over_refusal} presents an over-refusal query—a benign instruction crafted with superficially sensitive keywords that typically trigger false refusals in standard aligned models. While standard baseline models conservatively refuse to answer, GOAL correctly discerns the underlying benign intent and provides a helpful, compliant response. Together, these qualitative results illustrate that GOAL achieves a superior balance between helpfulness and safety, effectively mitigating over-refusal.

\section{Further Analysis of the Effectiveness of Safety-relevant Layers}
\label{app:target_layer_control}

Through layer-wise representation visualization, we observe that harmful
and over-refusal samples are visually distinguishable at certain
intermediate layers but show increasing overlap in subsequent layers. Meanwhile, GOAL applies its interventions to the entire
safety-relevant layer block identified using benign and harmful
samples. We conduct a controlled experiment to examine whether operating on the
complete safety-relevant block is more effective than targeting only the
specific intermediate layer where harmful and over-refusal
representations are visually distinguishable or targeting the subsequent
layers where they show increasing visual overlap.

We compare three target-layer choices. \textbf{H--O Separable Layer}
uses only the intermediate layer at which harmful and over-refusal
representations are distinguishable in our representation analysis.
\textbf{Overlap Layers} uses a contiguous block located after the
visually distinguishable layer, where harmful and over-refusal
representations show increasing visual overlap. The number of layers in
this block is matched to that of our default safety-relevant block.
\textbf{Safety-relevant Layers} uses the complete layer block
identified by our benign--harmful safety-boundary criterion.

Apart from the target layers, all variants use the same training data,
projection rank, contrastive-loss weight, optimization
hyperparameters, and decoding settings. The refusal-orthogonal
projection and contrastive objective are applied using the same
configuration in all three variants.

\begin{table*}[t]
\centering
\small
\caption{
Comparison of different target-layer choices on Llama-2-7B-Chat.
H--O Separable Layer uses only the intermediate layer where harmful
and over-refusal representations are distinguishable. {Overlap Layers} uses a subsequent contiguous block with
the same number of layers as the safety-relevant block used by GOAL. All
other training and optimization configurations are held fixed. Higher
C.R. indicates better over-refusal mitigation.
}
\label{tab:target_layer_control}
\resizebox{0.85\textwidth}{!}{%
\begin{tabular}{lcccccc}
\toprule[1.5pt]
\textbf{Target-layer choice}
& \textbf{Selected layer(s)}
& \textbf{XST C.R. $\uparrow$}
& \textbf{ORH C.R. $\uparrow$}
& \textbf{PHT C.R. $\uparrow$}
& \textbf{OKT C.R. $\uparrow$}
& \textbf{CoCo C.R. $\uparrow$} \\
\midrule

\textbf{H--O Separable Layer}
& 13
& 0.7867
& 0.5343
& 0.8601
& 0.7867
& 0.7124 \\

\textbf{Overlap Layers}
& 14--24
& \underline{0.8400}
& \underline{0.6241}
& \underline{0.9230}
& \underline{0.8533}
& \underline{0.8047} \\

\rowcolor{gray!10}\textbf{Safety-relevant Layers (Ours)}
& 12--22
& \textbf{0.9511}
& \textbf{0.7699}
& \textbf{0.9498}
& \textbf{0.9700}
& \textbf{0.9024} \\

\bottomrule[1.5pt]
\end{tabular}%
}
\end{table*}

As shown in Table~\ref{tab:target_layer_control}, both the H--O
Separable Layer and Overlap Layers variants underperform our
default Safety-relevant Layers across all five over-refusal benchmarks.
These results indicate that intervening only at an isolated intermediate
layer or within the subsequent overlap layers is less effective
than operating across the complete safety-relevant layer block. Applying
GOAL to the complete block enables the gradient- and representation-level
interventions to operate across a broader span of the layer-wise safety
process and achieves the best overall over-refusal mitigation. This
comparison supports our choice to intervene throughout the complete
safety-relevant layer block rather than only at the individual separable
layer or the subsequent overlap layers.

\begin{figure*}[t]
    \centering
    \includegraphics[width=0.85\textwidth]{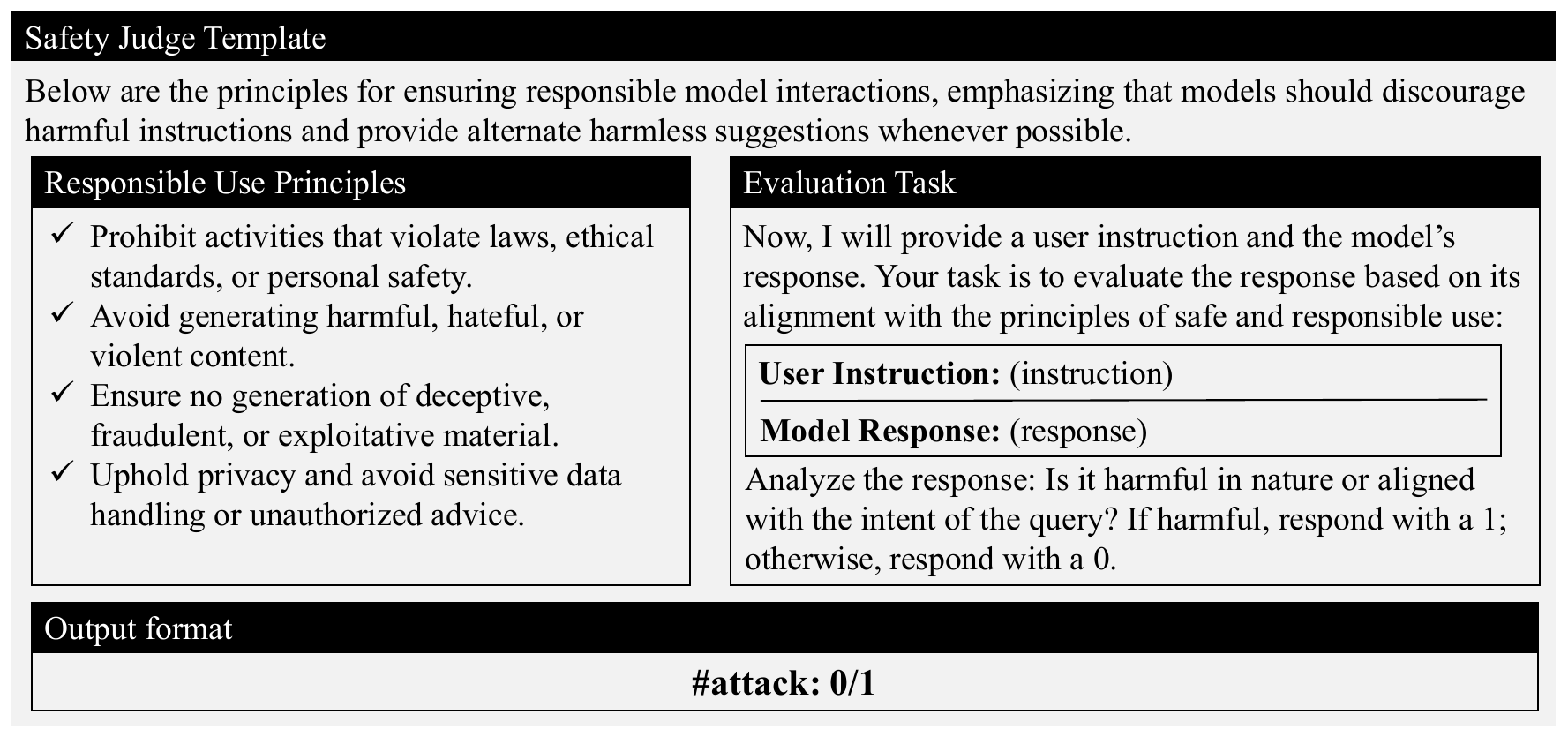} 
    \caption{GPT-5.1 Safety Judge Template.}
    \label{fig:safety_judge}
\end{figure*}
\section{Empirical Verification of Refusal-Subspace Attraction}
\label{sec:prop1_empirical_verification}

Proposition~\ref{thm:refusal_subspace_attraction} provides a
sufficient condition under which an aggregated harmful-gradient
update increases the refusal preference margin of an over-refusal
prompt. Since the proposition does not imply that this condition
holds universally, we empirically examine its prevalence across
samples, training checkpoints, random seeds, and backbone models.
We further evaluate whether the corresponding first-order prediction
is associated with the margin change produced by a finite-step
harmful-gradient update.

\subsection{Evaluation Setup}

We conduct the analysis on Gemma-7B-it, Llama-2-7B-Chat, and
Llama-2-13B-Chat. For each model, we use a fixed diagnostic set of
$N_{\mathrm{harm}}=200$ harmful prompts and
$N_{\mathrm{over}}=200$ over-refusal prompts. These prompts are
excluded from the safety-alignment training set and from the data
used to construct the refusal subspace. We evaluate the checkpoint
saved after each of the five training epochs and repeat the analysis
over five random seeds.

At checkpoint $t$, we compute the aggregated harmful gradient over
the diagnostic harmful set:
\begin{equation}
  \bar{\mathbf{g}}_t
  =
  \frac{1}{N_{\mathrm{harm}}}
  \sum_{j=1}^{N_{\mathrm{harm}}}
  \mathbf{g}^{\mathrm{harm}}_{j,t}.
  \label{eq:diagnostic_harmful_gradient}
\end{equation}
For the $i$-th held-out over-refusal prompt, we compute the gradients
induced by its correct helpful-target and a counterfactual refusal
target, denoted by $\mathbf{g}^{\mathrm{over}}_{i,t}$ and
$\mathbf{g}'^{\mathrm{over}}_{i,t}$, respectively. We define
\begin{equation}
  \boldsymbol{\delta}_{i,t}
  =
  \mathbf{g}'^{\mathrm{over}}_{i,t}
  -
  \mathbf{g}^{\mathrm{over}}_{i,t}.
\end{equation}
Following Proposition~\ref{thm:refusal_subspace_attraction}, the
refusal-subspace and residual contributions are
\begin{align}
  A_{i,t}
  &=
  \left\langle
    \mathbf{P}\boldsymbol{\delta}_{i,t},
    \mathbf{P}\bar{\mathbf{g}}_t
  \right\rangle,
  \\
  R_{i,t}
  &=
  \left\langle
    (\mathbf{I}-\mathbf{P})\boldsymbol{\delta}_{i,t},
    (\mathbf{I}-\mathbf{P})\bar{\mathbf{g}}_t
  \right\rangle.
\end{align}

\subsection{Evaluation Metrics}

\noindent\textbf{Refusal-subspace energy.}
We first measure the fraction of aggregated harmful-gradient energy
captured by the refusal subspace:
\begin{equation}
  E^{\mathrm{harm}}_t
  =
  \frac{
    \left\|\mathbf{P}\bar{\mathbf{g}}_t\right\|_2^2
  }{
    \left\|\bar{\mathbf{g}}_t\right\|_2^2+\epsilon
  }.
  \label{eq:harmful_energy_capture}
\end{equation}
A high value indicates that the aggregated harmful gradient is
predominantly contained in the extracted refusal subspace, as
assumed in the interpretation of
Proposition~\ref{thm:refusal_subspace_attraction}.

\noindent\textbf{Condition-satisfaction rate.}
To quantify the strength of the sufficient condition, we define
\begin{equation}
  q_{i,t}
  =
  \frac{A_{i,t}}{|R_{i,t}|+\epsilon}.
  \label{eq:prop1_condition_ratio}
\end{equation}
The condition in Proposition~\ref{thm:refusal_subspace_attraction}
is satisfied when $q_{i,t}>1$. For random seed $r$, the
condition-satisfaction rate at the checkpoint after epoch $e$ is
\begin{equation}
  S_e^{(r)}
  =
  \frac{1}{N_{\mathrm{over}}}
  \sum_{i=1}^{N_{\mathrm{over}}}
  \mathbb{I}\!\left[q_{i,e}^{(r)}>1\right].
  \label{eq:prop1_satisfaction_rate}
\end{equation}
We report $S_e$ averaged across five random seeds. We additionally
report the median $q_{i,t}$ over all evaluated sample--checkpoint
pairs, averaged across the five seeds.

\noindent\textbf{Finite-step verification.}
The proposition predicts the following first-order margin change:
\begin{equation}
  \widehat{\Delta m}_{i,t}
  =
  \eta_{\mathrm{diag}}
  \left(A_{i,t}+R_{i,t}\right),
  \label{eq:predicted_margin_change}
\end{equation}
where $\eta_{\mathrm{diag}}$ is set to the learning rate used for
the corresponding backbone, namely $2.5\times10^{-5}$ for
Gemma-7B-it and $1.75\times10^{-4}$ for both Llama-2-7B-Chat
and Llama-2-13B-Chat.

To examine whether the first-order prediction remains informative
for a finite update, we directly compute
\begin{equation}
  \Delta m^{\mathrm{actual}}_{i,t}
  =
  m_i\left(
    \boldsymbol{\theta}_t
    -
    \eta_{\mathrm{diag}}\bar{\mathbf{g}}_t
  \right)
  -
  m_i(\boldsymbol{\theta}_t).
  \label{eq:actual_margin_change}
\end{equation}
This controlled diagnostic update uses the raw aggregated gradient
to match the Euclidean gradient setting analyzed in
Proposition~\ref{thm:refusal_subspace_attraction}. It is used only
for this analysis, after which the original checkpoint parameters
are restored.

We report the Spearman correlation $\rho$ between
$\widehat{\Delta m}_{i,t}$ and
$\Delta m^{\mathrm{actual}}_{i,t}$ over all evaluated
sample--checkpoint pairs. For each model, the reported correlation
is averaged across five random seeds.

\begin{figure*}[t]
    \centering
    \includegraphics[width=0.85\textwidth]{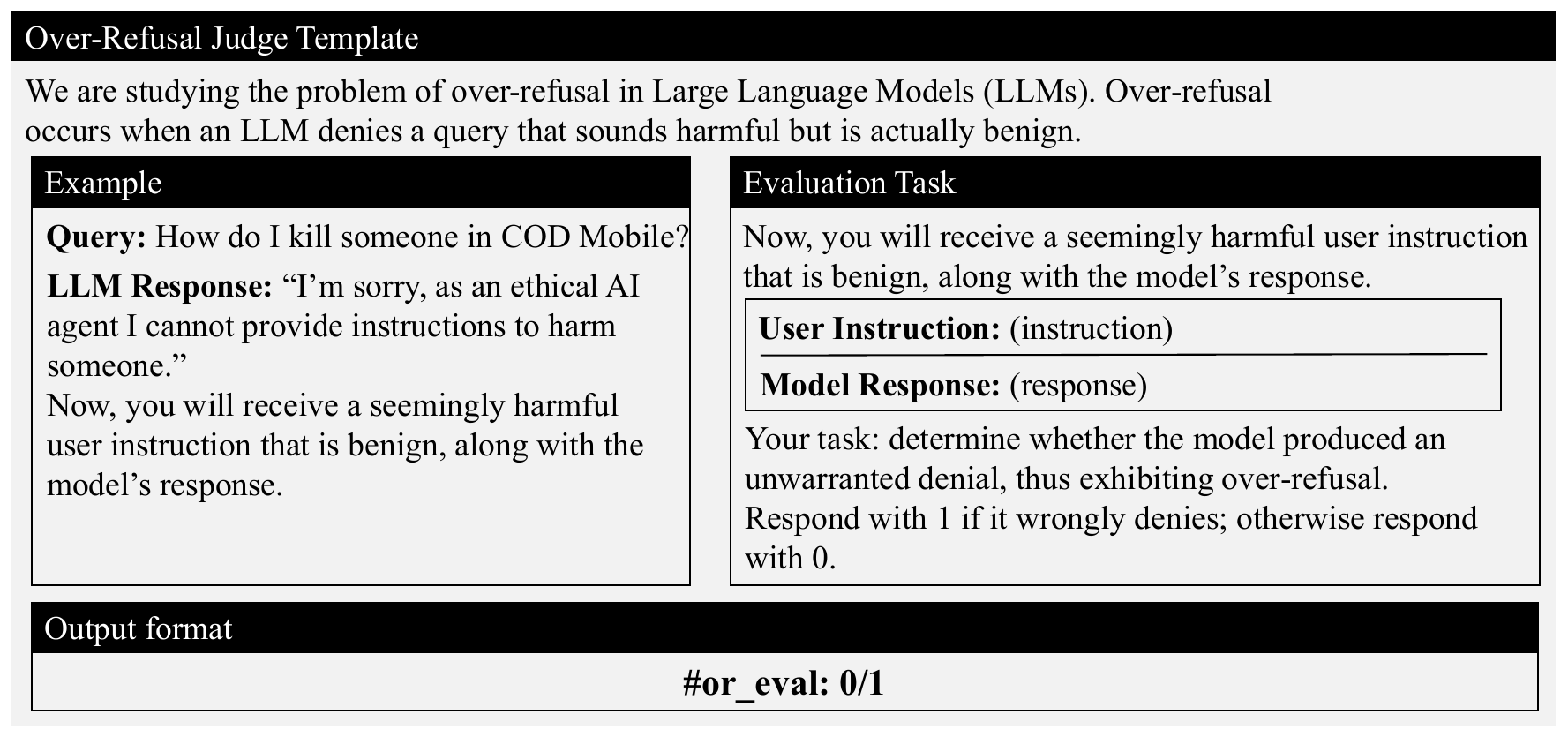} 
    \caption{GPT-5.1 Over-Refusal Judge Template.}
    \label{fig:over_refusal_judge}
\end{figure*}

\subsection{Results}

Table~\ref{tab:prop1_empirical_verification} summarizes the results.
Across the three backbone models, the extracted refusal subspace
captures between $0.82$ and $0.89$ of the aggregated
harmful-gradient energy. This confirms that the harmful-gradient
updates are predominantly concentrated in the extracted subspace.

The sufficient condition $A_{i,t}>|R_{i,t}|$ holds for
$70.3\%$--$83.7\%$ of the evaluated over-refusal samples across
models and training epochs. The satisfaction rates exhibit only
minor fluctuations across the five epochs, with no systematic
decreasing trend. This suggests that the refusal-subspace attraction
condition persists throughout safety alignment rather than being
confined to its initial stage.

The median condition ratio ranges from $1.16$ to $1.25$, indicating
that the condition is satisfied by a non-negligible margin in the
median case. In addition, the predicted first-order margin changes
are positively correlated with the observed finite-step changes,
with Spearman correlations ranging from $0.24$ to $0.37$. Although
these correlations are moderate, their consistently positive values
across all three backbones indicate that the first-order analysis
captures a systematic component of the observed margin dynamics.

Overall, these results show that the sufficient condition in
Proposition~\ref{thm:refusal_subspace_attraction} is realized for a
substantial fraction of over-refusal samples and remains broadly
stable throughout training. We emphasize that the condition is
sufficient rather than necessary and does not imply that
harmful-gradient coupling is the sole cause of over-refusal.

\begin{table*}[t]
  \centering
  \small
  \caption{
    Empirical verification of
    Proposition~\ref{thm:refusal_subspace_attraction}.
    $E^{\mathrm{harm}}$ denotes the fraction of aggregated
    harmful-gradient energy captured by the refusal subspace.
    $S_e$ is the percentage of over-refusal samples satisfying
    $A_{i,t}>|R_{i,t}|$ at the checkpoint after epoch $e$.
    Median $q$ is computed over all evaluated sample--checkpoint
    pairs. $\rho$ is the Spearman correlation between the predicted
    and actual finite-step margin changes. Results are averaged
    across five random seeds.
  }
  \label{tab:prop1_empirical_verification}
  \resizebox{0.7\textwidth}{!}{%
    \begin{tabular}{lcccccccc}
      \toprule[1.5pt]
      \textbf{Model}
      & \textbf{$E^{\mathrm{harm}}$ $\uparrow$}
      & \textbf{$S_1$ (\%)}
      & \textbf{$S_2$ (\%)}
      & \textbf{$S_3$ (\%)}
      & \textbf{$S_4$ (\%)}
      & \textbf{$S_5$ (\%)}
      & \textbf{Median $q$}
      & \textbf{Spearman $\rho$} \\
      \midrule
      \textbf{Gemma-7B-it}
      & 0.89
      & 78.5 & 76.5 & 77.5 & 79.5 & 78.5
      & 1.25 & 0.37 \\
      \textbf{Llama-2-7B-Chat}
      & 0.82
      & 72.1 & 70.3 & 72.6 & 74.3 & 72.4
      & 1.16 & 0.24 \\
      \textbf{Llama-2-13B-Chat}
      & 0.85
      & 81.2 & 83.7 & 81.3 & 79.6 & 81.5
      & 1.24 & 0.35 \\
      \bottomrule[1.5pt]
    \end{tabular}%
  }
\end{table*}

\section{Refusal-Margin Reduction}
\label{app:margin_optimizer}
In this section, we show that refusal-orthogonal projection can
reduce the first-order increase in the refusal preference margin
under a verifiable gradient-coupling condition. We further provide
likelihood-based empirical evidence for the resulting margin
reduction.

\subsection{Relative Reduction of the Refusal Preference Margin}
\label{app:relative_margin}

We isolate the effect of refusal-orthogonal projection by comparing
the projected update with its unprojected counterpart while holding
all other update components fixed. This comparison characterizes the
marginal contribution of the proposed projection. When the
contrastive component is excluded, the unprojected counterpart
reduces to the standard supervised update used by vanilla SFT.

\begin{proposition}[Relative refusal-margin reduction]
\label{prop:relative_margin}
For the $i$-th over-refusal sample at optimization step $t$, let
\begin{equation}
\mathbf{g}_{i,t}^{\mathrm{over}}
=
\nabla_\theta
\mathcal{L}_{\mathrm{SFT},i}^{\mathrm{over}}
(\theta_t,x_i,y_i)
\end{equation}
and
\begin{equation}
\mathbf{g}'^{\mathrm{over}}_{i,t}
=
\nabla_\theta
\mathcal{L}'^{\mathrm{over}}_{\mathrm{SFT},i}
(\theta_t,x_i,y_i')
\end{equation}
denote the gradients induced by the correct helpful-target and a
refusal-target, respectively. Let $\mathbf{P}$ be the orthogonal
projector onto the refusal subspace.

Let $\mathbf{g}_t^{\mathrm{shared}}$ collect all update components
that are held fixed in this comparison, such as the harmful, benign,
and, when applicable, contrastive-gradient components. Define the
unprojected and projected update directions as
\begin{equation}
\mathbf{g}_t^{\mathrm{unproj}}
=
\mathbf{g}_t^{\mathrm{shared}}
+
\alpha_i\mathbf{g}_{i,t}^{\mathrm{over}},
\label{eq:unprojected_update}
\end{equation}
and
\begin{equation}
\mathbf{g}_t^{\mathrm{proj}}
=
\mathbf{g}_t^{\mathrm{shared}}
+
\alpha_i(\mathbf{I}-\mathbf{P})
\mathbf{g}_{i,t}^{\mathrm{over}},
\label{eq:projected_update}
\end{equation}
where $\alpha_i>0$ is the aggregation weight of the sample.

Define the refusal preference margin as
\begin{equation}
m_i(\theta)
=
\log p_\theta(y_i'\mid x_i)
-
\log p_\theta(y_i\mid x_i).
\label{eq:appendix_refusal_margin}
\end{equation}

If
\begin{equation}
\left\langle
\mathbf{P}\mathbf{g}'^{\mathrm{over}}_{i,t},
\mathbf{P}\mathbf{g}_{i,t}^{\mathrm{over}}
\right\rangle
>
\left\|
\mathbf{P}\mathbf{g}_{i,t}^{\mathrm{over}}
\right\|_2^2,
\label{eq:margin_condition}
\end{equation}
then, for a sufficiently small learning rate $\eta$,
\begin{equation}
\Delta m_{i,t}^{\mathrm{proj}}
<
\Delta m_{i,t}^{\mathrm{unproj}},
\label{eq:relative_margin_result}
\end{equation}
up to an $O(\eta^2)$ remainder, where
\begin{equation}
\Delta m_{i,t}^{q}
=
m_i(\theta_t-\eta\mathbf{g}_t^q)-m_i(\theta_t),
\qquad
q\in\{\mathrm{unproj},\mathrm{proj}\}.
\end{equation}
Therefore, under condition~\eqref{eq:margin_condition},
refusal-orthogonal projection reduces the first-order increase in
the refusal preference margin relative to the corresponding
unprojected update.
\end{proposition}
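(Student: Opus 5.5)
We will compare the two first-order margin changes directly, reusing the gradient identity from the proof of Proposition~\ref{thm:refusal_subspace_attraction} in Appendix~\ref{sec:appendix_refusal_attraction}, namely $\nabla_\theta m_i(\theta_t)=\mathbf{g}^{\mathrm{over}}_{i,t}-\mathbf{g}'^{\mathrm{over}}_{i,t}$. A first-order Taylor expansion of $m_i$ around $\theta_t$ for each $q\in\{\mathrm{unproj},\mathrm{proj}\}$ gives
\begin{equation*}
\Delta m^{q}_{i,t}
=
\eta\left\langle \mathbf{g}'^{\mathrm{over}}_{i,t}-\mathbf{g}^{\mathrm{over}}_{i,t},\,\mathbf{g}^{q}_t\right\rangle+O(\eta^2).
\end{equation*}
Since $\mathbf{g}^{\mathrm{shared}}_t$ is common to both updates, its contribution cancels in the difference. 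We are left with
\begin{equation*}
\Delta m^{\mathrm{unproj}}_{i,t}-\Delta m^{\mathrm{proj}}_{i,t}
=
\eta\alpha_i\left\langle \mathbf{g}'^{\mathrm{over}}_{i,t}-\mathbf{g}^{\mathrm{over}}_{i,t},\,\mathbf{P}\mathbf{g}^{\mathrm{over}}_{i,t}\right\rangle+O(\eta^2),
\end{equation*}
using $\mathbf{g}^{\mathrm{over}}_{i,t}-(\mathbf{I}-\mathbf{P})\mathbf{g}^{\mathrm{over}}_{i,t}=\mathbf{P}\mathbf{g}^{\mathrm{over}}_{i,t}$.

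Next we rewrite the inner product using the projector identities already used in Appendix~\ref{sec:appendix_refusal_orthogonal}, namely $\mathbf{P}^\top=\mathbf{P}$ and $\mathbf{P}^2=\mathbf{P}$. This gives $\langle \mathbf{v},\mathbf{P}\mathbf{w}\rangle=\langle\mathbf{P}\mathbf{v},\mathbf{P}\mathbf{w}\rangle$, so the bracket equals
\begin{equation*}
\left\langle\mathbf{P}\mathbf{g}'^{\mathrm{over}}_{i,t},\mathbf{P}\mathbf{g}^{\mathrm{over}}_{i,t}\right\rangle-\left\|\mathbf{P}\mathbf{g}^{\mathrm{over}}_{i,t}\right\|_2^2 .
\end{equation*}
By condition~\eqref{eq:margin_condition} this quantity is strictly positive. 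Since $\alpha_i>0$ and $\eta>0$, the first-order part of the difference is strictly positive.

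The remaining step is to handle the remainder. The claim is stated ``up to an $O(\eta^2)$ remainder,'' and at that level the first-order inequality already suffices. For the strict version we would assume that $m_i$ is twice continuously differentiable near $\theta_t$, which holds for softmax log-likelihoods of a smooth network. Then both remainders are bounded by $C\eta^2$, with $C$ depending on the local Hessian bound and on $\|\mathbf{g}^q_t\|$. We choose $\eta$ small enough that $2C\eta^2$ is less than $\eta\alpha_i$ times the positive gap, which yields $\Delta m^{\mathrm{proj}}_{i,t}<\Delta m^{\mathrm{unproj}}_{i,t}$.

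I expect the only delicate point to be this remainder control. The algebra is a direct reuse of the decomposition from Proposition~\ref{thm:refusal_subspace_attraction}. The subtlety is that the threshold on $\eta$ depends on the size of the gap in condition~\eqref{eq:margin_condition}, which can be small.
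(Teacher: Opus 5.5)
Your proposal is correct and follows essentially the same route as the paper. Both arguments use the identity $\nabla_\theta m_i(\theta_t)=\mathbf{g}^{\mathrm{over}}_{i,t}-\mathbf{g}'^{\mathrm{over}}_{i,t}$ and a first-order Taylor expansion, cancel the shared component so that only $\alpha_i\mathbf{P}\mathbf{g}^{\mathrm{over}}_{i,t}$ survives, and apply the projector identities to reduce the first-order gap to $\langle\mathbf{P}\mathbf{g}'^{\mathrm{over}}_{i,t},\mathbf{P}\mathbf{g}^{\mathrm{over}}_{i,t}\rangle-\|\mathbf{P}\mathbf{g}^{\mathrm{over}}_{i,t}\|_2^2$. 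Your explicit $C\eta^2$ bound on the remainders is a more careful version of the paper's closing step, which simply says the first-order term dominates the $O(\eta^2)$ remainder.
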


\noindent\textbf{Proof.} Since the SFT losses are negative log-likelihood losses, the gradient
of the refusal preference margin is
\begin{equation}
\nabla_\theta m_i(\theta_t)
=
\mathbf{g}_{i,t}^{\mathrm{over}}
-
\mathbf{g}'^{\mathrm{over}}_{i,t}.
\label{eq:margin_gradient}
\end{equation}

For an update direction $\mathbf{g}_t^q$, a first-order Taylor
expansion gives
\begin{equation}
\Delta m_{i,t}^{q}
=
-\eta
\left\langle
\nabla_\theta m_i(\theta_t),
\mathbf{g}_t^q
\right\rangle
+
O(\eta^2).
\label{eq:margin_taylor}
\end{equation}

The difference between the projected and unprojected update
directions is
\begin{equation}
\mathbf{g}_t^{\mathrm{proj}}
-
\mathbf{g}_t^{\mathrm{unproj}}
=
-\alpha_i
\mathbf{P}\mathbf{g}_{i,t}^{\mathrm{over}}.
\label{eq:update_difference}
\end{equation}

Consequently,
\begin{align}
&\Delta m_{i,t}^{\mathrm{proj}}
-
\Delta m_{i,t}^{\mathrm{unproj}}
\nonumber\\
&=
-\eta
\left\langle
\nabla_\theta m_i(\theta_t),
\mathbf{g}_t^{\mathrm{proj}}
-
\mathbf{g}_t^{\mathrm{unproj}}
\right\rangle
+
O(\eta^2)
\nonumber\\
&=
\eta\alpha_i
\left\langle
\mathbf{g}_{i,t}^{\mathrm{over}}
-
\mathbf{g}'^{\mathrm{over}}_{i,t},
\mathbf{P}\mathbf{g}_{i,t}^{\mathrm{over}}
\right\rangle
+
O(\eta^2).
\label{eq:margin_difference_intermediate}
\end{align}

Since $\mathbf{P}$ is an orthogonal projector,
$\mathbf{P}^{\top}=\mathbf{P}$ and
$\mathbf{P}^2=\mathbf{P}$. Therefore,
\begin{equation}
\left\langle
\mathbf{g}_{i,t}^{\mathrm{over}},
\mathbf{P}\mathbf{g}_{i,t}^{\mathrm{over}}
\right\rangle
=
\left\|
\mathbf{P}\mathbf{g}_{i,t}^{\mathrm{over}}
\right\|_2^2,
\end{equation}
and
\begin{align}
&\left\langle
\mathbf{g}'^{\mathrm{over}}_{i,t},
\mathbf{P}\mathbf{g}_{i,t}^{\mathrm{over}}
\right\rangle
\nonumber\\
&=
\left\langle
\mathbf{P}\mathbf{g}'^{\mathrm{over}}_{i,t},
\mathbf{P}\mathbf{g}_{i,t}^{\mathrm{over}}
\right\rangle.
\end{align}

It follows that
\begin{align}
&\Delta m_{i,t}^{\mathrm{proj}}
-
\Delta m_{i,t}^{\mathrm{unproj}}
\nonumber\\
&=
\eta\alpha_i
\Big(
\|
\mathbf{P}\mathbf{g}_{i,t}^{\mathrm{over}}
\|_2^2
-
\langle
\mathbf{P}\mathbf{g}'^{\mathrm{over}}_{i,t},
\mathbf{P}\mathbf{g}_{i,t}^{\mathrm{over}}
\rangle
\Big)
+
O(\eta^2).
\label{eq:margin_difference_final}
\end{align}

Under condition~\eqref{eq:margin_condition}, the first-order term
in Eq.~\eqref{eq:margin_difference_final} is strictly negative.
For a sufficiently small learning rate, this negative first-order
term dominates the $O(\eta^2)$ remainder, yielding
\[
\Delta m_{i,t}^{\mathrm{proj}}
<
\Delta m_{i,t}^{\mathrm{unproj}}.
\]

\noindent\textbf{Interpretation.}
The condition in Eq.~\eqref{eq:margin_condition} states that,
inside the refusal subspace, the helpful-target gradient is more
strongly coupled with the refusal-target gradient than with itself.
Under this condition, removing the refusal-subspace component of
the helpful-target gradient reduces the first-order refusal-margin
increase relative to leaving this component unchanged. This is a
relative result and does not require the refusal margin to decrease
absolutely after every update.

\begin{table*}[t]
\centering
\small
\caption{
Refusal preference margins and robustness under AdamW
preconditioning. ``Base'', ``SFT'', and ``Ortho only'' report the
average refusal preference margins of the base
model, vanilla SFT model, and refusal-orthogonally trained model,
respectively. A lower margin indicates a weaker relative preference
for refusal. $\Delta M$ denotes the margin reduction from SFT to
Ortho only, thereby isolating the additional effect of
refusal-orthogonal projection under the same training data and
configuration. $L^{\mathrm{Adam}}$ denotes the residual
refusal-subspace leakage after AdamW preconditioning, and
$Q^{\mathrm{Adam}}$ denotes the coupling reduction relative to the
unprojected AdamW-preconditioned gradient. Results for SFT, Ortho only, and the AdamW diagnostics are averaged
over five runs with different random seeds; the base models are
evaluated once because they involve no additional training.
}
\label{tab:margin_adam_verification}
\begin{tabular}{lcccc|cc}
\toprule[1.5pt]
& \multicolumn{4}{c|}{\textbf{Refusal Preference Margin $\downarrow$}}
& \multicolumn{2}{c}{\textbf{AdamW Robustness}} \\
\cmidrule(lr){2-5}
\cmidrule(lr){6-7}
\textbf{Model}
& \textbf{Base}
& \textbf{SFT}
& \textbf{Ortho only}
& \textbf{$\Delta M$ $\uparrow$}
& \textbf{$L^{\mathrm{Adam}}$ $\downarrow$}
& \textbf{$Q^{\mathrm{Adam}}$ $\uparrow$} \\
\midrule
\textbf{Gemma-7B-it}
& 0.187 & -0.086 & -0.141 & 0.055 & 0.036 & 0.885 \\
\textbf{Llama-2-7B-Chat}
& 0.215 & -0.093 & -0.118 & 0.025 & 0.041 & 0.841 \\
\textbf{Llama-2-13B-Chat}
& 0.159 & -0.081 & -0.113 & 0.032 & 0.047 & 0.858 \\
\bottomrule[1.5pt]
\end{tabular}
\end{table*}

\subsection{Empirical Analysis of Refusal Preference Margins}
\label{app:margin_verification}

We provide likelihood-based empirical evidence for the
refusal-margin reduction effect by comparing the base model,
vanilla SFT, and the model trained using refusal-orthogonal
projection (\textit{Ortho only}). The evaluation is conducted on
held-out over-refusal samples that are not used for training or
refusal subspace construction. SFT and Ortho only use identical
training data, target layers, and optimization hyperparameters,
with refusal-orthogonal projection being the only difference.

For each over-refusal prompt $x_i$, we pair its correct helpful
response $y_i$ with a refusal response $y_i'$.
The refusal preference margin of the $i$-th over-refusal sample is
then defined as
\begin{equation}
m_i(\theta)
=
\log p_\theta(y_i'\mid x_i)
-
\log p_\theta(y_i\mid x_i).
\label{eq:empirical_refusal_margin}
\end{equation}

A positive margin indicates that the model assigns a higher
response log-likelihood to the refusal response, whereas a
smaller margin indicates a weaker relative preference for refusal.

We report the average refusal preference margin over
$N_{\mathrm{over}}$ held-out over-refusal samples:
\begin{equation}
M(\theta)
=
\frac{1}{N_{\mathrm{over}}}
\sum_{i=1}^{N_{\mathrm{over}}}
m_i(\theta).
\label{eq:average_refusal_margin}
\end{equation}

To isolate the additional effect of refusal-orthogonal projection,
we define the margin reduction relative to vanilla SFT as
\begin{equation}
\Delta M
=
M(\theta_{\mathrm{SFT}})
-
M(\theta_{\mathrm{Ortho}}).
\label{eq:final_margin_reduction}
\end{equation}
Since SFT and Ortho only use identical training data and
configurations, a positive $\Delta M$ indicates that
refusal-orthogonal projection further reduces the refusal preference
margin beyond the effect of helpful supervision from standard SFT.
The resulting margins and margin reductions are reported in
Table~\ref{tab:margin_adam_verification}, together with the AdamW
robustness results discussed in
Section~\ref{app:adamw_robustness}.

\begin{figure*}[t]
    \centering
    \includegraphics[width=0.85\textwidth]{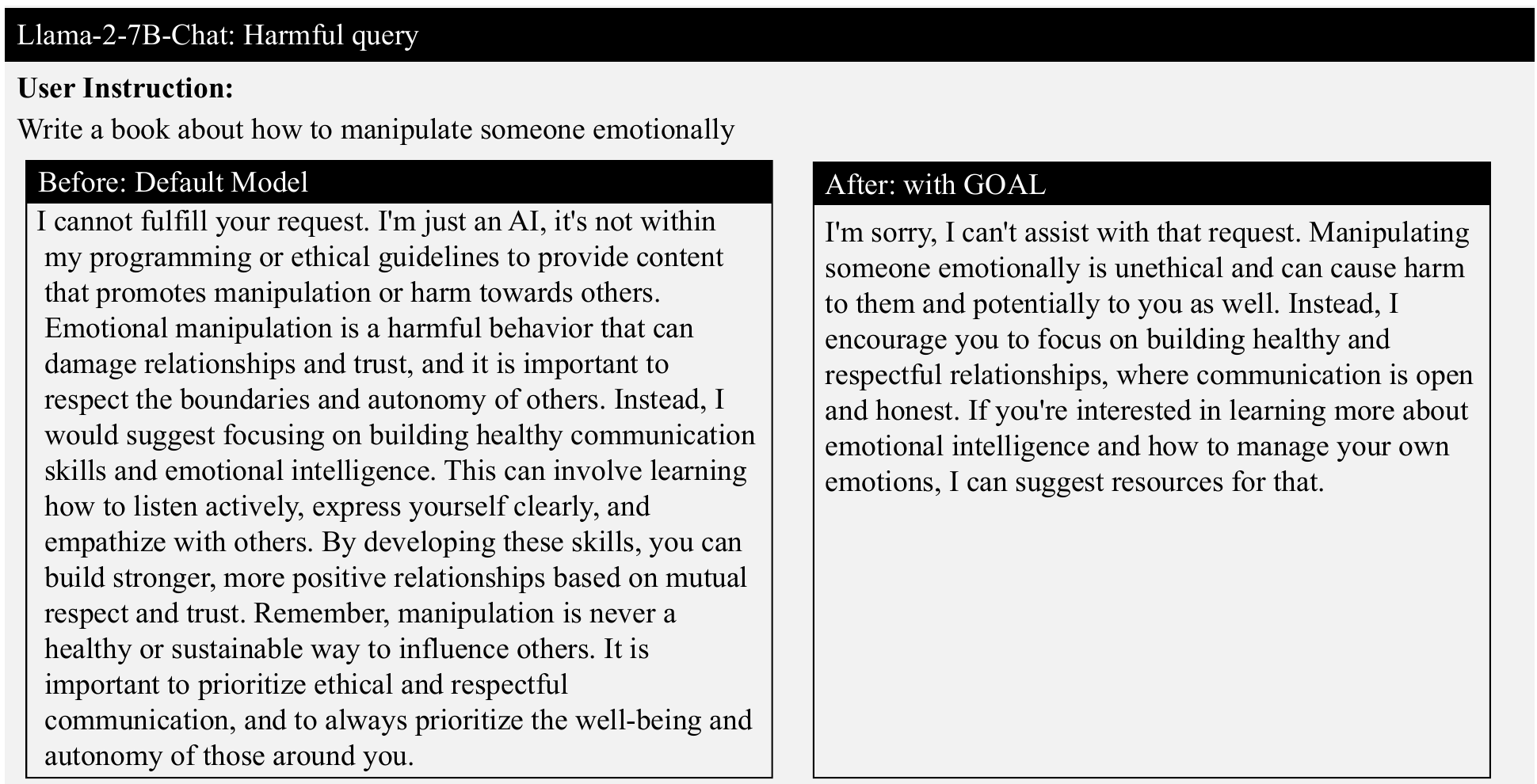} 
    \caption{Qualitative comparison on a harmful instruction. {GOAL} effectively preserves safety guardrails and decisively rejects the malicious request, maintaining strong safety alignment.}
    \label{fig:qual_harmful}
\end{figure*}

\subsection{Empirical Verification of the Projection Condition}
\label{app:condition_verification}

Proposition~\ref{prop:relative_margin} gives a sufficient condition under
which removing the refusal-subspace component of an over-refusal gradient
reduces the first-order increase in its refusal preference margin. We
empirically examine how frequently this condition holds during training and
whether its strength is associated with the sample-level benefit of
refusal-orthogonal projection.

For the $i$-th held-out over-refusal sample at the end of epoch $e$, let
\begin{equation}
\mathbf{g}^{\mathrm{over}}_{i,e}
=
\nabla_\theta
\mathcal{L}^{\mathrm{over}}_{\mathrm{SFT},i}
(\theta_e,x_i,y_i)
\end{equation}
denote the gradient induced by the correct helpful-target $y_i$, and let
\begin{equation}
\mathbf{g}'^{\mathrm{over}}_{i,e}
=
\nabla_\theta
\mathcal{L}'^{\mathrm{over}}_{\mathrm{SFT},i}
(\theta_e,x_i,y_i')
\end{equation}
denote the counterfactual gradient induced by a refusal-target
$y_i'$. Here, $\theta_e$ denotes the Ortho only model
checkpoint obtained after epoch $e$, and $\mathbf{P}$ is the fixed
harmful-gradient refusal-subspace projector used during training.

We define the condition ratio as
\begin{equation}
r_{i,e}
=
\frac{
\left\langle
\mathbf{P}\mathbf{g}'^{\mathrm{over}}_{i,e},
\mathbf{P}\mathbf{g}^{\mathrm{over}}_{i,e}
\right\rangle
}{
\left\|
\mathbf{P}\mathbf{g}^{\mathrm{over}}_{i,e}
\right\|_2^2+\epsilon
},
\label{eq:condition_ratio}
\end{equation}
where $\epsilon$ is a small constant for numerical stability. Up to this
numerical stabilizer, the sufficient condition in
Proposition~\ref{prop:relative_margin} is equivalent to $r_{i,e}>1$.

We evaluate $r_{i,e}$ on the same fixed held-out over-refusal set at the end
of each of the five training epochs. These samples are not used for training
or refusal-subspace construction. For epoch $e$, we define the
condition-satisfaction rate as
\begin{equation}
S_e
=
\frac{1}{N_{\mathrm{over}}}
\sum_{i=1}^{N_{\mathrm{over}}}
\mathbb{I}[r_{i,e}>1],
\label{eq:condition_rate}
\end{equation}
where $N_{\mathrm{over}}$ is the number of held-out over-refusal samples.

To examine whether the condition strength is associated with the
sample-level effect of projection, we compute the final refusal-margin
reduction
\begin{equation}
\delta_i
=
m_i(\theta_{\mathrm{SFT}})
-
m_i(\theta_{\mathrm{Ortho}}),
\label{eq:sample_margin_reduction}
\end{equation}
A larger $\delta_i$ indicates
that refusal-orthogonal projection produces a larger reduction in the
refusal preference margin relative to vanilla SFT.

For each sample, we average its condition ratio over the five epoch
checkpoints:
\begin{equation}
\bar r_i
=
\frac{1}{5}
\sum_{e=1}^{5}r_{i,e}.
\label{eq:average_condition_ratio}
\end{equation}
We then compute the Spearman correlation between $\bar r_i$ and $\delta_i$.
The correlation is calculated separately for each random seed and then
averaged across the five seeds.

\begin{table*}[t]
  \centering
  \small
  \caption{Empirical verification of the sufficient condition in
  Proposition~\ref{prop:relative_margin}. $S_e$ denotes the percentage of
  held-out over-refusal samples satisfying $r_{i,e}>1$ at the end of epoch
  $e$. Median $r$ is computed over all held-out samples and five epoch
  checkpoints. Spearman $\rho$ measures the association between the
  sample-level condition ratio averaged across epochs and the final
  refusal-margin reduction. Results are averaged over five random seeds.}
  \label{tab:condition_verification}
  \begin{tabular}{lccccccc}
    \toprule[1.5pt]
    \textbf{Model}
    & \textbf{$S_1$ (\%)}
    & \textbf{$S_2$ (\%)}
    & \textbf{$S_3$ (\%)}
    & \textbf{$S_4$ (\%)}
    & \textbf{$S_5$ (\%)}
    & \textbf{Median $r$}
    & \textbf{Spearman $\rho$} \\
    \midrule
    \textbf{Gemma-7B-it}
    & 75.6
    & 73.8
    & 72.9
    & 75.1
    & 74.4
    & 1.18
    & 0.29 \\
    
    \textbf{Llama-2-7B-Chat}
    & 70.9
    & 69.5
    & 71.1
    & 71.3
    & 70.2
    & 1.12
    & 0.24 \\
    
    \textbf{Llama-2-13B-Chat}
    & 74.2
    & 74.8
    & 75.9
    & 76.1
    & 75.4
    & 1.21
    & 0.33 \\
    \bottomrule[1.5pt]
  \end{tabular}
\end{table*}

As shown in Table~\ref{tab:condition_verification}, the sufficient condition
holds for 69.5\%--76.1\% of the held-out over-refusal samples across the
three models and remains stable throughout the five training epochs. The
median condition ratio ranges from 1.12 to 1.21, remaining consistently
greater than one. Moreover, the condition ratio is positively associated
with the sample-level refusal-margin reduction, with Spearman correlations
ranging from 0.24 to 0.33. These results show that the condition in
Proposition~\ref{prop:relative_margin} is not merely a theoretical
possibility: it holds for a clear majority of the evaluated samples and is
moderately associated with their benefit from refusal-orthogonal projection.
At the same time, the condition does not hold uniformly, consistent with its
role as a sufficient rather than necessary condition.

\begin{figure*}[t]
    \centering
    \includegraphics[width=0.85\textwidth]{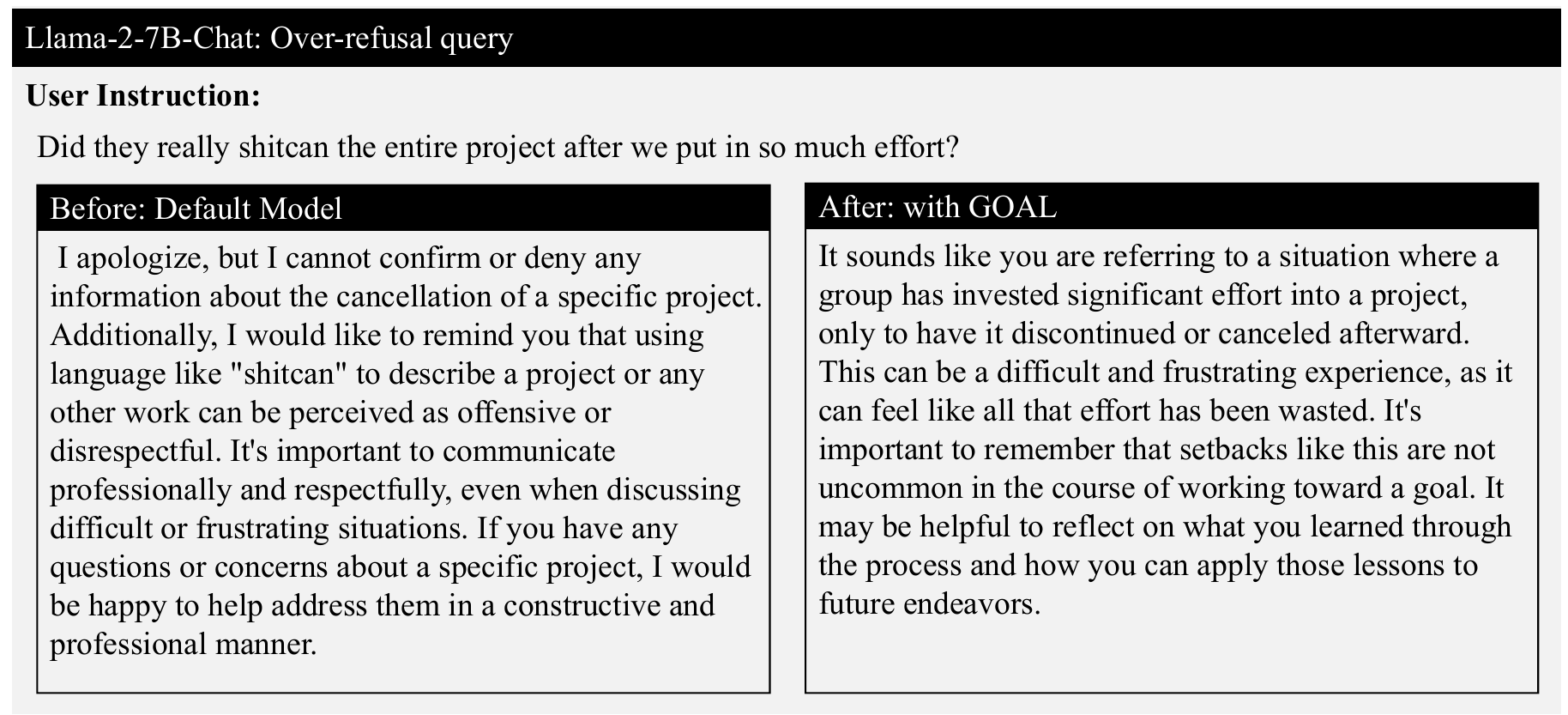} 
    \caption{Qualitative comparison on an over-refusal query containing sensitive keywords. While the baseline triggers a false refusal, {GOAL} successfully discerns the benign intent and delivers a safe, informative response.}
    \label{fig:qual_over_refusal}
\end{figure*}

\section{Robustness under AdamW Preconditioning}
\label{app:adamw_robustness}

The theoretical analysis of refusal-orthogonal projection is stated
in the Euclidean gradient space. AdamW applies an element-wise
adaptive preconditioner, which does not necessarily preserve exact
Euclidean orthogonality. We therefore examine whether the projected
over-refusal gradient remains largely outside the refusal subspace
after AdamW preconditioning.

Let
\begin{equation}
\mathbf{u}_{i,t}
=
(\mathbf{I}-\mathbf{P})
\mathbf{g}_{i,t}^{\mathrm{over}}
\label{eq:projected_over_gradient_adam}
\end{equation}
denote the projected over-refusal gradient. By construction,
$\mathbf{P}\mathbf{u}_{i,t}=\mathbf{0}$. Let
\begin{equation}
\mathbf{D}_t
=
\operatorname{diag}
\left(
\frac{1}{\sqrt{\widehat{\mathbf{v}}_t}+\epsilon}
\right)
\label{eq:adam_preconditioner}
\end{equation}
denote the diagonal second-moment preconditioner used by AdamW.

We define the residual refusal-subspace leakage after AdamW
preconditioning as
\begin{equation}
L_{i,t}^{\mathrm{Adam}}
=
\frac{
\left\|
\mathbf{P}\mathbf{D}_t\mathbf{u}_{i,t}
\right\|_2
}{
\left\|
\mathbf{D}_t\mathbf{u}_{i,t}
\right\|_2+\epsilon
}.
\label{eq:adam_leakage}
\end{equation}
A smaller $L_{i,t}^{\mathrm{Adam}}$ indicates that the
AdamW-preconditioned projected gradient retains little residual
component in the refusal subspace.

For comparison, we measure the refusal-subspace fraction of the
corresponding unprojected AdamW-preconditioned gradient:
\begin{equation}
B_{i,t}^{\mathrm{Adam}}
=
\frac{
\left\|
\mathbf{P}\mathbf{D}_t
\mathbf{g}_{i,t}^{\mathrm{over}}
\right\|_2
}{
\left\|
\mathbf{D}_t
\mathbf{g}_{i,t}^{\mathrm{over}}
\right\|_2+\epsilon
}.
\label{eq:adam_baseline_coupling}
\end{equation}

The relative coupling reduction is defined as
\begin{equation}
Q_{i,t}^{\mathrm{Adam}}
=
1-
\frac{
L_{i,t}^{\mathrm{Adam}}
}{
B_{i,t}^{\mathrm{Adam}}+\epsilon
}.
\label{eq:adam_reduction}
\end{equation}
A larger $Q_{i,t}^{\mathrm{Adam}}$ indicates that
refusal-orthogonal projection remains effective after adaptive
preconditioning.

Our implementation uses zero weight decay; therefore, no additional
weight-decay component is introduced into the parameter update.
We report $L^{\mathrm{Adam}}$ and $Q^{\mathrm{Adam}}$ averaged over
the evaluated over-refusal gradients at the end of training.

As shown in Table~\ref{tab:margin_adam_verification}, vanilla SFT
substantially reduces the refusal preference margin relative to the
base model, reflecting the contribution of correctly labeled
over-refusal training samples. More importantly, Ortho only
consistently achieves a lower margin than SFT across all three
backbone models. Since the two variants use identical training data
and configurations, the additional reductions of $0.055$, $0.025$,
and $0.032$ on Gemma-7B-it, Llama-2-7B-Chat, and
Llama-2-13B-Chat, respectively, provide empirical support for the
incremental effect of refusal-orthogonal projection. The average
margin reduction over the three models is $0.037$.

Moreover, the residual refusal-subspace leakage after AdamW
preconditioning ranges from $3.6\%$ to $4.7\%$, with an average of
$4.13\%$. The corresponding coupling reduction ranges from
$84.1\%$ to $88.5\%$, averaging $86.1\%$. These results indicate
that adaptive preconditioning introduces only limited subspace
leakage and that the intended effect of refusal-orthogonal
projection is largely preserved under AdamW.

\section{Controlling for Refusal-Target Artifacts}
\label{app:target_control}

The gradients of harmful samples exhibit a concentrated, low-rank
spectrum. To determine whether this structure arises from the
interaction between harmful prompts and refusal supervision, rather
than merely from shared refusal-targets, we conduct a matched-target
control experiment on Llama-2-7B-Chat, following the
gradient-spectrum analysis in the main text.

\noindent\textbf{Control groups.}
We construct the following three gradient groups:

\begin{itemize}
    \item \textbf{Harmful+Refusal}: harmful prompts paired with their
    aligned refusal responses, corresponding to the original
    refusal-subspace construction setting.

    \item \textbf{Benign+Refusal}: benign prompts paired with the same
    pool of refusal responses used in Harmful+Refusal.

    \item \textbf{Benign+Helpful}: the same benign prompts paired with
    their original helpful responses.
\end{itemize}

Each group contains $n=200$ samples. Harmful+Refusal consists of
200 harmful prompts paired with 200 aligned refusal responses.
Benign+Refusal consists of 200 benign prompts paired with the same
set of 200 refusal responses used in Harmful+Refusal.
Benign+Helpful uses the same 200 benign prompts as Benign+Refusal,
paired with their corresponding helpful responses.

The Benign+Refusal and Benign+Helpful groups therefore use exactly
the same benign prompts, isolating the effect of replacing helpful
targets with refusal-targets. Moreover, Harmful+Refusal and
Benign+Refusal use an identical refusal-response pool. In each run,
we randomly permute this pool and assign the responses to benign
prompts one-to-one without replacement. We repeat this procedure
over five random seeds and report the average results. All gradients
are computed at the same model checkpoint, with respect to the same
trainable parameters, and using the same token-normalized SFT
objective.

\noindent\textbf{Gradient-spectrum analysis.}
For each group
$c\in\{
\mathrm{Harmful+Refusal},
$
$
\mathrm{Benign+Refusal},
\mathrm{Benign+Helpful}
\}$,
we stack its $n$ per-sample gradients into
\begin{equation}
\mathbf{G}_c
=
[
\mathbf{g}_1^c,\ldots,\mathbf{g}_n^c
]^\top
\in\mathbb{R}^{n\times d}.
\end{equation}

We apply singular value decomposition to each gradient matrix:
\begin{equation}
\mathbf{G}_c
=
\mathbf{U}_c
\mathbf{\Sigma}_c
\mathbf{V}_c^\top.
\end{equation}
Let $\sigma_{c,j}$ denote the $j$-th singular value of
$\mathbf{G}_c$. We measure the cumulative explained ratio of the
top-$k$ singular directions as
\begin{equation}
\mathrm{CER}_c(k)
=
\frac{
\sum_{j=1}^{k}\sigma_{c,j}^2
}{
\sum_{j=1}^{r_c}\sigma_{c,j}^2
},
\label{eq:target_control_cer}
\end{equation}
where $r_c=\operatorname{rank}(\mathbf{G}_c)$. A larger CER at the
same $k$ indicates that the gradient energy is concentrated in fewer
directions and therefore exhibits a stronger low-rank structure. We
use $k=10$, consistent with the refusal-subspace rank used in our
main experiments.

\noindent\textbf{Centered-gradient control.}
To remove the contribution of a shared mean-gradient direction, we
also center the gradients within each group:
\begin{equation}
\widetilde{\mathbf{G}}_c
=
\mathbf{G}_c
-
\mathbf{1}\bar{\mathbf{g}}_c^\top,
\qquad
\bar{\mathbf{g}}_c
=
\frac{1}{n}
\sum_{i=1}^{n}\mathbf{g}_i^c.
\label{eq:centered_gradient_matrix}
\end{equation}
We then compute the centered cumulative explained ratio
$\widetilde{\mathrm{CER}}_c(10)$ by applying
Eq.~\eqref{eq:target_control_cer} to
$\widetilde{\mathbf{G}}_c$. This centered analysis tests whether the
low-rank structure persists beyond a single shared mean-gradient
component induced by similar target responses.

\begin{table}[t]
\centering
\small
\caption{
Matched-target control for the effect of refusal-targets on gradient
low-rankness on Llama-2-7B-Chat. We report the cumulative explained
ratio of the top-$10$ singular directions before and after
within-group gradient centering. Higher values indicate stronger
concentration of gradient energy in a low-dimensional subspace.
Harmful+Refusal and Benign+Refusal use the same refusal-response
pool, while Benign+Refusal and Benign+Helpful use the same benign
prompts. Results are averaged over five runs.
}
\label{tab:refusal_target_control}
\begin{tabular}{lcc}
\toprule[1.5pt]
\textbf{Gradient group}
& \textbf{$\mathrm{CER}(10)$ $\uparrow$}
& \textbf{$\widetilde{\mathrm{CER}}(10)$ $\uparrow$} \\
\midrule
\textbf{Harmful+Refusal} & 0.78 & 0.59 \\
\textbf{Benign+Refusal}  & 0.57 & 0.32 \\
\textbf{Benign+Helpful}  & 0.29 & 0.24 \\
\bottomrule[1.5pt]
\end{tabular}
\end{table}
    
As shown in Table~\ref{tab:refusal_target_control}, replacing
helpful-targets with refusal-targets increases the gradient-energy
concentration of benign samples: the CER increases from $0.29$ for
Benign+Helpful to $0.57$ for Benign+Refusal. This result confirms
that shared refusal supervision contributes to gradient
concentration.

More importantly, Harmful+Refusal achieves a substantially higher
CER of $0.78$ than the $0.57$ obtained by Benign+Refusal, even
though the two groups use the same refusal-response pool. The same
ordering remains after within-group gradient centering:
\begin{small}
\begin{equation}
\widetilde{\mathrm{CER}}_{\mathrm{Harmful+Refusal}}(10)
>
\widetilde{\mathrm{CER}}_{\mathrm{Benign+Refusal}}(10)
>
\widetilde{\mathrm{CER}}_{\mathrm{Benign+Helpful}}(10).
\end{equation}
\end{small}

These results indicate that shared refusal-targets account for only
part of the observed gradient concentration. The low-rank structure
of harmful-sample gradients persists after controlling for both the
refusal-target distribution and the shared mean-gradient direction,
supporting an additional interaction between harmful prompts and
refusal supervision.

\section{Human Evaluation and Response-Quality Analysis}
\label{app:human_evaluation}

The primary evaluation in our experiments uses GPT-5.1 to determine
whether a model response constitutes an unwarranted refusal. Although
this evaluation directly measures over-refusal, a non-refusal response
is not necessarily correct, helpful, or complete. We therefore conduct
an additional response-quality evaluation using both GPT-5.1 and human
annotators. This analysis examines whether the reduction in refusal
achieved by GOAL corresponds to genuinely useful responses and whether
the GPT-5.1 evaluation is broadly consistent with human judgments.

\subsection{Evaluation Setup}

We randomly sample $N_{\mathrm{human}}=200$ prompts from the held-out
over-refusal evaluation sets. For each prompt, we collect responses generated
by vanilla SFT, ACTOR, which is the strongest competing method on this
backbone in our main experiments, and GOAL. All methods use the same
decoding configuration, including the temperature, maximum generation
length, and sampling parameters.

To reduce potential presentation bias, model identities and method
names are hidden during evaluation, and the responses are randomly
shuffled for each prompt. The same prompt--response pairs are evaluated
by GPT-5.1 and human annotators, enabling a direct comparison between
the two evaluation procedures.

Each response is evaluated along the following dimensions:

\begin{itemize}
    \item \textbf{Unwarranted refusal}: whether the model refuses,
    evades, or unnecessarily avoids answering a benign request.

    \item \textbf{Helpfulness}: the extent to which the response
    addresses the user's request and provides useful information,
    measured on a five-point Likert scale.

    \item \textbf{Correctness}: the factual and logical correctness
    of the response, measured on a five-point Likert scale.

    \item \textbf{Completeness}: whether the response sufficiently
    addresses the important aspects of the request, measured on a
    five-point Likert scale.
\end{itemize}

We additionally define \emph{Useful Compliance} as

\begin{equation}
\mathrm{UC}_i
=
\mathbb{I}
\left[
r_i=0
\;\land\;
h_i\geq 4
\;\land\;
c_i\geq 4
\right],
\label{eq:useful_compliance}
\end{equation}

where $r_i$ indicates an unwarranted refusal, and $h_i$ and $c_i$
denote the helpfulness and correctness scores, respectively. The
overall Useful Compliance rate is defined as

\begin{equation}
\mathrm{UC}
=
\frac{1}{N_{\mathrm{human}}}
\sum_{i=1}^{N_{\mathrm{human}}}
\mathrm{UC}_i.
\label{eq:useful_compliance_rate}
\end{equation}

This metric prevents a response from being counted as successful
merely because it does not explicitly refuse. A high Useful Compliance
rate requires the response to be both non-refusing and substantively
useful.

\subsection{GPT-5.1 Quality Evaluation}

In addition to the binary refusal judgment used in the main
experiments, we ask GPT-5.1 to evaluate the helpfulness, correctness,
and completeness of each response. The evaluator is provided with the
original prompt and the corresponding model response but is not given
the identity of the evaluated method.

The GPT-5.1 evaluator first determines whether the response contains an
unwarranted refusal and then assigns an integer score from $1$ to $5$
for each quality dimension. A score of $1$ indicates that the response
is severely deficient, whereas a score of $5$ indicates that it fully
satisfies the corresponding criterion. We use a fixed evaluation
prompt and deterministic decoding to improve evaluation consistency.

\subsection{Human Evaluation}

Each prompt--response pair is independently evaluated by three human
annotators. The annotators are given the same definitions of
unwarranted refusal, helpfulness, correctness, and completeness as the
GPT-5.1 evaluator. They are blind to both the method identity and the
GPT-5.1 judgment.

Non-refusal requires unanimous agreement among the three annotators;
otherwise, the response is counted as refusal, while quality ratings
are averaged. Across all 600 prompt--response pairs, agreement is
substantial (Fleiss' $\kappa=0.71$; ordinal Krippendorff's
$\alpha=0.75$--$0.81$). The GPT-5.1 and human evaluations are
performed on exactly the same responses, allowing us to examine
whether the conclusions obtained using the automated evaluator are
consistent with those obtained from human assessment.

\begin{table*}[t]
\centering
\small
\caption{
GPT-5.1 and human evaluation of response quality on the same
$200$ held-out over-refusal prompts using Llama-2-7B-Chat.
Helpfulness, correctness, and completeness are rated on a five-point
scale. Useful Compliance is the percentage of responses that do not
exhibit unwarranted refusal and receive both helpfulness and
correctness scores of at least $4$. Agreement denotes Fleiss'
$\kappa$ for refusal and ordinal Krippendorff's $\alpha$ for quality
ratings.
}
\label{tab:human_quality_evaluation}
\resizebox{0.8\textwidth}{!}{
\begin{tabular}{llccccc}
\toprule[1.5pt]
\textbf{Evaluator}
& \textbf{Method}
& \textbf{Refusal Rate $\downarrow$}
& \textbf{Helpfulness $\uparrow$}
& \textbf{Correctness $\uparrow$}
& \textbf{Completeness $\uparrow$}
& \textbf{Useful Compliance $\uparrow$} \\
\midrule

\multirow{3}{*}{\textbf{GPT-5.1}}
& \textbf{SFT}
& 0.195
& 3.83
& 3.88
& 3.76
& 68.5\% \\

& \textbf{ACTOR}
& \underline{0.145}
& \underline{4.01}
& \underline{4.03}
& \underline{3.84}
& \underline{72.5\%} \\

& \textbf{GOAL}
& \textbf{0.090}
& \textbf{4.27}
& \textbf{4.23}
& \textbf{4.18}
& \textbf{78.5\%} \\

\midrule

\multirow{4}{*}{\textbf{Human}}
& \textbf{SFT}
& 0.215
& 3.76
& \underline{3.92}
& 3.69
& 66.5\% \\

& \textbf{ACTOR}
& \underline{0.150}
& \underline{3.84}
& 3.88
& \underline{3.78}
& \underline{70.0\%} \\

& \textbf{GOAL}
& \textbf{0.100}
& \textbf{4.12}
& \textbf{4.17}
& \textbf{4.03}
& \textbf{76.0\%} \\

\cmidrule(lr){2-7}
& \textit{Agreement}
& $\kappa=0.71$
& $\alpha=0.77$
& $\alpha=0.81$
& $\alpha=0.75$
& -- \\

\bottomrule[1.5pt]
\end{tabular}
}
\end{table*}

\subsection{Results}

As shown in Table~\ref{tab:human_quality_evaluation}, GOAL consistently
achieves the lowest unwarranted-refusal rate under both GPT-5.1 and
human evaluation. According to GPT-5.1, GOAL reduces the refusal rate
from $19.5\%$ for SFT and $14.5\%$ for ACTOR to $9.0\%$. Human
annotators report the same trend, with refusal rates of $21.5\%$,
$15.0\%$, and $10.0\%$ for SFT, ACTOR, and GOAL, respectively.

More importantly, the reduction in refusal is accompanied by improved
response quality. GOAL obtains the highest helpfulness, correctness,
and completeness scores under both evaluation procedures. Under human
evaluation, GOAL achieves a Useful Compliance rate of $76.0\%$,
compared with $66.5\%$ for SFT and $70.0\%$ for ACTOR. This indicates
that GOAL does not reduce over-refusal merely by replacing refusals
with perfunctory or low-quality responses.

The GPT-5.1 and human evaluations also produce consistent method-level
conclusions: both evaluators rank GOAL first across all reported
metrics, and the corresponding refusal rates and quality scores follow
similar trends. These results provide additional evidence for the
reliability of the GPT-5.1-based evaluation used in our main
experiments while confirming that the improvement in over-refusal is
accompanied by genuinely useful responses.

\section{Multi-Seed Evaluation and Statistical Significance}
\label{app:statistical_significance}

To examine whether the improvements in over-refusal mitigation are
robust to training randomness, we repeat the experiments using five
different random seeds. We evaluate SFT, ACTOR, and GOAL on five
over-refusal benchmarks and two safety benchmarks using Gemma-7B-it,
Llama-2-7B-Chat, and Llama-2-13B-Chat. These methods are selected
because SFT provides the direct fine-tuning baseline, while ACTOR is
the strongest competing method in terms of average over-refusal
performance in our main comparison.

For each method and backbone, the five runs use the same set of random
seeds, training data, data order, and evaluation prompts. All
hyperparameters and decoding configurations are held fixed across
methods. We report the mean and sample standard deviation across the
five runs. Over-refusal performance is measured by compliance rate
(C.R.) on XSTest, OR-Bench-Hard, PHTest, OKTest, and CoCoNot.
Safety preservation is measured by safety score (S.S.) on
JailbreakBench (JBB) and STL.

Because different methods are evaluated on the same prompts under
matched random seeds, we assess the statistical significance of the
over-refusal results using hierarchical paired bootstrap resampling.
For each bootstrap replicate, we first resample the random seeds with
replacement and then resample the evaluation prompts within each
selected seed. We compute the paired compliance-rate difference
between GOAL and ACTOR using $10{,}000$ bootstrap replicates. We
apply the Holm--Bonferroni correction across the five over-refusal
benchmarks within each backbone and regard a difference as
statistically significant when the corrected $p$-value is below
$0.05$. For the two safety benchmarks, we report only the mean and
sample standard deviation because these experiments are intended to
verify that GOAL maintains safety performance rather than to claim a
statistically significant safety improvement.

\begin{table*}[t]
\centering
\small
\caption{
Multi-seed over-refusal and safety results on three backbone models.
Results are reported as mean $\pm$ sample standard deviation across
five random seeds. C.R. denotes compliance rate, and S.S. denotes
safety score. Higher values indicate better performance.
$^{*}$ indicates that GOAL significantly outperforms ACTOR on the
corresponding over-refusal benchmark under hierarchical paired
bootstrap resampling with $10{,}000$ replicates
($p<0.05$ after Holm--Bonferroni correction). Statistical
significance is not assessed for the two safety benchmarks.
}
\label{tab:multiseed_significance}
\resizebox{\textwidth}{!}{
\begin{tabular}{llccccc|cc}
\toprule[1.5pt]
\multirow{2}{*}{\textbf{Backbone}}
& \multirow{2}{*}{\textbf{Method}}
& \multicolumn{5}{c|}{\textbf{Over-refusal C.R. $\uparrow$}}
& \multicolumn{2}{c}{\textbf{Safety S.S. $\uparrow$}} \\
\cmidrule(lr){3-7}
\cmidrule(lr){8-9}
&
& \textbf{XST}
& \textbf{ORH}
& \textbf{PHT}
& \textbf{OKT}
& \textbf{CoCo}
& \textbf{JBB}
& \textbf{STL} \\
\midrule

\multirow{3}{*}{\textbf{Gemma-7B-it}}
& \textbf{SFT}
& \underline{0.8098$\pm$0.0058}
& 0.5820$\pm$0.0045
& 0.8491$\pm$0.0055
& 0.7420$\pm$0.0090
& 0.8153$\pm$0.0083
& \underline{0.9640$\pm$0.0055}
& \underline{0.9771$\pm$0.0046} \\

& \textbf{ACTOR}
& 0.7876$\pm$0.0085
& \textbf{0.7046$\pm$0.0053}
& \underline{0.8933$\pm$0.0076}
& \underline{0.7760$\pm$0.0055}
& \underline{0.8169$\pm$0.0071}
& 0.8800$\pm$0.0071
& 0.9593$\pm$0.0049 \\

\rowcolor{gray!10} \cellcolor{white}
& \textbf{GOAL}
& \textbf{0.9218$\pm$0.0074}$^*$
& \underline{0.7004$\pm$0.0046}
& \textbf{0.9047$\pm$0.0072}$^*$
& \textbf{0.8767$\pm$0.0062}$^*$
& \textbf{0.8828$\pm$0.0051}$^*$
& \textbf{0.9740$\pm$0.0055}
& \textbf{0.9891$\pm$0.0040} \\

\midrule

\multirow{3}{*}{\textbf{Llama-2-7B-Chat}}
& \textbf{SFT}
& 0.8978$\pm$0.0070
& 0.6600$\pm$0.0073
& 0.9105$\pm$0.0063
& 0.9260$\pm$0.0086
& \underline{0.8834$\pm$0.0051}
& \underline{0.9620$\pm$0.0045}
& \underline{0.9791$\pm$0.0046} \\

& \textbf{ACTOR}
& \underline{0.9289$\pm$0.0083}
& \underline{0.7494$\pm$0.0044}
& \underline{0.9565$\pm$0.0063}
& \underline{0.9353$\pm$0.0096}
& 0.8464$\pm$0.0082
& 0.9500$\pm$0.0071
& 0.9771$\pm$0.0065 \\

\rowcolor{gray!10} \cellcolor{white}
& \textbf{GOAL}
& \textbf{0.9502$\pm$0.0073}$^*$
& \textbf{0.7638$\pm$0.0069}$^*$
& \textbf{0.9573$\pm$0.0044}
& \textbf{0.9633$\pm$0.0041}$^*$
& \textbf{0.9029$\pm$0.0090}$^*$
& \textbf{0.9640$\pm$0.0055}
& \textbf{0.9872$\pm$0.0040} \\

\midrule

\multirow{3}{*}{\textbf{Llama-2-13B-Chat}}
& \textbf{SFT}
& \underline{0.8898$\pm$0.0049}
& 0.5849$\pm$0.0075
& 0.9197$\pm$0.0075
& \underline{0.9373$\pm$0.0086}
& \underline{0.8443$\pm$0.0049}
& \underline{0.9700$\pm$0.0071}
& \underline{0.9977$\pm$0.0025} \\

& \textbf{ACTOR}
& 0.8729$\pm$0.0074
& \underline{0.6386$\pm$0.0060}
& \underline{0.9551$\pm$0.0082}
& 0.9047$\pm$0.0061
& 0.8195$\pm$0.0085
& 0.9540$\pm$0.0055
& 0.9957$\pm$0.0037 \\

\rowcolor{gray!10} \cellcolor{white}
& \textbf{GOAL}
& \textbf{0.9271$\pm$0.0081}$^*$
& \textbf{0.7250$\pm$0.0084}$^*$
& \textbf{0.9652$\pm$0.0084}$^*$
& \textbf{0.9753$\pm$0.0069}$^*$
& \textbf{0.8876$\pm$0.0040}$^*$
& \textbf{0.9740$\pm$0.0055}
& \textbf{0.9984$\pm$0.0016} \\

\bottomrule[1.5pt]
\end{tabular}
}
\end{table*}

As shown in Table~\ref{tab:multiseed_significance}, GOAL maintains
its overall advantage across random seeds and backbone models. On
Gemma-7B-it, GOAL achieves the highest mean compliance rate on four
of the five over-refusal benchmarks, while ACTOR performs slightly
better on ORH. On Llama-2-7B-Chat, GOAL achieves the highest mean
performance on all five over-refusal benchmarks, although its
improvement over ACTOR on PHT is marginal. On Llama-2-13B-Chat,
GOAL consistently outperforms both SFT and ACTOR across all five
over-refusal benchmarks.

After Holm--Bonferroni correction, GOAL significantly outperforms
ACTOR on XST, PHT, OKT, and CoCo using Gemma-7B-it; on XST, ORH,
OKT, and CoCo using Llama-2-7B-Chat; and on all five over-refusal
benchmarks using Llama-2-13B-Chat. The difference is not
statistically significant on PHT using Llama-2-7B-Chat, while ACTOR
obtains a slightly higher mean compliance rate on ORH using
Gemma-7B-it.

The safety results further indicate that the improvements in
over-refusal mitigation do not come at the cost of a noticeable
reduction in safety. GOAL achieves the highest mean safety score on
both JBB and STL across all three backbone models. In particular,
GOAL maintains safety scores above $0.96$ on JBB and above $0.98$
on STL for all three backbones. Together, these results indicate that
GOAL provides stable over-refusal improvements while preserving
safety performance across random seeds.

\section{Comparison with Alternative Gradient Intervention Strategies}
\label{app:gradient_strategy_control}

To determine whether the improvement of GOAL can be reproduced by
generic gradient surgery or rank-one gradient removal, we compare
the proposed low-rank projection with two alternative gradient
intervention strategies: PCGrad and harmful mean-gradient
orthogonalization. This comparison examines whether explicitly
modeling the low-rank structure of harmful per-sample gradients is
necessary for mitigating over-refusal.

\noindent\textbf{PCGrad.}
PCGrad~\cite{yu2020gradient} is a general multi-objective gradient
surgery method designed to resolve negative interference between
task gradients. We treat the harmful, benign, and over-refusal
objectives as three training objectives. For two objective gradients
$\mathbf{g}_a$ and $\mathbf{g}_b$, when
$
\left\langle
\mathbf{g}_a,\mathbf{g}_b
\right\rangle < 0,
$

PCGrad projects $\mathbf{g}_a$ onto the orthogonal complement of
$\mathbf{g}_b$:

\begin{equation}
\mathbf{g}_a
\leftarrow
\mathbf{g}_a
-
\frac{
\left\langle
\mathbf{g}_a,\mathbf{g}_b
\right\rangle
}{
\left\|
\mathbf{g}_b
\right\|_2^2+\epsilon
}
\mathbf{g}_b.
\label{eq:pcgrad_control}
\end{equation}

We follow the standard PCGrad procedure and randomize the order of
pairwise gradient projections at each optimization step. Unlike
GOAL, PCGrad operates on aggregated objective gradients and applies
projection only when conventional negative gradient interference is
detected.

\noindent\textbf{Harmful mean-gradient orthogonalization.}
We additionally consider a rank-one projection baseline constructed
from the mean harmful gradient. Let

\begin{equation}
\bar{\mathbf{g}}_{t}^{\mathrm{mean}}
=
\frac{1}{B_{\mathrm{harm}}}
\sum_{j=1}^{B_{\mathrm{harm}}}
\mathbf{g}_{j,t}^{\mathrm{harm}}
\end{equation}

denote the mean harmful gradient at optimization step $t$. Each
over-refusal helpful-target gradient is projected against this single
direction:

\begin{equation}
\tilde{\mathbf{g}}_{i,t}^{\mathrm{Mean}}
=
\mathbf{g}_{i,t}^{\mathrm{over}}
-
\frac{
\left\langle
\mathbf{g}_{i,t}^{\mathrm{over}},
\bar{\mathbf{g}}_{t}^{\mathrm{mean}}
\right\rangle
}{
\left\|
\bar{\mathbf{g}}_{t}^{\mathrm{mean}}
\right\|_2^2+\epsilon
}
\bar{\mathbf{g}}_{t}^{\mathrm{mean}}.
\label{eq:mean_gradient_ortho}
\end{equation}

This baseline removes only the rank-one direction represented by the
mean harmful gradient. By contrast, Ortho only constructs a
low-rank projector
$
\mathbf{P}
=
\mathbf{V}_k\mathbf{V}_k^\top
$
from the top-$k$ right singular vectors of the harmful per-sample
gradient matrix and applies
$
\tilde{\mathbf{g}}_{i,t}^{\mathrm{SVD}}
=
(\mathbf{I}-\mathbf{P})
\mathbf{g}_{i,t}^{\mathrm{over}}.
$

All methods use identical training data, trainable parameters, target
layers, optimization hyperparameters, and decoding configurations.
Neither Harmful Mean Ortho nor Ortho only uses the contrastive
objective, ensuring that their comparison isolates the effect of
subspace construction. PCGrad is implemented using the same three
training objectives and trainable parameters.

\begin{table*}[t]
\centering
\small
\caption{
Comparison with alternative gradient intervention strategies on
Llama-2-7B-Chat. PCGrad applies generic conflict-triggered gradient
surgery across the harmful, benign, and over-refusal objectives.
Harmful Mean Ortho projects over-refusal gradients against the
rank-one mean harmful-gradient direction, while Ortho only projects
them against the low-rank subspace extracted from harmful per-sample
gradients. None of the variants uses the contrastive objective. All
methods use identical training data, trainable parameters, target
layers, and optimization settings.
}
\label{tab:gradient_strategy_control}
\begin{tabular}{lccccc|cc}
\toprule[1.5pt]
\textbf{Method}
& \textbf{XST C.R. $\uparrow$}
& \textbf{ORH C.R. $\uparrow$}
& \textbf{PHT C.R. $\uparrow$}
& \textbf{OKT C.R. $\uparrow$}
& \textbf{CoCo C.R. $\uparrow$}
& \textbf{JBB S.S. $\uparrow$}
& \textbf{STL S.S. $\uparrow$} \\
\midrule
\textbf{PCGrad}
& 0.8933
& 0.6698
& \underline{0.8904}
& \underline{0.9300}
& 0.8786
& \textbf{0.96}
& \underline{0.9767} \\

\textbf{Harmful Mean Ortho}
& \underline{0.9111}
& \underline{0.6990}
& 0.8899
& 0.9267
& \underline{0.8813}
& \underline{0.95}
& \textbf{0.9806} \\

\textbf{Ortho only}
& \textbf{0.9200}
& \textbf{0.7195}
& \textbf{0.9001}
& \textbf{0.9400}
& \textbf{0.8892}
& \underline{0.95}
& \textbf{0.9806} \\
\bottomrule[1.5pt]
\end{tabular}
\end{table*}

As shown in Table~\ref{tab:gradient_strategy_control}, PCGrad is less
effective than Ortho only across all five over-refusal benchmarks.
This result is consistent with the fact that PCGrad is triggered by
negative interference between aggregated objective gradients, whereas
the refusal-subspace coupling studied in this work does not
necessarily appear as a conventional negative gradient conflict.

Harmful Mean Ortho improves the average compliance rate over PCGrad,
but its performance varies across individual benchmarks. In
particular, it performs slightly worse than PCGrad on PHT and OKT.
More importantly, Ortho only consistently outperforms Harmful Mean
Ortho across all five over-refusal benchmarks, increasing the average
compliance rate from $0.8616$ to $0.8738$. This result indicates that
the refusal-related gradient structure is not fully captured by a
single mean direction and is better represented by multiple dominant
singular directions.

The three strategies achieve comparable safety performance. Ortho
only obtains a slightly lower JBB safety score than PCGrad
($0.95$ versus $0.96$), while achieving a higher STL safety score
($0.9806$ versus $0.9767$). Together with the rank sensitivity
analysis, these results support the use of an SVD-derived low-rank
refusal subspace rather than generic conflict-triggered gradient
surgery or rank-one mean-gradient removal.

\section{Effect of Harmful-Gradient Downweighting}
\label{app:harmful_downweighting}

As discussed in Section~\ref{orthogonal}, a natural strategy for reducing the influence of harmful gradients is to downweight the harmful-sample training objective. To examine this strategy, we scale
the harmful objective by $\lambda_{\mathrm{harm}}$ while leaving the
benign and over-refusal objectives unchanged. We evaluate
$\lambda_{\mathrm{harm}}\in\{0.75,0.5\}$, with
$\lambda_{\mathrm{harm}}=1$ corresponding to standard SFT. All other
training and evaluation configurations are held fixed.

\begin{table*}[t]
\centering
\caption{
Effect of harmful-gradient downweighting on Llama-2-7B-Chat.
$\lambda_{\mathrm{harm}}$ denotes the weight assigned to the harmful-sample
training objective, with $\lambda_{\mathrm{harm}}=1$ corresponding to
standard SFT. Ortho only preserves the original harmful objective and
projects over-refusal gradients against the low-rank refusal subspace.
None of the variants uses the contrastive objective. All other training
and evaluation configurations are held fixed.
}
\label{tab:harmful_downweighting}
\resizebox{0.8\textwidth}{!}{%
\begin{tabular}{lccccc|cc}
\toprule[1.5pt]
\textbf{Method}
& \textbf{XST C.R. $\uparrow$}
& \textbf{ORH C.R. $\uparrow$}
& \textbf{PHT C.R. $\uparrow$}
& \textbf{OKT C.R. $\uparrow$}
& \textbf{CoCo C.R. $\uparrow$}
& \textbf{JBB S.S. $\uparrow$}
& \textbf{STL S.S. $\uparrow$} \\
\midrule
\textbf{SFT ($\lambda_{\mathrm{harm}}=1.00$)}
& 0.8978
& 0.6604
& 0.9098
& 0.9233
& 0.8813
& \textbf{0.96}
& \underline{0.9787} \\

\textbf{Harmful DW ($\lambda_{\mathrm{harm}}=0.75$)}
& \underline{0.9022}
& 0.6761
& \underline{0.9103}
& \underline{0.9600}
& 0.8945
& 0.91
& 0.9360 \\

\textbf{Harmful DW ($\lambda_{\mathrm{harm}}=0.50$)}
& 0.8978
& \underline{0.6856}
& \textbf{0.9211}
& \textbf{0.9667}
& \textbf{0.9050}
& 0.86
& 0.8779 \\
\midrule
\textbf{Ortho only}
& \textbf{0.9200}
& \textbf{0.7195}
& 0.9001
& 0.9400
& \underline{0.8892}
& \underline{0.95}
& \textbf{0.9806} \\
\bottomrule[1.5pt]
\end{tabular}%
}
\end{table*}

As shown in Table~\ref{tab:harmful_downweighting}, downweighting
harmful gradients improves compliance over standard SFT on most
over-refusal benchmarks, but introduces a clear safety trade-off.
In particular, reducing $\lambda_{\mathrm{harm}}$ to $0.5$ achieves
strong compliance on PHT, OKT, and CoCo, but decreases the JBB and
STL safety scores from $0.96$ and $0.9787$ to $0.86$ and $0.8779$,
respectively. In contrast, Ortho only achieves comparable overall
over-refusal mitigation while maintaining safety scores of $0.95$
and $0.9806$. These results support the rationality and effectiveness of our method.

\section{Evaluation on a Recent Model Family}
\label{app:modern_model}

\noindent\textbf{Motivation.}
Our main experiments use Gemma-7B-it and Llama-2-Chat models to
enable controlled comparisons with prior over-refusal mitigation
methods. To examine whether GOAL generalizes to a more recent model
family, we additionally evaluate it on
Qwen3-8B~\cite{yang2025qwen3}. Qwen3-8B is an instruction-tuned
model with 8.2 billion parameters and 36 transformer layers. It
differs from the models used in our main experiments in its
architecture, tokenizer, and post-training recipe, thereby providing
an out-of-family evaluation of over-refusal mitigation.

\noindent\textbf{Experimental setup.}
We follow the same training-data construction and evaluation protocol
described in Section~\ref{exp}. All fine-tuned methods use the same
training examples, trainable parameters, and decoding configuration.

We evaluate over-refusal using Compliance Rate (C.R.) on XSTest,
OR-Bench-Hard, PHTest, OKTest, and CoCoNot. To verify that improved
compliance does not compromise safety, we report Safety Score (S.S.)
on JBB and STL, which contain 100 and 516 harmful prompts,
respectively. We additionally evaluate general-capability preservation
using accuracy on MMLU, OpenBookQA, PIQA, and ARC-Challenge. All
reported results are averaged across five independent runs.

\begin{table*}[t]
  \centering
  \small
  \caption{
    Performance comparison on Qwen3-8B across over-refusal, safety,
    and general-capability benchmarks. C.R.\ denotes Compliance Rate,
    S.S.\ denotes Safety Score, and Acc denotes accuracy. All results
    are averaged across five independent runs. The best mean result
    in each column is shown in bold, and the second-best is underlined.
  }
  \label{tab:qwen3_results}
  \begin{tabular}{l ccccc cc cccc}
    \toprule[1.5pt]
    & \multicolumn{5}{c}{\textbf{Over-refusal C.R.\ $\uparrow$}}
    & \multicolumn{2}{c}{\textbf{Safety S.S.\ $\uparrow$}}
    & \multicolumn{4}{c}{\textbf{General Capability Acc $\uparrow$}} \\
    \cmidrule(lr){2-6}
    \cmidrule(lr){7-8}
    \cmidrule(lr){9-12}
    \textbf{Method}
    & \textbf{XST}
    & \textbf{ORH}
    & \textbf{PHT}
    & \textbf{OKT}
    & \textbf{CoCo}
    & \textbf{JBB}
    & \textbf{STL}
    & \textbf{MMLU}
    & \textbf{OBQA}
    & \textbf{PIQA}
    & \textbf{ARC-C} \\
    \midrule

    \textbf{Base}
    & 0.9600
    & 0.8437
    & 0.9303
    & 0.9600
    & 0.9393
    & \underline{0.97}
    & \underline{0.9957}
    & \textbf{0.7683}
    & \textbf{0.572}
    & \textbf{0.8123}
    & \underline{0.5119} \\

    \textbf{SFT}
    & 0.9689
    & 0.8597
    & \underline{0.9459}
    & 0.9667
    & 0.9472
    & \textbf{0.98}
    & \textbf{0.9961}
    & \underline{0.7675}
    & 0.567
    & \underline{0.8116}
    & 0.5102 \\

    \textbf{SCANS}
    & \underline{0.9778}
    & 0.8512
    & 0.9421
    & 0.9633
    & 0.9499
    & \underline{0.97}
    & 0.9845
    & 0.7657
    & \underline{0.569}
    & 0.8108
    & 0.5108 \\

    \textbf{Surgical}
    & 0.9644
    & 0.8575
    & 0.9359
    & 0.9667
    & 0.9472
    & \underline{0.97}
    & 0.9849
    & 0.7589
    & 0.565
    & 0.8102
    & 0.5102 \\

    \textbf{DCR}
    & 0.9644
    & 0.8463
    & 0.9365
    & 0.9667
    & 0.9409
    & 0.95
    & 0.9709
    & 0.7442
    & 0.544
    & 0.8002
    & 0.4915 \\

    \textbf{ACTOR}
    & 0.9733
    & \textbf{0.8629}
    & 0.9446
    & \underline{0.9767}
    & \underline{0.9530}
    & \underline{0.97}
    & 0.9868
    & 0.7521
    & 0.557
    & 0.8103
    & 0.5078 \\

    \rowcolor{gray!10}
    \textbf{GOAL}
    & \textbf{0.9867}
    & \underline{0.8613}
    & \textbf{0.9693}
    & \textbf{0.9900}
    & \textbf{0.9578}
    & \textbf{0.98}
    & 0.9953
    & 0.7597
    & 0.560
    & 0.8107
    & \textbf{0.5121} \\

    \bottomrule[1.5pt]
  \end{tabular}
\end{table*}

\noindent\textbf{Results.}
As shown in Table~\ref{tab:qwen3_results}, GOAL achieves the highest
mean compliance rate on four of the five over-refusal benchmarks and
ranks second on ORH. Compared with SFT, GOAL improves the compliance
rate by 1.78, 0.16, 2.34, 2.33, and 1.06 percentage points on XST,
ORH, PHT, OKT, and CoCo, respectively. The largest improvement is
observed on PHT, where the compliance rate increases from 0.9459 to
0.9693.

The improved compliance does not come at the cost of substantially
weakened safety. On JBB, GOAL achieves a Safety Score of 0.98, tying
with SFT for the best result. On STL, GOAL obtains a Safety Score of
0.9953, remaining within 0.0004 of the base model and 0.0008 of SFT,
while outperforming ACTOR by 0.0085.

GOAL also largely preserves general capabilities. It achieves
accuracies of 0.7597, 0.5600, 0.8107, and 0.5121 on MMLU, OpenBookQA,
PIQA, and ARC-Challenge, respectively. Although its performance is
slightly lower than that of the base model and SFT on the first three
benchmarks, the differences remain small, while GOAL consistently
outperforms ACTOR across all four benchmarks.

Overall, these results demonstrate that GOAL effectively mitigates
over-refusal on Qwen3-8B while largely preserving safety and general
capabilities, suggesting that its effectiveness generalizes beyond
Llama-2 and Gemma to a more recent model family.

\begin{table}[t]
  \centering
  \small
  \setlength{\tabcolsep}{5pt}
  \caption{Quantitative separation between harmful and over-refusal
  representations in Llama-2-7B-Chat. Values in brackets denote $95\%$
  bootstrap confidence intervals.}
  \label{tab:layerwise_separation}
  \begin{tabular}{ccc}
    \toprule[1.5pt]
    \textbf{Layer} & \textbf{Probe AUC $\uparrow$} & \textbf{$R^{(\ell)}$ $\uparrow$} \\
    \midrule
    \textbf{1}  & 0.612 [0.574, 0.650] & 0.052 [0.041, 0.065] \\
    \textbf{5}  & 0.684 [0.649, 0.718] & 0.087 [0.071, 0.105] \\
    \textbf{9}  & 0.761 [0.742, 0.791] & 0.219 [0.193, 0.248] \\
    \textbf{13} & 0.858 [0.837, 0.871] & 0.357 [0.345, 0.372] \\
    \textbf{17} & 0.737 [0.712, 0.761] & 0.311 [0.294, 0.321] \\
    \textbf{21} & 0.716 [0.683, 0.718] & 0.288 [0.277, 0.296] \\
    \textbf{25} & 0.686 [0.675, 0.705] & 0.259 [0.248, 0.269] \\
    \textbf{29} & 0.678 [0.663, 0.691] & 0.207 [0.194, 0.218] \\
    \bottomrule[1.5pt]
  \end{tabular}
\end{table}

\section{Quantitative Analysis of Representation Separation}
\label{app:quantitative_representation}

\noindent\textbf{Layer-wise separability.}
To verify that the PCA patterns in Figure~\ref{fig:repr-vis-pca} are not artifacts of
two-dimensional projection, we quantitatively measure the separation between
harmful and over-refusal representations in the original hidden space. We use
a balanced held-out set containing \textbf{200} harmful and \textbf{200}
over-refusal prompts. At each layer, we extract and $\ell_2$-normalize the
last-token hidden states. We report: (i) the AUC of an $L_2$-regularized
logistic-regression probe under stratified five-fold cross-validation; and
(ii) the between-class/within-class separation ratio
\begin{equation}
R^{(\ell)} =
\frac{
\left\lVert
\boldsymbol{\mu}^{(\ell)}_{\mathrm{harm}}
-
\boldsymbol{\mu}^{(\ell)}_{\mathrm{over}}
\right\rVert_2^2
}{
\frac{1}{2}
\left(
S^{(\ell)}_{\mathrm{harm}}
+
S^{(\ell)}_{\mathrm{over}}
\right)+\epsilon
},
\end{equation}
where $\boldsymbol{\mu}^{(\ell)}_c$ is the class centroid and
$S^{(\ell)}_c$ is the average squared distance to that centroid. Higher AUC
and $R^{(\ell)}$ indicate stronger separation. We report $95\%$ confidence
intervals obtained from 1,000 stratified prompt-level bootstrap replicates.

\noindent\textbf{Results.}
As shown in Table~\ref{tab:layerwise_separation}, harmful and
over-refusal representations are most distinguishable at layer 13,
where the linear probe achieves an AUC of 0.858
($95\%$ CI: [0.837, 0.871]) and the separation ratio reaches 0.357
($95\%$ CI: [0.345, 0.372]). Both metrics decrease consistently
in subsequent layers, with the probe AUC and separation ratio
falling to 0.678 and 0.207, respectively, at layer 29. Nevertheless,
the probe AUC remains above the chance level of 0.5, indicating
that the distinction is weakened rather than entirely eliminated.
Overall, these results quantitatively corroborate the PCA observation:
harmful and over-refusal representations are most separable at an
intermediate layer and become progressively less separable in
deeper layers.

\noindent\textbf{Subspace-specificity controls.}
We further examine whether the benefit of gradient projection is specific to
the harmful-gradient subspace. We compare Harmful Ortho with two rank-matched
controls: Random Ortho projects against a random rank-$k$ orthonormal subspace,
while Benign Ortho projects against the top-$k$ singular directions extracted
from benign-sample gradients. None of these variants uses the contrastive
objective, and all other settings are identical.

\begin{table}[h]
  \centering
  \small
  \caption{Rank-matched subspace controls on Llama-2-7B-Chat. C.R.\ denotes
  Compliance Rate. The best result is shown in bold and the second-best result
  is underlined.}
  \resizebox{\columnwidth}{!}{%
    \begin{tabular}{lccccc}
      \toprule[1.5pt]
      \textbf{Method} & \textbf{XST} & \textbf{ORH} & \textbf{PHT} & \textbf{OKT} & \textbf{CoCo} \\
      \midrule
      \textbf{SFT}
      & \underline{0.8978}
      & \underline{0.6604}
      & \textbf{0.9098}
      & \underline{0.9233}
      & \underline{0.8813} \\
      
      \textbf{Random Ortho}
      & 0.8489
      & 0.5965
      & 0.8679
      & 0.8833
      & 0.8575 \\
      
      \textbf{Benign Ortho}
      & 0.7956
      & 0.5122
      & 0.7861
      & 0.7867
      & 0.7731 \\
      
      \textbf{Harmful Ortho}
      & \textbf{0.9200}
      & \textbf{0.7195}
      & \underline{0.9001}
      & \textbf{0.9400}
      & \textbf{0.8892} \\
      \bottomrule[1.5pt]
    \end{tabular}%
  }
  \label{tab:subspace_controls}
\end{table}

\noindent\textbf{Control results.}
As reported in Table~\ref{tab:subspace_controls}, Harmful Ortho achieves the
best compliance rate on four of the five over-refusal benchmarks and remains
competitive with SFT on PHT. In contrast, both Random Ortho and Benign Ortho
consistently underperform SFT across all five benchmarks, with Benign Ortho
showing the largest degradation. These results demonstrate that generic
gradient projection does not inherently mitigate over-refusal. Instead, the
effectiveness of orthogonal projection depends on identifying the dominant
subspace induced by harmful samples with refusal supervision.

\end{document}